\newtheorem{theorem}{Theorem}
\newtheorem{proposition}{Proposition} %
\newtheorem{definition}{Definition} 
\newtheorem{assumption}{Assumption}
\newtheorem{problem}{Problem}
\newcommand{\calA}{{\cal A}}
\newcommand{\calD}{{\cal D}}
\newcommand{\calF}{{\cal F}}
\newcommand{\calH}{{\cal H}}
\newcommand{\calL}{{\cal L}}
\newcommand{\calO}{{\cal O}}
\newcommand{\calP}{{\cal P}}
\newcommand{\calS}{{\cal S}}
\newcommand{\calT}{{\cal T}}
\newcommand{\calU}{{\cal U}}
\newcommand{\calV}{{\cal V}}
\newcommand{\frakp}{{\mathfrak{p}}}
\newcommand{\frakq}{{\mathfrak{q}}}
\newcommand{\bfa}{\mathbf{a}}
\newcommand{\bfb}{\mathbf{b}}
\newcommand{\bfd}{\mathbf{d}}
\newcommand{\bfe}{\mathbf{e}}
\newcommand{\bff}{\mathbf{f}}
\newcommand{\bfg}{\mathbf{g}}
\newcommand{\bfp}{\mathbf{p}}
\newcommand{\bfq}{\mathbf{q}}
\newcommand{\bfr}{\mathbf{r}}
\newcommand{\bfs}{\mathbf{s}}
\newcommand{\bfu}{\mathbf{u}}
\newcommand{\bfv}{\mathbf{v}}
\newcommand{\bfw}{\mathbf{w}}
\newcommand{\bfx}{\mathbf{x}}
\newcommand{\bfy}{\mathbf{y}}
\newcommand{\bfz}{\mathbf{z}}
\newcommand{\bfzeta}{{\boldsymbol{\zeta}}}
\newcommand{\bftheta}{{\boldsymbol{\theta}}}
\newcommand{\bfpi}{{\boldsymbol{\pi}}}
\newcommand{\bfomega}{{\boldsymbol{\omega}}}
\newcommand{\bfell}{{\boldsymbol{\ell}}}
\newcommand{\bfB}{\mathbf{B}}
\newcommand{\bfE}{\mathbf{E}}
\newcommand{\bfG}{\mathbf{G}}
\newcommand{\bfI}{\mathbf{I}}
\newcommand{\bfJ}{\mathbf{J}}
\newcommand{\bfK}{\mathbf{K}}
\newcommand{\bfM}{\mathbf{M}}
\newcommand{\bfQ}{\mathbf{Q}}
\newcommand{\bfR}{\mathbf{R}}
\newcommand{\bbR}{\mathbb{R}}
\newcommand*{\bfzero}{\mathbf{0}} 
\newcommand{\NEWTD}[1]{{\color{black}#1}}
\newcommand{\NEWZL}[1]{{\color{black}#1}}
\newcommand{\Revised}[1]{{\color{black}#1}}
\newcommand{\scaleMathLine}[2][1]{\resizebox{#1\linewidth}{!}{$\displaystyle{#2}$}}
\newcommand{\prl}[1]{\left(#1\right)}
\newcommand{\brl}[1]{\left[#1\right]}
\newcommand{\crl}[1]{\left\{#1\right\}}
\DeclareMathOperator{\tr}{tr}
\DeclareMathOperator*{\argmax}{argmax}   % Jan Hlavacek
\DeclareMathOperator*{\diag}{diag}
\DeclarePairedDelimiterX{\norm}[1]{\lVert}{\rVert}{#1}
\newcommand*{\intF}{\text{int}(\calF)}
\newcommand*{\lpg}{\bar{\bfg}}
\newcommand*{\LS}{\mathcal{LS}}
\begin{document}

\sptitle{Article Category}

\title{Robust and Safe Autonomous Navigation for Systems with Learned SE(3) Hamiltonian Dynamics} 

\author{Zhichao Li (Student Member, IEEE)}

\author{Thai Duong (Student Member, IEEE)}

\author{Nikolay Atanasov (Member, IEEE)}

\corresp{CORRESPONDING AUTHOR: Zhichao Li (e-mail: \href{mailto:zhichaoli@ucsd.edu}{zhichaoli@ucsd.edu})}
\authornote{This work was supported by NSF RI IIS-2007141 and NSF CCF-2112665 (TILOS). }

\markboth{PREPARATION OF PAPERS FOR IEEE OPEN JOURNAL OF CONTROL SYSTEMS}{Zhichao Li {\itshape ET AL}.}

\begin{abstract}
Stability and safety are critical properties for successful deployment of automatic control systems. As a motivating example, consider autonomous mobile robot navigation in a complex environment. A control design that generalizes to different operational conditions requires a model of the system dynamics, robustness to modeling errors, and satisfaction of safety \NEWZL{constraints}, such as collision avoidance. This paper develops a neural ordinary differential equation network to learn the dynamics of a Hamiltonian system from trajectory data. The learned Hamiltonian model is used to synthesize an energy-shaping passivity-based controller and analyze its \emph{robustness} to uncertainty in the learned model and its \emph{safety} with respect to constraints imposed by the environment. Given a desired reference path for the system, we extend our design using a virtual reference governor to achieve tracking control. The governor state serves as a regulation point that moves along the reference path adaptively, balancing the system energy level, model uncertainty bounds, and distance to safety violation to guarantee robustness and safety. Our Hamiltonian dynamics learning and tracking control techniques are demonstrated on \Revised{simulated hexarotor and quadrotor robots} navigating in cluttered 3D environments.
\end{abstract}

\begin{IEEEkeywords}
	constrained control, physics-constrained learning, safe learning for control
\end{IEEEkeywords}
% Constrained control
% Physics-constrained learning
% Safe learning for control
\maketitle

\section{Introduction}
Designing controllers that guarantee system stability and handle safety constraints is an important problem in safety-critical applications of automatic control systems, including autonomous transportation \cite{CBF_ames2014control, shalev2016safe}, robot locomotion \cite{CBF_ames2014rapidly}, and medical robotics \cite{yip2014model}. Safety depends on the system states, governed by the system dynamics, and the environment constraints. This leads to two requirements for designing provably safe controllers: an accurate model of the system dynamics and the satisfaction of safety constraints.% that are only known at runtime. %the availability of 

The first requirement has motivated data-driven dynamics learning approaches, utilizing machine learning techniques, such as Gaussian process (GP) regression \cite{deisenroth2015gp,kabzan2019learning,hewing2019cautious} and neural networks \cite{raissi2018multistep,chua2018deep}. For physical systems, recent works \cite{lutter2019deepunderactuated, zhong2020symplectic, duong21hamiltonian} design the model architecture to impose a Lagrangian or Hamiltonian formulation of the dynamics \cite{lurie2013analytical,HolmBook}, which a black-box model might struggle to infer. For Lagrangian dynamics, Lutter et al. \cite{lutter2019deepunderactuated} use neural networks to represent the mass and potential energy in 
the Euler-Lagrange equations of motion. Meanwhile, Zhong et al. \cite{zhong2020symplectic} use a differentiable neural ODE solver \cite{chen2018neural} to predict state trajectories of a Hamiltonian dynamics model, encoding Hamilton's equations of motion. A trajectory loss function is back-propagated through the ODE solver to update the Hamiltonian model parameters. Our prior work \cite{duong21hamiltonian} extends the neural ODE Hamiltonian formulation by imposing $SE(3)$ constraints to capture the kinematic evolution of rigid-body systems, such as ground or aerial robots. A Hamiltonian-based model architecture also allows the design of stable regulation or tracking controllers by energy shaping \cite{zhong2020symplectic, duong2021learning, duong21hamiltonian}. Interconnection and damping assignment passivity-based control (IDA-PBC) \cite{van2014port}, one of the main approaches for energy shaping, injects additional energy into the system via the control input to achieve a desired total energy, which is minimized at a desired regulation point. \NEWTD{Instead of learning robot dynamics in continuous time, Saemundsson et al. \cite{saemundsson2020variational} design a variational integrator network to learn discrete-time Lagrangian dynamics. Havens and Chowdhary \cite{havens2021forced} extend it by including control input in the model and use model predictive control for stabilization.}

The second requirement, ensuring satisfaction of safety constraints, has gained significant attention in planning and control. Model predictive control (MPC) methods \cite{borrelli_MPC_book,NMPC_book,MPC_Automatica_Camacho2006_nlin,MPC_mayne2000constrained} include safety constraints in an optimization problem, which is typically solved by discretizing time and linearizing the system dynamics.
% making it hard to achieve formal safety guarantees especially in unknown environments when recursive feasibility is a concern  \cite{RTD_kousik2018bridging}.
Reachability-based techniques \cite{herbert2017fastrack, Funnel_lib, RTD_kousik2018bridging, HJ_time_varying} are directly applicable to nonlinear systems and offer strong safety guarantees but many require solving a Hamilton-Jacobi partial differential equation (PDE) or sum-of-squares optimization program. This is computationally challenging, especially for high-dimensional systems, and may require system decomposition techniques \cite{HJA_decomposition_chen2017}.
%\NEWTD{due to Hamilton-Jacobi reachability analysis \cite{herbert2017fastrack, RTD_kousik2018bridging} or sum-of-squares programming \cite{Funnel_lib}}.
%often require offline pre-computation due to high computational cost, yet scalability issue still exists for high-dimensional systems.
%
Control barrier functions (CBFs) \cite{CBF_ames2014control,CBF_ames2017TAC,CBF_ames2019ECC} offer an
elegant approach to encode safety constraints. For a control-affine nonlinear system, a CBF constraint is affine in the control input, allowing safe control synthesis with quadratic programming (QP) \cite{CBF_ames2019ECC}. However, \NEWZL{constructing} valid CBFs that guarantee the feasibility of the QP problem is challenging \cite{CBF_xu2018constrained, CBF_Dimitra_FxTS}. Given a stabilizing controller, reference governor techniques \cite{RG_bemporad1998reference,RG_kolmanovsky2014ACC_tutorial,RG_garone2016_ERG} use a virtual governor system to introduce safety constraints based on the Lyapunov function of the closed-loop system. Recent work \cite{RG_Omur_ICRA17, Li_SafeControl_ICRA20} achieves safe trajectory tracking in unknown environments but is limited to linear or feedback-linearizable systems.
% follows reference command safely. 

Safe control synthesis with a learned model of the system dynamics needs to account for the model estimation error between the learned and the ground-truth dynamics \cite{brunke2021safe}. Model uncertainty may be viewed as a disturbance, applied to the learned system, and handled using robust or adaptive control techniques \cite{ioannou1996robust, gahlawat20al1adaptive, lee2013nonlinear, nageshrao2015port, ryalat2018robust, HJB_akametalu2018minimum}. Safety constraint satisfaction in the presence of model uncertainty can be achieved using robust MPC \cite{MPC_tube_gao2014tube, ostafew2016robust, hewing2019cautious, fan2020deep}, $\calL$-1 adaptive control \cite{gahlawat20al1adaptive}, or model reference adaptive control \cite{joshi2019deep} that tracks the trajectory of a reference model and compensates for model uncertainty. For example, Hewing et al. \cite{hewing2019cautious} propose an MPC technique that trains a Gaussian Process (GP) model of the system dynamics and use the GP uncertainty to introduce probabilistic safety constraints in the MPC optimization. 
%, from Gaussian process (GP) model \cite{ostafew2016robust, hewing2019cautious} or neural networks \cite{fan2020deep},
Robust controllers for Hamiltonian systems may also be developed using the IDA-PBC approach \cite{ryalat2018robust, ryalat2015integral, PBC_acosta2014robust}. Most techniques have considered systems with states defined in Euclidean space and cannot handle manifold constraints, e.g., due to the orientation kinematics of a mobile robot.
%\cite{ryalat2018robust, ryalat2015integral, PBC_acosta2014robust} are developed using IDA-PBC approach on the state space $\mathbb{R}^n$ instead of the $SE(3)$ manifold. 
For quadrotors, Lee et al. \cite{lee2013nonlinear} estimate disturbances from the tracking error and design a robust geometric controller but do not consider safety constraints.

In this paper, we consider both dynamics model learning and safe control synthesis for rigid-body systems, whose states include position, orientation, and generalized velocity. We assume that system has an unknown dynamics model but, as a physical system, it satisfies Hamilton's equations of motion over the $SE(3)$ manifold of positions and orientations. Given state-control trajectories, from past experiments or collected by a human operator, we seek to learn the system dynamics and design a tracking control law that handles safety constraints, e.g., obtained from distance measurements to obstacles in the environment. In our preliminary work \cite{refgovham_extended}, we learn a translation-equivariant Hamiltonian model of the system dynamics using a physics-guided neural ODE network \cite{duong21hamiltonian}. We use the Hamiltonian model to synthesize an energy-shaping geometric tracking controller. The total energy of the system serves as a Lyapunov function and enables us to enforce safety constraints during trajectory tracking using a reference governor to regulate the difference between the system energy and the distance to safety violation. However, our preliminary work \cite{refgovham_extended} uses the learned Hamiltonian model as the ground-truth dynamics and ignores the model estimation error in the control design. In this paper, we capture the estimation error as a bounded disturbance applied to the learned system and develop a robust safe tracking controller that takes the disturbance into account in the design of the reference governor. Our Hamiltonian dynamics learning and tracking control techniques are compared to a GP MPC technique \cite{hewing2019cautious} and are demonstrated in a 3D environment using a simulated hexarotor robot to achieve collision-free autonomous navigation.

In summary, the \textbf{contribution} of this work is a tracking control design for Hamiltonian systems with learned dynamics, which achieves \emph{robustness} to model estimation errors and \emph{safety} with respect to state constraints.

\section{Problem Statement}
\label{sec:problem}

Consider a rigid body with position $\bfp \in \bbR^3$, orientation $\bfR\in SO(3)$, body-frame linear velocity $\bfv \in \bbR^3$, and body-frame angular velocity $\bfomega \in \bbR^3$. \NEWTD{Let $\frakq = [\bfp^\top\;\; \bfr_1^\top\;\; \bfr_2^\top\;\; \bfr_3^\top]^\top$ $\in SE(3)$ denote the body's generalized coordinates}, where $\bfr_1$, $\bfr_2$, $\bfr_3 \in \bbR^3$ are the rows of the rotation matrix $\bfR$. Let $\bfzeta = [\bfv^\top\;\;\bfomega^\top]^\top \in \bbR^6$ denote the body's generalized velocity. The generalized momentum $\frakp$ of the body is defined as:
\begin{equation}
\label{eq:momenta_Mtwist}
\frakp = \bfM(\frakq)\bfzeta \in \bbR^6,
\end{equation} 
where $\bfM(\frakq) \succ 0$ is the positive-definite generalized mass matrix. Let $\bfx = (\frakq, \frakp) \in T^*SE(3)$ denote the state of the rigid body system on the cotangent bundle $T^*SE(3)$ of the $SE(3)$ manifold. The Hamiltonian, $\mathcal{H}(\mathbf\frakq, \mathbf\frakp)$, captures the total energy of the system as the sum of the kinetic energy $\calT(\mathbf\frakq, \mathbf\frakp) =  \frac{1}{2}\mathbf\frakp^\top \bfM^{-1}(\frakq) \mathbf\frakp$ and the potential energy $\calU(\frakq)$:
\begin{equation}
\label{eq:hamiltonian_def}
\mathcal{H}(\mathbf\frakq, \mathbf\frakp) = \calT(\mathbf\frakq, \mathbf\frakp) +\calU(\mathbf\frakq). %=  \frac{1}{2}\mathbf\frakp^\top \bfM(\mathbf\frakq)^{-1} \mathbf\frakp + \calU(\mathbf\frakq).
\end{equation}
%
% As a mechanical system, the time 
The evolution of the state $\bfx$ is governed by Hamilton's equations of motion \cite{lee2017global}:
\begin{equation} \label{eq:Ham_sys_pf_nlin}
\begin{aligned}
\dot{\bfx} &= \bff(\bfx) + \bfG(\bfx) \bfu, \qquad\qquad \bfx(t_0) = \bfx_0,\\
           &= \begin{bmatrix}
\bf0 & \mathbf\frakq^{\times} \\
-\mathbf\frakq^{\times\top} & \mathbf\frakp^{\times} 
\end{bmatrix}
\begin{bmatrix}
\nabla_\frakq \mathcal{H}(\mathbf\frakq, \mathbf\frakp) \\
\nabla_\frakp \mathcal{H}(\mathbf\frakq, \mathbf\frakp) \end{bmatrix} + \begin{bmatrix} \bf0 \\ \bfB(\mathbf\frakq) \end{bmatrix} \bfu
\end{aligned}
\end{equation}
where $\bfu \in \bbR^6$ is the control input, \NEWTD{e.g. force and torque or motor speeds for a UAV system}, $\bfB(\mathbf\frakq)\in \bbR^{6\times 6}$ is an input gain matrix, and the operators $\mathbf\frakq^{\times}$, $\mathbf\frakp^{\times}$ are defined as:
\begin{equation*}
\mathbf\frakq^{\times} = \begin{bmatrix}
\bfR^\top\!\!\!\! & \bf0 & \bf0 & \bf0 \\
\bf0 & \hat{\bfr}_1^\top & \hat{\bfr}_2^\top & \hat{\bfr}_3^\top
\end{bmatrix}^\top\!\!\!\!, \quad \mathbf\frakp^{\times} = \begin{bmatrix} \mathbf\frakp_{\bfv}\\\mathbf\frakp_{\bfomega}\end{bmatrix}^{\times} \!\!\!\!= \begin{bmatrix}
\bf0 & \hat{\mathbf\frakp}_{\bfv}\\
\hat{\mathbf\frakp}_{\bfv} & \hat{\mathbf\frakp}_{\bfomega},
\end{bmatrix},
\end{equation*}
where the hat map $\hat{\bfw}$ for $\bfw \in \mathbb{R}^3$ is:
\begin{equation*}
\hat{\bfw} = \begin{bmatrix}
	0 & -w_3 & w_2 \\
	w_3 & 0 & -w_1 \\
	-w_2 & w_1 & 0 
\end{bmatrix}.
\end{equation*}
\NEWTD{The Hamiltonian dynamics model in \eqref{eq:Ham_sys_pf_nlin} can be extended to include energy dissipation in a port-Hamiltonian formulation \cite{van2014port} such as friction or drag forces \cite{zhong2020dissipative}. However, for clarity of the control design, we leave this for future work.}

We consider the case that the parameters of the Hamiltonian dynamics model in \eqref{eq:Ham_sys_pf_nlin}, including the mass $\bfM(\frakq)$, the potential energy $\calU(\frakq)$, and the input matrix $\bfB(\frakq)$, are unknown. Instead, we are given a trajectory dataset $\calD = \{t^{(i)}_{0:N}, \frakq^{(i)}_{0:N},\bfzeta^{(i)}_{0:N}, \bfu_{0:N-1}^{(i)}\}_{i = 1}^{D}$ consisting of $D$ sequences of generalized coordinates and velocities $(\frakq^{(i)}_{0:N},\bfzeta^{(i)}_{0:N})$ at times $t_0^{(i)} < t_1^{(i)} < \ldots < t_N^{(i)}$, collected by applying a constant control input $\bfu_n^{(i)}$ to the system with initial condition $(\frakq^{(i)}_{n},\bfzeta^{(i)}_{n})$ for $t \in [t_n,t_{n+1})$ and $n = 0,\ldots,N-1$. Our objective is to learn the system dynamics from the data set $\calD$ and design a control policy $\bfu = \bfpi(\bfx)$ such that the system follows a desired reference path without violating safety constraints. Let $\calO \subset \bbR^3$ and $\calF \coloneqq \bbR^{3} \setminus \calO$ denote the unsafe (obstacle) set and the safe (obstacle-free) set, respectively. Denote the interior of $\calF$ as $\intF$. We assume that $\calO$ is not known a priori but the distance $\bar{d}(\bfp, \calO)$ from the system's position $\bfp$ to $\calO$ can be sensed with a limited sensing range $d_{max} > 0$:
\begin{equation} \label{eq:dyO}
\bar{d}(\bfp , \calO) \coloneqq \min \crl{d(\bfp, \calO), d_{max}},
\end{equation}
where $d(\bfp, \calO) \coloneqq \inf_{\bfa \in \calO} \norm{\bfp - \bfa}$ denotes the Euclidean distance from $\bfp$ to the set $\calO$. The safe tracking control problem considered in this paper is summarized below.

\begin{problem}\label{prob:main_prob}
Let $\calD = \{t^{(i)}_{0:N}, \frakq^{(i)}_{0:N},\bfzeta^{(i)}_{0:N}, \bfu_{0:N-1}^{(i)}\}_{i = 1}^{D}$ be a training dataset of state-control trajectories obtained from a rigid-body system with unknown Hamiltonian dynamics in \eqref{eq:Ham_sys_pf_nlin}. Let $\bfr: \brl{0,1} \mapsto \text{Int}\prl{\calF}$ be a continuous function specifying a desired position reference path for the system. Assume that the reference path starts at the initial position at time $t_0$, i.e., $\bfr(0) = \bfp(t_0) \in \text{Int}\prl{\calF}$. Using local distance observations $\bar{d}(\bfp(t),\calO)$ of the unsafe set $\calO$, design a control policy $\bfpi: T^*SE(3) \mapsto \bbR^6$ so that the position $\bfp(t)$ of the closed-loop system with control law $\bfu = \bfpi(\bfx)$~converges asymptotically to $\bfr(1)$, while remaining safe, i.e., $\bfp(t) \in \calF, \forall t \geq t_0$. 
\end{problem}

\section{Learning $SE(3)$ Hamiltonian Dynamics from Data} %  on the $SE(3)$ Manifold
\label{subsec:ham_dyn_learning}
In this section, \NEWTD{we design a dynamics model that can be learned from a previously collected trajectory dataset, e.g., obtained from manual control, and is sufficiently general to represent different mobile robots, such as cars and drones.} We describe how to learn Hamiltonian dynamics from the dataset $\calD = \{t^{(i)}_{0:N}, \frakq^{(i)}_{0:N},\bfzeta^{(i)}_{0:N}, \bfu_{0:N-1}^{(i)}\}_{i = 1}^{D}$, described in Sec. \ref{sec:problem}, using translation-equivariant Hamiltonian-based neural ODE networks \cite{duong21hamiltonian}. The mass $\bfM(\frakq)$, the potential energy $\calU(\frakq)$ and the input gain $\bfB(\frakq)$ are approximated by neural networks. We show that the model estimation errors caused by the trained neural networks can be considered as a disturbance applied on the learned system.

\subsection{Translation-equivariant $SE(3)$ Hamiltonian dynamics learning}
%neural ODE network}
\label{subsec:ham_neural_ode}
Since the system dynamics does not change if we shift the position $\bfp$ to any points in the world frame, we offset the trajectories in the dataset $\calD$ so that they start from the position $\bf0$ and learn the system dynamics well around the origin. This is sufficient for stabilization task, e.g. using the controller design in Sec. \ref{sec:ham_control}, because driving the system from state $\bfx$ with position $\bfp$ to a desired state $\bfx^*$ with position $\bfp^*$ is the same as driving the system from the state $\bfx$ with position $\bf0$ to a desired state $\bfx^*$ with offset position $\bfp^* - \bfp$.

Since the momentum $\frakp$ is not directly available from the dataset $\calD$, we use the time derivative of the generalized velocity, derived from \eqref{eq:momenta_Mtwist}:
\begin{equation}
\label{eq:hamiltonian_zetadot}
\dot{\bfzeta} =  \prl{ \frac{d}{dt} \bfM^{-1}(\mathbf\frakq) }\mathbf\frakp + \bfM^{-1}(\mathbf\frakq)\dot{\mathbf\frakp}.
\end{equation}
Eq. \eqref{eq:Ham_sys_pf_nlin} and \eqref{eq:hamiltonian_zetadot} describe the Hamiltonian dynamics of the generalized coordinates and velocities with unknown inverse generalized mass matrix $\bfM^{-1}(\frakq)$, input matrix $\bfB(\frakq)$, and potential energy $\calU(\frakq)$, for which we aim to approximate by three neural networks $\bfM_\bftheta^{-1}(\frakq), \bfB_\bftheta(\frakq)$ and $\calU_\bftheta(\frakq)$, respectively, with parameters $\bftheta$.

To optimize for the parameters $\bftheta$, we use the Hamiltonian-based neural ODE framework that encodes the Hamiltonian dynamics \eqref{eq:Ham_sys_pf_nlin} and \eqref{eq:hamiltonian_zetadot} with $\bfM_\bftheta(\frakq), \bfB_\bftheta(\frakq)$ and $\calU_\bftheta(\frakq)$ in the network structure (Fig. \ref{fig:net_arch}). The forward pass rolls out the dynamics $\bar{\bff}_\bftheta$ described by \eqref{eq:Ham_sys_pf_nlin} and \eqref{eq:hamiltonian_zetadot} with the neural networks $\bfM_\bftheta(\frakq), \bfB_\bftheta(\frakq)$ and $\calU_\bftheta(\frakq)$ using a neural ODE solver (\cite{chen2018neural}) with initial state $({\frakq}^{(i)}_{n}, {\bfzeta}^{(i)}_{n})$. We obtain a predicted state $(\bar{\frakq}^{(i)}_{n+1}, \bar{\bfzeta}^{(i)}_{n+1})$ at times $t_{n+1}^{(i)}$ for each $n = 0, \ldots, N-1$ and $i = 1,\ldots, D$ as:
\begin{equation*}
    (\bar{\frakq}^{(i)}_{n+1}, \bar{\bfzeta}^{(i)}_{n+1}) = \text{ODESolver}\prl{({\frakq}^{(i)}_{n}, {\bfzeta}^{(i)}_{n}), \bar{\bff}, t_{n+1}^{(i)} - t_{n}^{(i)}; \bftheta}.
\end{equation*}
The loss function is defined as $\calL = \sum_{i = 1}^D \sum_{n = 1}^N c(\frakq^{(i)}_{n},\bfzeta^{(i)}_{n}, \bar{\frakq}^{(i)}_{n}, \bar{\bfzeta}^{(i)}_{n})$ where the distance metric $c$ is defined as the sum of position, orientation, and velocity errors on the tangent bundle $TSE(3)$: % of the pose manifold $SE(3)$:
\begin{equation}
c \prl{\frakq,\bfzeta, \bar{\frakq}, \bar{\bfzeta} }= c_{\bfp}(\bfp,\bar{\bfp}) + c_{\bfR}(\bfR,\bar{\bfR}) + c_{\bfzeta}(\bfzeta,\bar{\bfzeta}),
\end{equation}
with the position error $c_{\bfp}(\bfp,\bar{\bfp}) = \| \bfp - \bar{\bfp}\|^2_2$, the velocity error $c_{\bfzeta}(\bfzeta,\bar{\bfzeta}) = \| \bfzeta - \bar{\bfzeta}\|^2_2$, and the rotation error $c_{\bfR}(\bfR,\bar{\bfR}) = \|\left(\log (\bar{\bfR} \bfR^\top)\right)^{\vee} \|_2^2$. The $\log$-map $\log (\cdot): SE(3) \mapsto \mathfrak{so}(3)$ returns a skew-symmetric matrix in $\mathfrak{so}(3)$ from a rotation matrix in $SE(3)$, and the $\vee$-map $(\cdot)^\vee : \mathfrak{so}(3) \mapsto \bbR^3$ is the inverse of the hat map $\hat{(\cdot)}$ in Sec. \ref{sec:problem}. 

%is the inverse of the exponential map, returning

The network parameters $\bftheta$ are optimized using gradient descent by back-propagating the gradient $\nabla_\bftheta\calL$ of the loss through the neural ODE solver efficiently using adjoint method \cite{chen2018neural}. Specifically, let $\bfa =\nabla_{\frakq, \bfzeta}\calL$ be the adjoint state and $\bfs = \prl{(\frakq, \bfzeta), \bfa, \nabla_\bftheta\calL}$ be the augmented state. The augmented state dynamics are \cite{chen2018neural}:
\begin{equation}
\dot{\bfs} = \bar{\bff}_{\bfs} = \prl{\bar{\bff}_{\bftheta}, -\bfa^\top\nabla_{\frakq, \bfzeta}\bar{\bff}_{\bftheta}, -\bfa^\top\nabla_\bftheta \bar{\bff}_{\bftheta}}.
\end{equation}
We obtain the gradient $\nabla_\bftheta\calL$ by a single call to a reverse-time ODE solver starting from $\bfs_{n+1} = \bfs(t_{n+1})$:
\begin{equation}
\bfs_0 = \left(\bar{\bfx}_0, \bfa_0, \nabla_\bftheta\calL\right) = \text{ODESolver}(\bfs_{n+1}, \bar{\bff}_\bfs, t_{n+1} - t_n),
\end{equation}
for $n = 0, \ldots, N-1$, and update the parameters $\bftheta$ using gradient descent. Please refer to \cite{chen2018neural} for more details.

% The network parameters $\bftheta$ are optimized using gradient descent by back-propagating the loss through the neural ODE solver \cite{chen2018neural}. This is done efficiently using adjoint method, where the gradient $\partial \calL/\partial \bftheta$ is calculated by a single call to a reverse-time ODE starting from $t = t_{n+1}$ at $(\frakq_{n+1}^{(i)}, \bfzeta_{n+1}^{(i)})$ to $t = t_{n}$ at $(\frakq_{n}^{(i)}, \bfzeta_{n}^{(i)})$.

% \begin{figure}[t]
% \centering
%         \centering
%         \includegraphics[height=30mm]{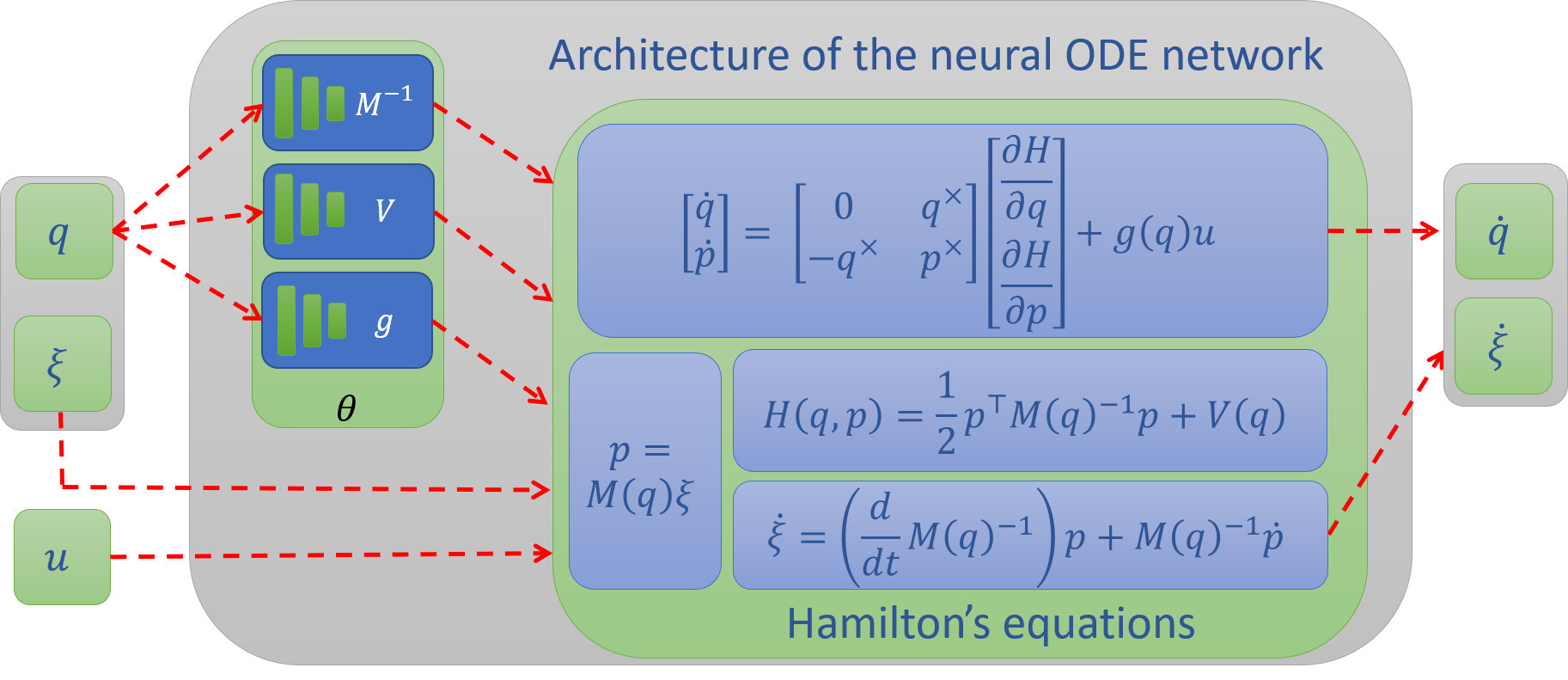}
% \caption{Architecture of $SE(3)$ Hamiltonian neural ODE network.}
% \label{fig:net_arch}
% \end{figure}

\begin{figure}[t]
\centering
        \centering
        \includegraphics[width=\linewidth]{}
\caption{Architecture of $SE(3)$ Hamiltonian neural ODE network.}
\label{fig:net_arch}
\end{figure}

\subsection{Model estimation error as a disturbance}
\label{sec:model_error}
Via the training process described in Sec. \ref{subsec:ham_neural_ode}, we approximate the ground truth mass $\tilde{\bfM}(\frakq)$, potential energy $\tilde{\calU}(\frakq)$ and input gain matrix $\tilde{\bfB}(\frakq)$ with the learned mass $\bfM_\bftheta(\frakq) = \tilde{\bfM}(\frakq) + \Delta\bfM_\bftheta(\frakq)$, potential energy $\calU(\frakq) = \tilde{\calU}(\frakq) + \Delta\calU_\bftheta(\frakq)$, and input gain $\bfB(\frakq) = \tilde{\bfB}(\frakq) + \Delta\bfB_\bftheta(\frakq)$ where $\Delta\bfM_\bftheta(\frakq), \Delta\calU_\bftheta(\frakq)$, and $\Delta\bfB_\bftheta(\frakq)$ are the estimation errors. We drop the subscript $\bftheta$ to simplify the notations. The generalized coordinates $\frakq$ and the ground-truth momentum $\tilde{\frakp} := \tilde{\bfM}(\frakq)\bfzeta$, satisfy the Hamiltonian dynamics \eqref{eq:Ham_sys_pf_nlin}:
\begin{equation}
    \label{eq:gt_ham_dyn}
    \begin{aligned}
        \dot{\frakq} &= \frakq^\times \nabla_{\tilde{\frakp}} \mathcal{\tilde{H}}(\mathbf\frakq, \tilde{\mathbf\frakp}) = \frakq^\times\bfzeta\\
        \dot{\tilde{\frakp}} &= -\frakq^{\times\top} \nabla_\frakq \mathcal{\tilde{H}}(\mathbf\frakq, \tilde{\mathbf\frakp}) + \tilde{\frakp}^\times\nabla_\frakp \mathcal{\tilde{H}}(\mathbf\frakq, \tilde{\mathbf\frakp}) + \tilde{\bfB}(\frakq)\bfu\\
        &= -\frakq^{\times\top} \nabla_\frakq \mathcal{\tilde{H}}(\mathbf\frakq, \tilde{\mathbf\frakp}) + \tilde{\frakp}^\times \bfzeta + \tilde{\bfB}(\frakq)\bfu, 
    \end{aligned}
\end{equation}
with the ground-truth Hamiltonian
\begin{equation}
    \tilde{\mathcal{H}}(\mathbf\frakq, \tilde{\mathbf\frakp}) =  \frac{1}{2}\tilde{\mathbf\frakp}^\top \tilde{\bfM}^{-1}(\mathbf\frakq) \tilde{\mathbf\frakp} + \tilde{\calU}(\mathbf\frakq) = \frac{1}{2}\bfzeta^\top \tilde{\bfM}(\mathbf\frakq)\bfzeta + \tilde{\calU}(\mathbf\frakq).
\end{equation}
% Let $\tilde{\bfM}, \tilde{\calU}$, and $\tilde{\bfB}$ be the ground-truth generalized mass, potential energy, and input gain.
Meanwhile, for the generalized coordinates $\frakq$ and the momentum $\frakp := \bfM(\frakq)\bfzeta$, the Hamiltonian dynamics is learned from data and of the form:
\begin{equation}
    \label{eq:learned_ham_dyn}
    \begin{aligned}
        \dot{\frakq} &= \frakq^\times \nabla_{{\frakp}} \mathcal{{H}}(\mathbf\frakq, {\mathbf\frakp}) = \frakq^\times\bfzeta\\
        \dot{{\frakp}} &= -\frakq^{\times\top} \nabla_\frakq \mathcal{{H}}(\mathbf\frakq, {\mathbf\frakp}) + {\frakp}^\times\nabla_\frakp \mathcal{{H}}(\mathbf\frakq, {\mathbf\frakp}) + {\bfB}(\frakq)\bfu\\
        &= -\frakq^{\times\top} \nabla_\frakq \mathcal{{H}}(\mathbf\frakq, {\mathbf\frakp}) + {\frakp}^\times \bfzeta + {\bfB}(\frakq)\bfu, 
    \end{aligned}
\end{equation}
with the learned Hamiltonian 
\begin{equation*}
    \mathcal{H}(\mathbf\frakq, \mathbf\frakp) = \frac{1}{2}\bfzeta^\top \bfM(\mathbf\frakq)\bfzeta + \calU(\mathbf\frakq) = \tilde{\calH}(\mathbf\frakq, \tilde{\mathbf\frakp}) + \Delta\calH(\mathbf\frakq, \mathbf\frakp),
\end{equation*}
and its estimation error $\Delta\calH(\mathbf\frakq, \mathbf\frakp) = \frac{1}{2}\bfzeta^\top \Delta\bfM(\mathbf\frakq)\bfzeta + \Delta\calU(\mathbf\frakq)$.

However, the learned dynamics \eqref{eq:learned_ham_dyn} is only an approximation of the actual dynamics for $(\frakq, \frakp)$. While the dynamics of $\frakq$ does not change, the actual dynamics of the learned momentum, $\frakp = \bfM(\frakq) \bfzeta =  \tilde{\frakp} + \Delta\frakp$, where $\Delta\frakp = \Delta\bfM(\frakq) \bfzeta$, is derived from \eqref{eq:gt_ham_dyn} as follows:
\begin{equation}
    \label{gt_learned_ham_dyn}
    \begin{aligned}
    \dot{\frakp} &= \dot{\tilde{\frakp}} + \dot{\Delta\frakp}  \\
    &= -\frakq^{\times\top} \nabla_\frakq \mathcal{H}(\mathbf\frakq, \mathbf\frakp) +  {\frakp}^\times \bfzeta + {\bfB}(\frakq)\bfu \\
    & \quad + \frakq^{\times\top} \nabla_\frakq \prl{\Delta\mathcal{H}(\mathbf\frakq, {\mathbf\frakp})} - \Delta{\frakp}^\times \bfzeta - \Delta{\bfB}(\frakq)\bfu + \dot{\Delta\frakp}. \\
    & = -\frakq^{\times\top}\nabla_\frakq \mathcal{H}(\mathbf\frakq, \mathbf\frakp) +  {\frakp}^\times \bfzeta + {\bfB}(\frakq)\bfu + \bfd_1,
    \end{aligned}
\end{equation}
where the force 
\begin{equation}
    \label{eq:model_error_disturbance}
    \bfd_1~:=~\frakq^{\times\top} \nabla_\frakq \prl{\Delta\mathcal{H}(\mathbf\frakq, {\mathbf\frakp})} - \Delta{\frakp}^\times \bfzeta - \Delta{\bfB}(\frakq)\bfu + \dot{\Delta\frakp}
\end{equation}

represents the effect of the model errors $\Delta\bfM(\frakq), \Delta\calU(\frakq)$, and $\Delta\bfB(\frakq)$ and is considered as a disturbance applied to the learned system \eqref{eq:learned_ham_dyn}. \Revised{To improve the error $\bfd_1$ with respect to the position $\bfp$, we enforce translation-equivariance in the neural ODE model, as described in Sec. III.A, and learn the model well around the origin. This allows us to offset any position $\bfp$ to the well-learned region around the origin. To reduce the model error with respect to orientation, we collect a training dataset that covers different regions of roll, pitch, and yaw angles, e.g. by manually driving a UAV to different desired positions and yaw angles. A promising approach to estimate the disturbance magnitude is to employ a Bayesian formulation of the neural ODE network used to learn the dynamics model. A Bayesian model will provide a posterior distribution, rather than point estimates, for the model parameters (i.e. $\bfM^{-1}(\frakq)$, $\bfB(\frakq)$, and $\calU(\frakq)$), whose variance can be used to obtain parameter error bounds and, in turn, a disturbance bound. Bayesian neural network models that can be used for dynamics learning include Bayesian neural ODE networks \cite{dandekar2020bayesian, xu22infinitely}, neural stochastic differential equation (SDE) networks \cite{liu2019neural}, or Gaussian-process ODEs \cite{heinonen18gpode}. This motivates analyzing the robustness of our control design in Sec.~\ref{sec:ham_control} to the disturbance $\bfd_1$ caused by the model errors.}

%\section{Passivity-based controller for Hamiltonian dynamics for learned model}
\section{Stabilization of Hamiltonian Dynamics with Matched Disturbances}
\label{sec:ham_control}

As discussed in Sec.~\ref{subsec:ham_dyn_learning}.\ref{sec:model_error}, due to estimation errors in the dynamics learning process, the learned system model \Revised{satisfies} Hamilton's equations of motion in \eqref{eq:Ham_sys_pf_nlin} subject to a matched disturbance signal $\bfd_1: \bbR \mapsto \bbR^6$:
\begin{equation}
\label{eq:sys_pch}
\scaleMathLine[0.89]{\begin{bmatrix}
\dot{\frakq} \\
\dot{\frakp} \\
\end{bmatrix}
= \begin{bmatrix}
\bfzero & \frakq^{\times} \\
-\frakq^{\times\top}\!\! & \frakp^{\times} 
\end{bmatrix}
\begin{bmatrix}
\nabla_\frakq \mathcal{H}(\frakq, \frakp) \\
\nabla_\frakp \mathcal{H}(\frakq, \frakp) 
\end{bmatrix} 
+ \begin{bmatrix} \bfzero \\ \bfB(\frakq) \end{bmatrix}\bfu 
+ \begin{bmatrix} \bfzero \\ \bfd_1 \end{bmatrix}.}
\end{equation}
We consider a passivity-based stabilizing controller for \eqref{eq:sys_pch}, and analyze its robustness with respect to the disturbance signal $\bfd_1$ and its safety with respect to the obstacle set $\calO$.

\subsection{Passivity-based control}
\label{subsec:ham_control}

Consider a desired regulation point $\bfx^* = (\frakq^*,\frakp^*)$ for the system in \eqref{eq:sys_pch} with generalized coordinates $\frakq^* = (\bfp^*, \bfR^*)$ and momentum $\frakp^* = 0$. Since the Hamiltonian $\calH(\bfx)$ may not have a minimum at $\bfx^*$, the control signal $\bfu$ in \eqref{eq:sys_pch} should be designed to inject additional energy $\calH_a (\bfx, \bfx^*)$ into system and achieve a desired Hamiltonian $\calH_d (\bfx, \bfx^*) = \calH(\bfx) + \calH_a (\bfx, \bfx^*)$, which is minimized at $\bfx^*$. This is the approach followed by interconnection and damping assignment passivity-based control (IDA-PBC) \cite{ortega2002interconnection-IDAPBC}. Let $\bfx_e =(\frakq_e, \frakp_e)$ denote the error in generalized coordinates and momentum: % with elements:
\begin{equation}\label{eq:error_state}
\begin{aligned}
\bfR_e& = \bfR^{*\top} \bfR =  \begin{bmatrix} \bfr_{e1} & \bfr_{e2} & \bfr_{e3} \end{bmatrix}^\top &\bfp_e &= \bfp - \bfp^*\\
\frakq_e &= \begin{bmatrix} \bfp_e^\top & \bfr_{e1}^\top & \bfr_{e2}^\top & \bfr_{e3}^\top \end{bmatrix}^\top &\frakp_e& = \frakp-\frakp^*.
\end{aligned}
\end{equation}
A possible choice of $\calH_d (\bfx, \bfx^*)$, minimized at $\bfx = \bfx^*$, is:
\begin{align}
\calH_d (\bfx, \bfx^*) &=  \calT(\frakq_e,\frakp_e) + \calU_d(\frakq_e) \label{eq:desired_hamiltonian}\\
&=\frac{1}{2} \frakp_e^\top\bfM^{-1}(\frakq_e) \frakp_e
+ \frac{k_{\bfp}}{2} \norm{\bfp_e}^2
+ \frac{k_{\bfR} }{2} \tr(\bfI - \bfR_e),\notag
\end{align}
where $k_\bfp$ and $k_{\bfR}$ are positive scalars. 

\Revised{The IDA-PBC method \cite{duong21hamiltonian,ortega2002interconnection-IDAPBC} designs a controller $\bfu = \bfpi(\bfx,\bfx^*)$ such that the closed-loop dynamics of the system in \eqref{eq:sys_pch} are governed by the desired Hamiltonian in \eqref{eq:desired_hamiltonian} as:% follows:
\begin{equation} \label{eq:sys_pch_dsr}
\begin{bmatrix}
\dot{\mathbf\frakq}_e \\
\dot{\mathbf\frakp}_e \\
\end{bmatrix}
= \begin{bmatrix}
\bfzero & \bfJ(\bfx,\bfx^*) \\
-\bfJ(\bfx,\bfx^*)^\top & -\bfK_d
\end{bmatrix}
\begin{bmatrix}
\nabla_{\frakq_e} \calH_d(\bfx,\bfx^*) \\
\nabla_{\frakp_e} \calH_d(\bfx,\bfx^*)
\end{bmatrix} + \begin{bmatrix} \bfzero \\ \bfd \end{bmatrix},
\end{equation}
where the terms $\bfJ(\bfx,\bfx^*)$, $\bfK_d$, and $\bfd$ in the transformed dynamics depend on the control design. To obtain the controller, one uses the relationship between $\bfx$ and $\bfx_e$ in \eqref{eq:error_state} to equate the dynamics in \eqref{eq:sys_pch} and \eqref{eq:sys_pch_dsr}, leading to:
\begin{equation} \label{eq:controller_IDA_PBC}
\bfu = \bfpi(\bfx,\bfx^*) = \bfB^{\dagger}(\frakq)  \bfb(\bfx,\bfx^*),
\end{equation}
where $\bfB^{\dagger}(\frakq) = (\bfB^\top (\frakq) \bfB(\frakq))^{-1}\bfB^\top(\frakq)$ is the pseudo-inverse of the input gain $\bfB(\frakq)$ and:
\begin{multline} \label{eq:desired_control_IDA_PBC}
\bfb(\bfx,\bfx^*) = \Big( \frakq^{\times \top} \nabla_{\frakq} \calH(\bfx) - \frakp^\times \nabla_\frakp \calH(\bfx) \\
 -\bfJ(\bfx,\bfx^*)^\top \nabla_{\frakq_e} \calH_d(\bfx, \bfx^*) - \bfK_d \nabla_{\frakp_e} \calH_d(\bfx, \bfx^*) \Big)
\end{multline}
with $\bfJ(\bfx, \bfx^*) \coloneqq \begin{bmatrix}
\bfR^\top\!\!\!\! & \bf0 & \bf0 & \bf0 \\
\bf0 & \hat{\bfr}_{e1}^\top & \hat{\bfr}_{e2}^\top & \hat{\bfr}_{e3}^\top
\end{bmatrix}^\top.$
% %
% \begin{equation}
% \bfJ(\bfx, \bfx^*) \coloneqq \begin{bmatrix}
% \bfR^\top\!\!\!\! & \bf0 & \bf0 & \bf0 \\
% \bf0 & \hat{\bfr}_{e1}^\top & \hat{\bfr}_{e2}^\top & \hat{\bfr}_{e3}^\top
% \end{bmatrix}^\top.
% \end{equation}
% %
If the IDA-PBC matching equations~\cite{blankenstein2002matching},
\begin{equation} \label{eq:matching_condition}
\bfB^{\perp}(\frakq) \bfb(\bfx,\bfx^*) = 0,
\end{equation}
are satisfied, where $\bfB^{\perp}(\frakq)$ is a maximal-rank left annihilator of $\bfB(\frakq)$, i.e., $\bfB^{\perp}(\frakq) \bfB(\frakq) = \bf0$, then the controller in \eqref{eq:controller_IDA_PBC} achieves the desired closed-loop dynamics in \eqref{eq:sys_pch_dsr} with $\bfd = \bfd_1$, i.e., without introducing any extra disturbance.

If the matching equations \eqref{eq:matching_condition} cannot be satisfied globally, i.e., the IDA-PBC controller does not solve the system $\bfB(\frakq) \bfu = \bfb(\bfx, \bfx^*)$ exactly, then $\bfpi(\bfx^*, \bfx) = \bfB^\dagger(\frakq) \bfb(\bfx, \bfx^*)$ is a least-squares solution. In this case, the residual,
% for some $(\bfx, \bfx^*)$
\begin{equation} \label{eq:disturbance_d_2}
\bfd_2 \coloneqq \prl{\bfB(\frakq) \bfB^\dagger(\frakq) - \bfI} \bfb(\bfx, \bfx^*),
\end{equation}
is introduced as an additional matched disturbance:
\begin{equation} \label{eq:total-disturbance}
\bfd = \bfd_1 + \bfd_2
\end{equation}
in the closed-loop dynamics in \eqref{eq:sys_pch_dsr}. Since the magnitude of $\bfd_2$ is proportional to that of $\bfb(\bfx,\bfx^*)$, it depends on the desired regulation point $\bfx^*$. An underactuated quadrotor system example is provided in Sec.~\ref{sec:evaluation}.\ref{subsec:quadrotor-sim}.

In general, the matching equations \eqref{eq:matching_condition} are nonlinear PDEs and can be solved explicitly only for certain cases \cite{blankenstein2002matching}. If $\bfB(\frakq)$ is invertible, i.e., the system in \eqref{eq:sys_pch} is fully-actuated, then the solution in \eqref{eq:controller_IDA_PBC} exists and is unique. For systems with underactuation degree $1$, the matching equations may be reduced to ODEs with closed-form solution \cite{gomez2001stabilization} or solved with certain desired kinetic energy \cite{acosta2005interconnection}. Yuksel et al.~\cite{yuksel2014reshaping} solve the matching equations specifically for stabilizing a quadrotor system, using Euler angles instead of a rotation matrix. A survey on this topic is available in~\cite{blankenstein2002matching}.
}

% and the matching equations are satisfied
%Gomez-Estern et al. \cite{gomez2001stabilization} show that the matching equations may be reduced to ODEs and solved in closed-form while Acosta et al. \cite{acosta2005interconnection} design the desired kinetic energy and solve for an explicit solution. Yuksel et al.~\cite{yuksel2014reshaping} solve the matching equations specifically for stabilizing a quadrotor system, using Euler angles instead of a rotation matrix. A survey of solution techniques for the matching equations is available in~\cite{blankenstein2002matching}.

%%%%%%%%%%%%%%%%%%%%%%%%%%%%%%%%%%%%

% \subsection{Robustness analysis of IDA-PBC controller for Hamiltonian dynamics subject to matched disturbance}
\subsection{Robustness analysis}
\label{sec:robustness-analysis}

\Revised{In this section, we analyze the stability and robustness with respect to the disturbance signal $\bfd$ in \eqref{eq:total-disturbance} of the IDA-PBC controller in \eqref{eq:controller_IDA_PBC}. Although the techniques we developed for dynamics learning in Sec.~\ref{subsec:ham_dyn_learning} and control synthesis in Sec.~\ref{sec:ham_control}.\ref{subsec:ham_control} did not make any assumptions about the Hamiltonian system in \eqref{eq:sys_pch}, our robustness and safety analysis that follows is developed under two assumptions.

\begin{assumption} \label{asp:bounded_matched_d}
The disturbance signal \eqref{eq:total-disturbance} is uniformly bounded, i.e., $\norm{\bfd} \leq \delta_{\bfd}$ for some $\delta_{\bfd} > 0$. 
\end{assumption}

\begin{assumption} \label{asp:constant_M}
The generalized mass matrix is constant, i.e., $\bfM(\frakq) \equiv \bfM$.
\end{assumption}

Without Assumption~\ref{asp:bounded_matched_d}, it is not possible to provide any performance guarantees for the control design because the disturbance $\bfd$ can have an arbitrary effect on the evolution of the closed-loop system dynamics. The disturbance magnitude bound $\delta_\bfd$ exists if we assume bounded estimation errors, bounded velocity and acceleration, bounded $\nabla_\frakq \prl{\Delta\mathcal{H}(\mathbf\frakq, {\mathbf\frakp})} $, and bounded control input $\bfu$ from the controller \eqref{eq:controller_IDA_PBC}. 

Our robustness analysis in Thm.~\ref{thm:robust_IDA_PBC} below constructs an ISS-Lyapunov function \cite{sontag2008input} to handle the disturbance $\bfd$. Assumption~\ref{asp:constant_M} simplifies the proof that we have a valid ISS-Lyapunov function. Extending the analysis to handle a state-dependent mass $\bfM(\frakq)$ is left for future work.

%The closed-loop error dynamics in \eqref{eq:sys_pch_dsr} can be simplified by noting that:
We simplify the error dynamics \eqref{eq:sys_pch_dsr} by noting that:
\begin{equation*}
\scaleMathLine{\bfe(\bfx,\bfx^*) \coloneqq \bfJ(\bfx,\bfx^*)^\top\nabla_{\frakq_e} \calH_d(\bfx,\bfx^*) = \begin{bmatrix} k_\bfp \bfR^\top\bfp_e \\ 
\frac{1}{2}k_{\bfR}\prl{\bfR_e - \bfR_e^\top}^{\vee}\end{bmatrix},}    
\end{equation*}
which leads to:
\begin{equation} \label{eq:error-dynamics}
\begin{aligned}
\dot{\frakq}_e &= \bfJ(\bfx,\bfx^*)\bfM^{-1} \frakp_e,\\
\dot{\frakp}_e &= -\bfe(\bfx,\bfx^*) - \bfK_d\bfM^{-1} \frakp_e + \bfd.
\end{aligned} 
\end{equation}
}

\begin{theorem} \label{thm:robust_IDA_PBC} 
Consider the Hamiltonian system in \eqref{eq:sys_pch} with desired regulation point $\bfx^* = (\frakq^*, 0)$ and control law specified in \eqref{eq:controller_IDA_PBC} with parameters $k_\bfp$, $k_\bfR$, $\bfK_\bfd$. Assume that the initial state $\bfx(t_0)$ lies in the domain $\calA = \crl{ \bfx \mid \tr(\bfI - \bfR^{*\top}\bfR) \leq \alpha < 4, \|\frakp\| \leq \beta}$ for some positive constants $\alpha$ and $\beta$. Then, the function:
\begin{equation} \label{eq:ISS_lyap}
\calV(\bfx, \bfx^*) =  \calH_d(\bfx, \bfx^*) + \rho \frac{d}{dt} \calU_d(\frakq_e)
%\begin{aligned}
%\calV(\bfx, \bfx^*) &=  \calH_d(\bfx, \bfx^*) + \rho \frac{d}{dt} \calU_d(\frakq_e)\\
%&= \calH_d(\bfx, \bfx^*) + \rho \,\bfe(\bfx,\bfx^*)^\top\bfM^{-1} \frakp_e
%\end{aligned}
\end{equation}
is an ISS-Lyapunov function \cite{sontag2008input} with respect to $\bfd$ in \eqref{eq:total-disturbance} and satisfies:
\begin{equation} \label{eq:V_bounds} 
\begin{aligned} 
k_1 \norm{\bfz}^2 \leq \calV(\bfx, \bfx^*)  &\leq k_2 \norm{\bfz}^2, \\ 
\dot{\calV}(\bfx, \bfx^*) &\leq -k_3 \norm{\bfz}^2 + k_\gamma \delta_{\bfd}^2,
\end{aligned}
\end{equation}
where $\bfz \coloneqq [\|\bfe(\bfx,\bfx^*)\|\; \|\frakp_e\|]^\top \in \bbR^2$, $k_\gamma =\frac{1}{2 \lambda_{\min}(\bfK_\bfd)} +\frac{\rho \lambda_2^2}{2 \lambda_1}$, $\lambda_1 \coloneqq \NEWZL{\lambda_{\min}(\bfM^{-1})}$, $\lambda_2 \coloneqq \NEWZL{\lambda_{\max}(\bfM^{-1})}$, $k_1 = \frac{1}{2} \lambda_{\min}(\bfQ_1)$, $k_2 = \frac{1}{2} \lambda_{\max}(\bfQ_2)$, $k_3 = \frac{1}{2}\lambda_{\min}(\bfQ_3)$, and the associated matrices $\bfQ_1$, $\bfQ_2$, $\bfQ_3$ are defined as:
% where $\bfz \coloneqq [\|\bfe(\bfx,\bfx^*)\|\; \|\frakp_e\|]^\top \in \bbR^2$, $k_\gamma =\frac{1}{2 \lambda_{\min}(\bfK_\bfd)} +\frac{\rho \lambda_2^2}{2 \lambda_1}$, $\lambda_1 \coloneqq \lambda_{\min}^{-1}(\bfM)$, $\lambda_2 \coloneqq \lambda_{\max}^{-1}(\bfM)$, $k_1 = \frac{1}{2} \lambda_{\min}(\bfQ_1)$, $k_2 = \frac{1}{2} \lambda_{\max}(\bfQ_2)$, $k_3 = \frac{1}{2}\lambda_{\min}(\bfQ_3)$, and the associated matrices $\bfQ_1$, $\bfQ_2$, $\bfQ_3$ are defined as:
%
\begin{equation} \label{eq:Q-matrices}
\begin{aligned}
\bfQ_1 &= \begin{bmatrix} \min\crl{k_{\bfp}^{-1}, k_{\bfR}^{-1}} & - \rho \lambda_2\\
- \rho \lambda_2 & \lambda_1 \end{bmatrix},\\
\bfQ_2 &= \begin{bmatrix} \max\crl{k_{\bfp}^{-1}, \frac{4 k_{\bfR}^{-1}}{4 - \alpha}} & \rho \lambda_2\\
\rho \lambda_2 & \lambda_2 \end{bmatrix}, 
\;\bfQ_3 = \begin{bmatrix} q_1 & q_2 \\ q_2 & q_3\end{bmatrix},
\end{aligned}
\end{equation}
% \begin{equation} \label{eq:Q-matrices}
% \begin{aligned}
% \bfQ_1 &= \begin{bmatrix} \min\crl{k_{\bfp}^{-1}, k_{\bfR}^{-1}} & - \rho \lambda_2\\
% - \rho \lambda_2 & \lambda_1 \end{bmatrix},\\
% \bfQ_2 &= \begin{bmatrix} \max\crl{k_{\bfp}^{-1}, \frac{4 k_{\bfR}^{-1}}{4 - \alpha}} & \rho \lambda_2\\
% \rho \lambda_2 & \lambda_2 \end{bmatrix}, \\
% \bfQ_3 &= \begin{bmatrix} q_1 & q_2 \\ q_2 & q_3\end{bmatrix},
% \end{aligned}
% \end{equation}
where the elements of $\bfQ_3$ are:
\begin{equation} \label{eq:Q3-elements}
\begin{aligned}
q_1 &= \rho \lambda_1,\\
q_2 &= - \rho \brl{ \lambda_{\max}(\bfM^{-1}\bfK_{\bfd}\bfM^{-1}) + \beta \lambda_2^2},\\ 
%q_3 &= \lambda_{\min}(\bfM^{-1}\bfK_{\bfd}\bfM^{-1}) - \rho \max\crl{k_{\bfp},\sqrt{3}k_{\bfR}}\lambda_{\max}^2(\bfM^{-1}).
q_3 &= \lambda_{\min}(\bfK_\bfd)\lambda_1^2 - 2\rho \lambda_2^2 \max\crl{k_{\bfp},k_{\bfR}}.
\end{aligned}
\end{equation}
%
%Let $\calS_c \coloneqq \crl{\bfx_e \mid \calV(\bfx_e) \leq c}, \quad c>0$ be the sub-level set induced by Lyapunov function $\calV$.
% Define $c_1 \coloneqq k_\delta \delta_\bfd^2$ with 
Denote the sub-level set of $\calV(\bfx, \bfx^*)$ with respect to positive scalar $c$ as: $\calS_{c} \coloneqq \crl{\bfx \mid \calV(\bfx, \bfx^*) \leq c}$.
%
% \begin{equation}
%   \calS_{c} \coloneqq \crl{\bfx \mid \calV(\bfx, \bfx^*) \leq c}.  
% \end{equation}
%
Given constants $c_1$, $c_2$ defined as:
\begin{equation}
c_1 \coloneqq \frac{k_2 k_\gamma}{k_3} \delta_{\bfd}^2,\; c_2 \coloneqq k_1 \min \crl{k_\bfR^2 \alpha(4- \alpha)/4, \beta^2},
\end{equation}
$\calS_{c_2} \subseteq \calA $ is an estimate of the region of attraction of the control law in \eqref{eq:controller_IDA_PBC}. Any state $\bfx$ starting within $\calS_{c_2}$~will~converge exponentially to $\calS_{c_1}$ and remain within it. The position error trajectory $\bfp_e(t)$ is uniformly ultimately bounded as:
\begin{equation}\label{eq:uub}
\lim_{t \rightarrow \infty} \norm{\bfp_e(t)}^2 \leq \frac{c_1}{k_1 k_\bfp^2} = \frac{k_2k_\gamma}{k_1k_3 k_{\bfp}^2} \delta_\bfd^2.
\end{equation}
To ensure that $c_1 < c_2$, the disturbance bound $\delta_\bfd$ should satisfy $\delta_\bfd < \sqrt{\frac{c_2k_3}{k_2 k_\gamma}}$.
%\begin{equation}
%    \delta_\bfd < \sqrt{\frac{c_2}{k_\delta}} \coloneqq \bar{\delta}_\bfd.
%\end{equation}
\end{theorem}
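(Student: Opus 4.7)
The plan is to build an ISS-Lyapunov function by adding a cross-term to the desired Hamiltonian, namely $\calV = \calH_d + \rho\,\dot{\calU}_d(\frakq_e) = \calH_d + \rho\,\bfe^\top \bfM^{-1}\frakp_e$, where I use $\dot{\frakq}_e = \bfJ\bfM^{-1}\frakp_e$ together with the definition of $\bfe$ to evaluate $\dot{\calU}_d$. The motivation is that $\dot{\calH}_d$ alone dissipates only in $\|\frakp_e\|^2$; the extra contribution $\rho\,\ddot{\calU}_d$ supplies a $-\rho\,\bfe^\top\bfM^{-1}\bfe$ term that makes the Lyapunov rate negative in $\|\bfe\|^2$ as well.

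First I would establish the sandwich bounds in \eqref{eq:V_bounds}. The kinetic part lies between $\tfrac{\lambda_1}{2}\|\frakp_e\|^2$ and $\tfrac{\lambda_2}{2}\|\frakp_e\|^2$, and the cross-term has magnitude at most $\rho\lambda_2\|\bfe\|\|\frakp_e\|$. Since $\bfR$ is orthogonal, $\tfrac{k_\bfp}{2}\|\bfp_e\|^2 = \tfrac{1}{2k_\bfp}\|\bfe_\bfp\|^2$ exactly, where $\bfe_\bfp$ is the translation block of $\bfe$. For the rotation piece I rely on the identity $\|\tfrac{1}{2}(\bfR_e-\bfR_e^\top)^\vee\|^2 = \tfrac{1}{4}\tr(\bfI-\bfR_e)\bigl(4-\tr(\bfI-\bfR_e)\bigr)$, which on the domain $\tr(\bfI-\bfR_e)\leq\alpha<4$ yields $\tfrac{1}{2k_\bfR}\|\bfe_\bfR\|^2 \leq \tfrac{k_\bfR}{2}\tr(\bfI-\bfR_e) \leq \tfrac{2}{k_\bfR(4-\alpha)}\|\bfe_\bfR\|^2$. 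Arranging the coefficients into quadratic forms in $\bfz$ gives $\bfQ_1$ and $\bfQ_2$.

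Next I would differentiate $\calV$ along \eqref{eq:error-dynamics}. Skew-symmetry of the interconnection gives $\dot{\calH}_d = -\frakp_e^\top\bfM^{-1}\bfK_\bfd\bfM^{-1}\frakp_e + \frakp_e^\top\bfM^{-1}\bfd$. Expanding $\rho\,\ddot{\calU}_d = \rho\,\dot{\bfe}^\top\bfM^{-1}\frakp_e + \rho\,\bfe^\top\bfM^{-1}\dot{\frakp}_e$ and substituting $\dot{\frakp}_e$ produces the key dissipation $-\rho\,\bfe^\top\bfM^{-1}\bfe$, an indefinite off-diagonal term $-\rho\,\bfe^\top\bfM^{-1}\bfK_\bfd\bfM^{-1}\frakp_e$, and the matched-disturbance term $\rho\,\bfe^\top\bfM^{-1}\bfd$. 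The main technical obstacle is bounding $\dot{\bfe}$ via $SE(3)$ kinematics: using $\dot{(\bfR^\top\bfp_e)} = -\hat{\bfomega}\bfR^\top\bfp_e+\bfv$ for the translation block and the identity $\dot{\bfe}_\bfR = \tfrac{k_\bfR}{2}[\tr(\bfR_e)\bfI-\bfR_e]\bfomega$ derived from $\dot{\bfR}_e = \bfR_e\hat{\bfomega}$, the product $\dot{\bfe}^\top\bfM^{-1}\frakp_e$ splits into (i) a drift cross-term $\bfe_\bfp^\top\hat{\bfomega}\bfv$ controlled using $\|\frakp\|\leq\beta$ to give the $\beta\lambda_2^2$ entry of $q_2$, and (ii) a forward-kinematics term bounded by $\max\{k_\bfp,k_\bfR\}\lambda_2^2\|\frakp_e\|^2$, responsible for the $2\rho\lambda_2^2\max\{k_\bfp,k_\bfR\}$ penalty in $q_3$. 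Keeping careful track of where $\beta$ and the $k_\bfp,k_\bfR$ gains enter through these kinematic bounds is the most delicate bookkeeping. Finally, completing the square on $\frakp_e^\top\bfM^{-1}\bfd$ retains half of the dissipation and contributes $\tfrac{1}{2\lambda_{\min}(\bfK_\bfd)}\delta_\bfd^2$, while Young's inequality on $\rho\,\bfe^\top\bfM^{-1}\bfd$ with weight $\lambda_2/\lambda_1$ retains $\tfrac{1}{2}\rho\lambda_1\|\bfe\|^2$ of the $-\rho\,\bfe^\top\bfM^{-1}\bfe$ dissipation and contributes $\tfrac{\rho\lambda_2^2}{2\lambda_1}\delta_\bfd^2$. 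Collecting everything yields $\dot{\calV}\leq -\tfrac{1}{2}\bfz^\top\bfQ_3\bfz + k_\gamma\delta_\bfd^2 \leq -k_3\|\bfz\|^2 + k_\gamma\delta_\bfd^2$.

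The inclusion $\calS_{c_2}\subseteq\calA$ follows directly: for $\bfx\in\calS_{c_2}$, the lower bound $k_1\|\bfz\|^2\leq\calV\leq c_2$ forces both $\|\bfe_\bfR\|^2 \leq c_2/k_1 \leq k_\bfR^2\alpha(4-\alpha)/4$, which via the rotation identity above yields $\tr(\bfI-\bfR_e)\leq\alpha$, and $\|\frakp_e\|^2 \leq c_2/k_1 \leq \beta^2$. Forward invariance of $\calS_{c_2}$ and exponential decay of $\calV$ toward $\calS_{c_1}$ then come from the standard ISS comparison argument \cite{sontag2008input} applied to \eqref{eq:V_bounds}. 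The ultimate bound \eqref{eq:uub} follows from $\|\bfe\|^2\geq\|\bfe_\bfp\|^2 = k_\bfp^2\|\bfp_e\|^2$, so $k_1 k_\bfp^2\|\bfp_e\|^2 \leq \calV$ and letting $\calV\to c_1$. Rearranging $c_1<c_2$ gives the required bound $\delta_\bfd < \sqrt{c_2 k_3/(k_2 k_\gamma)}$, ensuring the asymptotic set lies strictly inside the region of attraction.
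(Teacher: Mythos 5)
Your proposal is correct and follows essentially the same route as the paper's proof in Appendix~\ref{app:robust_IDA_PBC_proof}: the same cross-term Lyapunov function, the same sandwich bounds giving $\bfQ_1,\bfQ_2$, the same expansion of $\dot{\calV}$ using the $SE(3)$ kinematic bounds on $\dot{\bfe}$ (with $\|\hat{\bfomega}\|\leq\lambda_2\beta$ and $\|\bfE_\bfR\|_2\leq 1$) plus Young's inequality to obtain $\bfQ_3$ and $k_\gamma$, and the same sub-level-set argument for the region of attraction, the ultimate bound, and the condition $c_1<c_2$. The only differences are cosmetic: you derive the rotation-error bound from the identity $\|\bfe_\bfR\|^2=\tfrac{k_\bfR^2}{4}\tr(\bfI-\bfR_e)\bigl(4-\tr(\bfI-\bfR_e)\bigr)$ rather than citing the reference the paper invokes, and you omit the paper's closing remark on choosing $\rho$ small enough to ensure $\bfQ_1,\bfQ_2,\bfQ_3$ are positive definite so that $k_1,k_3>0$.
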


\begin{proof}
See Appendix~\ref{app:robust_IDA_PBC_proof}.
\end{proof}

The estimates of the region of attraction and the uniform ultimate bound on the position error \Revised{provided by Thm.~\ref{thm:robust_IDA_PBC} for the IDA-PBC controller} are conservative because our analysis considers the mass and inertia jointly as a generalized mass $\bfM$ and does not differentiate the force and torque disturbances. Besides considering separate disturbance bounds for the force and torque inputs, less conservative bounds can be achieved by introducing disturbance compensation \cite{lee2013nonlinear}.% in the control design as proposed by Lee et al. \cite{lee2013nonlinear}.

%\input{tex/tech_IDA_PBC_controller.tex}
%\input{tex/tech_IDA_PBC_robustness.tex}

%\subsection{Dynamic safety metric}
\subsection{Safety analysis} \label{sec:safety_analysis}

Sec.~\ref{sec:ham_control}.\ref{sec:robustness-analysis} analyzed the stability and robustness properties of the IDA-PBC controller for a given regulation point $\bfx^*$. Next, we use the Lyapunov function $\calV(\bfx,\bfx^*)$ in \eqref{eq:ISS_lyap} to derive conditions under which the trajectory of the closed-loop system remains outside the unsafe set $\calO$. We introduce a barrier function, which takes the region of attraction $\calS_{c_2}$ of the controller and the invariant set $\calS_{c_1}$ associated with the ultimate bound in Thm.~\ref{thm:robust_IDA_PBC} as well as the distance $\bar{d}(\bfp^*, \calO)$ to $\calO$ into account to quantify the margin to safety violation:
 \begin{multline} \label{eq:def_deltaE}
 \Delta E(\bfx, \bfx^*) \coloneqq \min \crl{c_2, k_1 k_\bfp^2 \bar{d}{\,}^2 \prl{\bfp^*, \calO}}  - \calV(\bfx, \bfx^*) \\
 + \max \crl{c_1 - \calV(\bfx, \bfx^*), 0},
 \end{multline}
where $k_1$, $k_\bfp$, $c_1$, $c_2$ are the constants specified in Thm.~\ref{thm:robust_IDA_PBC}. If, for a given regulation point $\bfx^*$, the safety margin $\Delta E(\bfx, \bfx^*)$ is positive initially, then any trajectory of the closed-loop system remains safe as it converges to the invariant set $\calS_{c_1}$.

\begin{proposition} \label{prop:safe_stab}
Consider the system in \eqref{eq:sys_pch} with regulation point $\bfx^* = (\frakq^*, 0)$ and control law in \eqref{eq:controller_IDA_PBC}. Suppose that the desired position $\bfp^*$ has sufficient clearance from the unsafe set $\calO$ and the disturbance $\bfd$ is bounded as follows:
\begin{equation}\label{eq:sufficient-clearance}
\bar{d}{\,}^2(\bfp^*, \calO) \geq \frac{k_2k_\gamma}{k_1k_3 k_{\bfp}^2} \delta_\bfd^2, \qquad \|\bfd\|^2 \leq \delta_{\bfd}^2 < \frac{c_2k_3}{k_2k_\gamma}.
\end{equation}
If the initial state $\bfx(t_0) = \bfx_0$ satisfies:
\begin{equation} \label{eq:safe_stab_cond}
\Delta E(\bfx_0, \bfx^*) \geq 0, 
\end{equation}
then the position error trajectory is uniformly ultimately bounded as in \eqref{eq:uub} and the system remains safe, i.e., $d(\bfp(t), \calO) \geq 0$ for all $t \geq t_0$.
\end{proposition}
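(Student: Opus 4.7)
The plan is to leverage Theorem~\ref{thm:robust_IDA_PBC} by showing that the hypothesis $\Delta E(\bfx_0,\bfx^*)\geq 0$ places the initial state in a sublevel set of $\calV$ that is (i) forward-invariant under the closed-loop dynamics and (ii) small enough to keep the position error below $\bar{d}(\bfp^*,\calO)$ for all $t\geq t_0$. Safety will then follow from the definition of $d(\bfp^*,\calO)$, and the uniform ultimate bound \eqref{eq:uub} will follow directly from the convergence of $\calV$ to $\calS_{c_1}$ already established in Theorem~\ref{thm:robust_IDA_PBC}.

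First, I will introduce the shorthand $C \coloneqq \min\{c_2,\, k_1 k_\bfp^2 \bar{d}{\,}^2(\bfp^*,\calO)\}$ that appears in \eqref{eq:def_deltaE} and verify, using the two inequalities in \eqref{eq:sufficient-clearance}, that $c_1 \leq C \leq c_2$: the clearance hypothesis rearranges to $k_1 k_\bfp^2 \bar{d}{\,}^2(\bfp^*,\calO) \geq c_1$, while the disturbance bound gives $c_1 < c_2$. A short case analysis on the $\max$ term in \eqref{eq:def_deltaE} then shows that $\Delta E(\bfx_0,\bfx^*)\geq 0$ is equivalent to $\calV(\bfx_0,\bfx^*)\leq C$. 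Indeed, if $\calV(\bfx_0,\bfx^*)\leq c_1$ then $\Delta E \geq C-c_1\geq 0$ holds automatically, whereas if $\calV(\bfx_0,\bfx^*) > c_1$ the $\max$ vanishes and $\Delta E\geq 0$ rearranges to $\calV(\bfx_0,\bfx^*)\leq C$. In either case $\bfx_0 \in \calS_C \subseteq \calS_{c_2} \subseteq \calA$, so the hypotheses of Theorem~\ref{thm:robust_IDA_PBC} hold at $t_0$.

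Next, I will combine $\dot\calV \leq -k_3\|\bfz\|^2 + k_\gamma \delta_\bfd^2$ with $\calV \leq k_2\|\bfz\|^2$ from \eqref{eq:V_bounds} to obtain the scalar differential inequality $\dot\calV \leq -(k_3/k_2)\calV + k_\gamma \delta_\bfd^2$, whose equilibrium level is exactly $c_1$. A standard comparison argument then gives $\calV(\bfx(t),\bfx^*) \leq \max\{\calV(\bfx_0,\bfx^*),\, c_1\} \leq C$ for all $t\geq t_0$, together with exponential convergence toward $\calS_{c_1}$. The orthogonality of $\bfR$ and the block structure of $\bfe(\bfx,\bfx^*)$, whose first component is $k_\bfp \bfR^\top \bfp_e$, yield $\|\bfz\|^2 \geq \|\bfe\|^2 \geq k_\bfp^2 \|\bfp_e\|^2$, so $\calV \geq k_1 k_\bfp^2 \|\bfp_e\|^2$. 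Combining this with $\calV(\bfx(t),\bfx^*) \leq C \leq k_1 k_\bfp^2 \bar{d}{\,}^2(\bfp^*,\calO)$ gives $\|\bfp(t) - \bfp^*\| \leq \bar{d}(\bfp^*,\calO) \leq d(\bfp^*,\calO)$ for every $t\geq t_0$, which by definition of $d(\bfp^*,\calO)$ keeps $\bfp(t)$ out of $\calO$, while letting $t\to\infty$ and using $\calV \to c_1$ recovers \eqref{eq:uub}.

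The main obstacle, and the reason the $\max$ term is present in \eqref{eq:def_deltaE} at all, is the case analysis required to collapse $\Delta E(\bfx_0,\bfx^*)\geq 0$ into the single clean sublevel-set membership $\calV(\bfx_0,\bfx^*)\leq C$: without the $\max$ term, trajectories already inside $\calS_{c_1}$ but near its boundary would be unnecessarily excluded even though they cannot escape $\calS_C$ under the disturbance bound. Once this equivalence is established, the rest of the argument reduces to the routine ISS-Lyapunov bookkeeping delivered by Theorem~\ref{thm:robust_IDA_PBC} and the elementary observation that $\|\bfp_e\|^2$ is controlled by $\calV$ through the position block of $\bfe(\bfx,\bfx^*)$.
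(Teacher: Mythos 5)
Your proposal is correct and follows essentially the same route as the paper: both reduce $\Delta E(\bfx_0,\bfx^*)\geq 0$ to membership in a sublevel set of $\calV$ bounded by $\min\crl{c_2,\,k_1k_\bfp^2\bar{d}{\,}^2(\bfp^*,\calO)}$ via a case analysis on the $\max$ term (using \eqref{eq:sufficient-clearance} to rule out the degenerate case), then invoke Thm.~\ref{thm:robust_IDA_PBC} together with $\calV \geq k_1 k_\bfp^2\norm{\bfp_e}^2$ to keep $\norm{\bfp(t)-\bfp^*}$ below the clearance and obtain \eqref{eq:uub}. The only cosmetic differences are that you merge the paper's two surviving cases into one comparison-lemma bound $\calV(t)\leq\max\crl{\calV(\bfx_0,\bfx^*),c_1}$, and your phrase ``$\calV\to c_1$'' should read $\limsup_{t\to\infty}\calV\leq c_1$, which is what the ultimate bound actually requires.
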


\begin{proof} % 
By the definition in \eqref{eq:def_deltaE}, $\Delta E(\bfx, \bfx^*) \geq 0$ implies that the Lyapunov function $\calV(\bfx, \bfx^*)$ satisfies one of three cases:
\begin{enumerate}
    \item $c_1  < 	\calV$,  $\calV \leq \min \crl{c_2, k_1 k_\bfp^2 \bar{d}{\,}^2 \prl{\bfp^*, \calO}}$,
    \item $c_1  \geq \calV$, $\calV \leq \min \crl{c_2, k_1 k_\bfp^2 \bar{d}{\,}^2 \prl{\bfp^*, \calO}}$,
    \item $c_1  \geq \calV$, $\calV > \min \crl{c_2, k_1 k_\bfp^2 \bar{d}{\,}^2 \prl{\bfp^*, \calO}}$.
\end{enumerate}

Case 3) can never happen because \eqref{eq:sufficient-clearance} implies that $c_1 \leq k_1 k_\bfp^2 \bar{d}{\,}^2 \prl{\bfp^*, \calO}$ and $c_1 < c_2$.

For Case 1), when $c_1 < \calV \leq c_2$, we know from Thm.~\ref{thm:robust_IDA_PBC} that $\dot{\calV} < 0$ and every trajectory starting in $\calS_{c_2}$ converges exponentially to $\calS_{c_1}$. In this case, from \eqref{eq:V_bounds}:
\begin{equation}
\begin{aligned}
k_1 k_\bfp^2 \bar{d}{\,}^2 \prl{\bfp^*, \calO} &\geq \calV(\bfx(t_0),\bfx^*) > \calV(\bfx(t),\bfx^*) \\
&\geq  k_1 \norm{\bfz(t)}^2 \geq k_1 k_\bfp^2 \norm{\bfp(t) - \bfp^*}^2.
\end{aligned}
\end{equation}
Therefore, $\norm{\bfp(t) - \bfp^*}^2 \leq \bar{d}{\,}^2 \prl{\bfp^*, \calO} \leq d^2(\bfp^*, \calO)$ and $d(\bfp(t), \calO) \geq 0$ for all $t \in \brl{t_0, t_1}$, where $t_1$ is the time when the trajectory enters $\calS_{c_1}$, corresponding to Case 2) above.

For Case 2), we have $\calV(\bfx,\bfx^*) \leq c_1$ since \eqref{eq:sufficient-clearance} implies that $c_1 < c_2$. From Thm.~\ref{thm:robust_IDA_PBC}, $\calS_{c_1}$ is forward invariant and:
\begin{equation}
    \norm{\bfp(t) - \bfp^*}^2 \leq \frac{\calV(\bfx(t),\bfx^*)}{k_1k_{\bfp}^2} \leq \frac{c_1}{k_1k_{\bfp}^2} = \frac{k_2k_\gamma}{k_1k_3 k_{\bfp}^2}\delta_\bfd^2.
\end{equation}
Hence, \eqref{eq:sufficient-clearance} implies that $d(\bfp(t), \calO) \geq 0$.
\end{proof}

\section{Safe and Stable Tracking using a Reference Governor}
\label{sec:tracking}

\begin{figure*}[t]
\centering
\begin{subfigure}
    \centering
    \includegraphics[height=35mm]{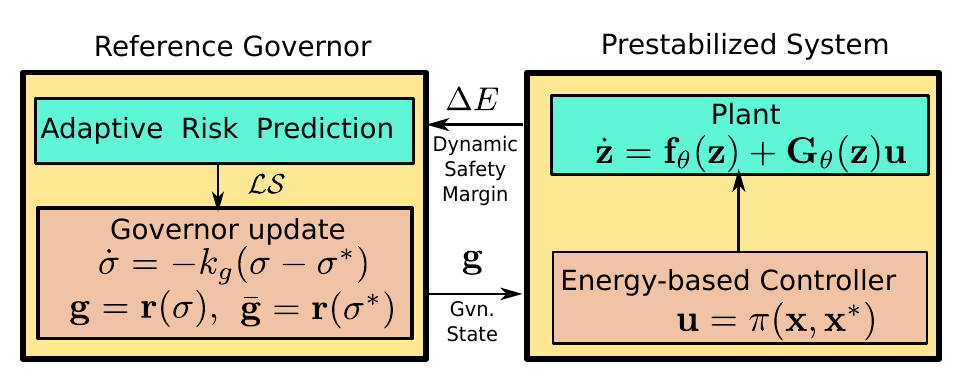}
    %\label{fig:structure}
\end{subfigure}%
%\hfill%
\qquad \qquad
\begin{subfigure}
    \centering
    \includegraphics[height=35mm]{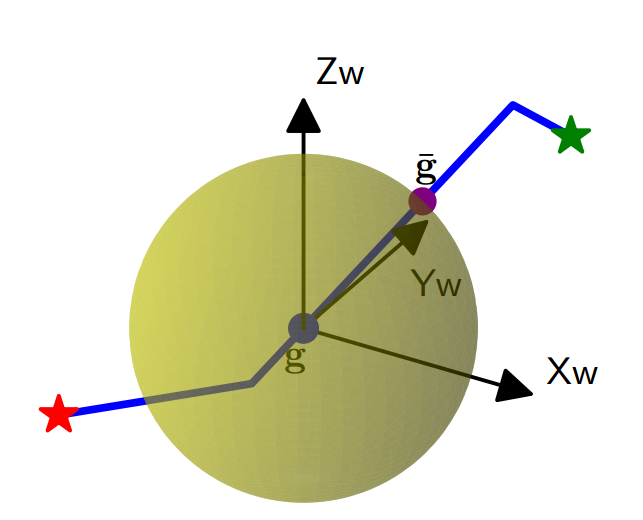}
    %\label{fig:rotRg}
\end{subfigure}%
\caption{Structure of the reference-governor tracking controller (left). A governor with state $\bfg$ adaptively tracks desired path $\bfr$ and generates a reference point $\bfx^* = \bfell(\bfg)$ for the closed-loop Hamiltonian system (right). A local projected goal $\lpg$ (purple dot) is generated as the farthest intersection between the local safe set $\LS(\bfx, \bfg)$ (yellow sphere) and the path $\bfr$ (blue curve) to guide the governor motion. }

\label{fig:structure}
% at $\brl{3.1096,  1.8870, 1.6643}^\top$
\end{figure*}

In this section, we develop a safe tracking controller by introducing a reference governor \cite{RG_bemporad1998reference} to guide the reference point $\bfx^*$ for the stabilizing control law $\bfpi(\bfx,\bfx^*)$ in \eqref{eq:controller_IDA_PBC} along the desired reference path $\bfr$ introduced in Problem~\ref{prob:main_prob}. 

A reference governor is a virtual dynamical system whose state $\bfg(t)$ moves along $\bfr(\sigma)$ for $\sigma \in [0,1]$. In this paper, the governor state $\bfg(t) \in \mathbb{R}^{3}$ specifies a desired position $\bfp^*(t)$ for the Hamiltonian system. We introduce a lifting function $\bfx^*(t) = \bfell(\bfg(t))$ to provide a desired orientation $\bfR^*(t)$ and specify a reference state $\bfx^*(t)$ for the Hamiltonian system.

Given $\bfx^*(t)$, we compute the safety margin $\Delta E(\bfx(t), \bfx^*(t))$ in \eqref{eq:def_deltaE} and use the leeway amount by which the margin exceeds $0$ to move the governor state $\bfg(t)$ along $\bfr(\sigma)$. Intuitively, the reference point $\bfx^*(t) = \bfell(\bfg(t))$ speeds up when $\Delta E(\bfx(t), \bfx^*(t))$ increases, e.g., the distance to obstacles increases or the system energy function decreases, and vice versa. 

%because the reference path $\bfr$ can be tracked with an arbitrary orientation

Given a point $\bfg = \bfr(\sigma)$ on the reference path for some $\sigma \in [0,1]$, we generate a reference state $\bfx^* = (\frakq^*, \frakp^*)$ where $\frakq^* = (\bfp^*, \bfR^*) = (\bfg, \bfI)$ and $\frakp^* = 0$. The governor state $\bfg$ represents the desired position $\bfp^*$ on the path. For simplicity, we set the desired rotation matrix $\bfR^* = \bfI$. If, in addition to $\bfr$, a desired yaw angle reference is provided, one can generate $\bfR^*$ using the method described in \cite{lee2010geometric} to achieve better orientation tracking. We define a lifting function $\bfell: \bbR^3 \mapsto T^*SE(3)$ that generates a reference state $\bfx^* = \bfell(\bfg)$ from the governor state $\bfg$:
\begin{equation} \label{eq:lift_funcs}
\bfell(\bfg) \coloneqq \begin{bmatrix}
\bfg^\top & \bfe_1^{\top} & \bfe_2^{\top} &\bfe_3^{\top} & \bf0^\top & \bf0^\top
\end{bmatrix}^\top,
\end{equation}
where $\bfe_1$, $\bfe_2$, $\bfe_3$ are the rows of the identity matrix. Given the reference state $\bfx^* = \bfell(\bfg)$, we compute the safety margin $\Delta E(\bfx, \bfx^*)$ in \eqref{eq:def_deltaE} and describe how to update the governor state to ensure that safety is preserved.

We update the governor state $\bfg(t) = \bfr(\sigma(t))$ along the path by regulating the parameter $\sigma$:
\begin{equation} \label{eq:gov_ctrl} 
\bfg(t) = \bfr(\sigma(t)), \quad \dot{\sigma}(t) = - k_g (\sigma(t) - \sigma^*(t)), 
\end{equation}
where $k_g > 0$ is a control gain and $\sigma^*(t) \in \brl{0, 1}$ is a desired time-varying parameter, which we construct using the safety margin $\Delta E(\bfx, \bfx^*)$. We require $\sigma^*(t)$ to satisfy two conditions: 1) always stay ahead of the current $\sigma(t)$: $\sigma^*(t) \geq \sigma(t)$, $\forall t \geq t_0$, and 2) have distance $\norm{\sigma^*(t) - \sigma(t)}$ proportional to $\Delta E(\bfx(t), \bfx^*(t))$. The first condition guarantees that the governor state $\bfg(t)$ moves forward along the path towards the goal $\bfr(1)$. The second condition allows the safety margin $\Delta E$ to adaptively regulate the governor state $\bfg(t)$ in order to ensure safety for the Hamiltonian system. To construct the desired path parameter $\sigma^*$, we define a \emph{local safe zone} as a ball around the governor state $\bfg$ with radius $\Delta E(\bfx, \bfx^*)$ based on the state $\bfx$ and the reference state $\bfx^* = \bfell(\bfg)$.

\begin{definition}\label{def:LS}
	A \emph{local safe zone} is a subset of $\bbR^3$ that depends on the system state $\bfx$ and the governor state $\bfg$:
	\begin{equation}
	\label{eq:LS}
	\LS(\bfx, \bfg) \!\coloneqq\! \crl{ \bfq \in \bbR^3 \!\mid\! \norm{\bfq- \bfg}^2 \leq \Delta E\!\prl{\bfx, \bfell(\bfg)}}\!,
	\end{equation}
	where $\bfell$ is the lifting function in \eqref{eq:lift_funcs} and $\Delta E$ is the safety margin in \eqref{eq:def_deltaE}.
\end{definition}

We determine $\sigma^*$ as the farthest intersection between the local safe zone $\LS(\bfx, \bfg)$ and the path $\bfr$ by solving the scalar optimization problem in Def. \ref{def:lpg}.

\begin{definition} \label{def:lpg}
	A \emph{local projected goal} for system-governor state $(\bfx,\bfg)$ is a point $\lpg \in \LS(\bfx, \bfg)$ that is farthest along the path $\bfr$:
	\begin{equation} \label{eq:lpg}
	\lpg = \bfr(\sigma^*),  \;\; \sigma^* = \argmax_{\sigma \in [0,1]} \crl{ \sigma \mid  \bfr(\sigma) \in \LS(\bfx, \bfg)}.
	\end{equation}	
\end{definition}

% By setting $\sigma^*(t) = \bar{\sigma}$, the design of governor update law is completed. 
The construction of the local projected goal $\lpg$ is shown in Fig.~\ref{fig:structure} (right), showing a reference path $\bfr$, the local safe zone $\LS(\bfx, \bfg)$ and the local projected goal $\lpg$. This constructing of $\sigma^*$ and $\lpg$ completes the governor update law \eqref{eq:gov_ctrl}. 

Our safe tracking controller consists of the reference governor system in \eqref{eq:gov_ctrl}, adaptively updating the reference point $\bfx^* = \bfell(\bfg)$ via the lifting function in \eqref{eq:lift_funcs}, and the passivity-based controller in \eqref{eq:controller_IDA_PBC} that drives the Hamiltonian system towards $\bfx^*$. The stability, safety, and robustness of the proposed tracking controller are analyzed in Thm.~\ref{thm:main_result}.

\begin{theorem}\label{thm:main_result}
    Suppose that the desired path $\bfr(\sigma)$ has sufficient clearance from the unsafe set $\calO$ and the disturbance $\bfd$ is bounded as:
	\begin{equation*}
	\min_{\sigma \in \brl{0,1}} \bar{d}{\,}^2(\bfr(\sigma), \calO) \geq \frac{k_2k_\gamma}{k_1k_3 k_{\bfp}^2} \delta_\bfd^2, \quad \|\bfd\|^2 \leq \delta_{\bfd}^2 < \frac{c_2k_3}{k_2 k_\gamma}.
	\end{equation*}
	Consider the Hamiltonian system in \eqref{eq:sys_pch}, the governor system in \eqref{eq:gov_ctrl} with $\sigma^*$ constructed in Def.~\ref{def:lpg} and the control law $\bfu = \bfpi(\bfx, \bfell(\bfg))$ in \eqref{eq:controller_IDA_PBC}. Suppose that the initial state $(\bfx_0, \bfg_0)$ satisfies:
	\begin{equation} \label{eq:thm_ic}
	\Delta E \prl{\bfx_0, \bfell(\bfg_0)} > 0,\quad  \bfg_0 = \bfr(0) = \bfp(t_0),
	\end{equation}
	where $\Delta E(\bfx, \bfx^*)$ is the safety margin in \eqref{eq:def_deltaE}. The position $\bfp(t)$ of \eqref{eq:sys_pch} converges to a ball of radius $\sqrt{\frac{k_2k_\gamma}{k_1k_3 k_{\bfp}^2}} \delta_\bfd$ around $\bfr(1)$ and remains safe, i.e. $\bfp(t) \in \calF$, for all $t \geq t_0$
 
 %without violating the safety constraints, i.e., $\bfp(t) \in \calF$, $\forall t \geq t_0$.
\end{theorem}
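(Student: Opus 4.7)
The plan is to treat the closed-loop system as a cascade of the Hamiltonian plant under the IDA-PBC law \eqref{eq:controller_IDA_PBC} and the scalar governor dynamics \eqref{eq:gov_ctrl}, analyzed through the time-varying safety margin $\Delta E(\bfx(t),\bfell(\bfg(t)))$. This single quantity encodes both the gap between $\calV$ and the operational level sets from Thm.~\ref{thm:robust_IDA_PBC} and the governor's clearance from $\calO$. The proof splits naturally into two parts: (i) forward invariance of the closed set $\crl{\Delta E \geq 0}$, from which safety follows by applying Prop.~\ref{prop:safe_stab} pointwise with $\bfx^* = \bfell(\bfg(t))$; and (ii) convergence of the path parameter $\sigma(t) \to 1$, which combined with (i) yields the ultimate bound around $\bfr(1)$.

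For (i), the initial condition \eqref{eq:thm_ic} and the clearance hypothesis place $(\bfx_0, \bfg_0)$ strictly inside $\crl{\Delta E > 0}$ and in the domain of Thm.~\ref{thm:robust_IDA_PBC}. Inspecting the two branches of \eqref{eq:def_deltaE} and using $\min\crl{c_2,\; k_1 k_\bfp^2 \bar{d}{\,}^2(\bfg,\calO)} \geq c_1$ (from clearance and $c_1 < c_2$), any boundary event $\Delta E = 0$ must satisfy $\calV = \min\crl{c_2,\; k_1 k_\bfp^2 \bar{d}{\,}^2(\bfg,\calO)} \geq c_1$. At such an instant, Def.~\ref{def:LS} collapses the local safe zone to $\LS(\bfx,\bfg) = \crl{\bfg}$, so $\sigma^* = \sigma$ by Def.~\ref{def:lpg}, $\dot\sigma = 0$, and both $\bfg$ and $\bar d(\bfg,\calO)$ are momentarily frozen. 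With $\bfx^*$ stationary, \eqref{eq:V_bounds} combined with $\calV \geq c_1 = k_2 k_\gamma \delta_\bfd^2/k_3$ and $\calV \leq k_2 \|\bfz\|^2$ gives $\dot\calV \leq -k_3\|\bfz\|^2 + k_\gamma \delta_\bfd^2 \leq 0$. Hence $\dot{\Delta E} \geq 0$ at the boundary, $\crl{\Delta E \geq 0}$ is forward invariant, and applying Prop.~\ref{prop:safe_stab} with the current $\bfx^*$ yields $\bfp(t) \in \calF$ for all $t \geq t_0$.

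For (ii), invariance $\Delta E \geq 0$ gives $\bfg(t) \in \LS(\bfx,\bfg)$ and hence $\sigma^*(t) \geq \sigma(t)$, so \eqref{eq:gov_ctrl} renders $\sigma(\cdot)$ monotone nondecreasing on $[0,1]$ with limit $\sigma_\infty$. Suppose $\sigma_\infty < 1$. Then $\bfg(t) \to \bfr(\sigma_\infty)$ and the system is asymptotically autonomous with fixed regulation point $\bfell(\bfr(\sigma_\infty))$; by Thm.~\ref{thm:robust_IDA_PBC}, $\calV$ enters $\calS_{c_1}$, and therefore $\liminf \Delta E \geq \min\crl{c_2,\; k_1 k_\bfp^2 \bar{d}{\,}^2(\bfr(\sigma_\infty),\calO)} - c_1$, which is a strictly positive constant by the path-clearance assumption. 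Then $\LS(\bfx,\bfg)$ eventually contains a ball of positive radius around $\bfr(\sigma_\infty)$, and continuity of $\bfr$ supplies some $\sigma' > \sigma_\infty$ with $\bfr(\sigma') \in \LS$, contradicting $\sigma^* \to \sigma_\infty$. Hence $\sigma_\infty = 1$, and one more application of Prop.~\ref{prop:safe_stab} with $\bfx^* = \bfell(\bfr(1))$ produces the claimed ultimate bound $\lim_{t\to\infty} \|\bfp(t) - \bfr(1)\|^2 \leq k_2 k_\gamma \delta_\bfd^2/(k_1 k_3 k_\bfp^2)$.

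The delicate step is the invariance argument in (i). When the governor is moving, the total derivative of $\calV(\bfx,\bfell(\bfg))$ along trajectories contains a cross term $\nabla_{\bfx^*}\calV \cdot \partial_\bfg \bfell \cdot \dot\bfg$ that is not controlled by Thm.~\ref{thm:robust_IDA_PBC} and could a priori drive $\calV$ upward into the unsafe region. The feature that makes the argument close is precisely the design choice in Def.~\ref{def:LS}: tying the local-safe-zone radius to $\sqrt{\Delta E}$ forces $|\sigma^* - \sigma|$, and hence $\|\dot\bfg\|$, to vanish as $\Delta E \to 0$. A fully rigorous version would replace the limiting "frozen governor" argument above with a quantitative comparison lemma bounding the cross term by a constant multiple of $\Delta E$ so that it can be absorbed into the negative ISS term, at which point $\crl{\Delta E \geq 0}$ becomes a genuine barrier-like invariant set and both conclusions of the theorem follow.
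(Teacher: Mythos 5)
Your proposal is correct and follows essentially the same route as the paper's proof: the safety margin $\Delta E$ can only reach zero with the governor frozen (the local safe zone collapses to $\crl{\bfg}$, so $\dot{\sigma}=0$), at which point $\calV \geq c_1$ forces $\dot{\calV}\leq 0$, making $\crl{\Delta E \geq 0}$ invariant; safety then follows from the $\calV$-versus-distance comparison underlying Prop.~\ref{prop:safe_stab}, and the governor's monotone progress drives $\sigma \to 1$, yielding the ultimate bound of Thm.~\ref{thm:robust_IDA_PBC} around $\bfr(1)$. Your explicit contradiction argument ruling out $\sigma_\infty < 1$ and your observation that full rigor requires controlling the cross term from $\dot{\bfg}$ in $\frac{d}{dt}\calV(\bfx,\bfell(\bfg))$ are refinements of steps the paper treats informally (``this process continues until\ldots''), not a different method.
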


\begin{proof}
	To simplify notation, let $\Delta E(t) = \Delta E \prl{\bfx(t), \bfell(\bfg(t))}$. Initially, $\bfg_0 = \bfp(t_0) = \bfr(0) \in \LS(\bfx_0, \bfg_0)$ and $\Delta E(t_0) > 0$. The local projected goal $\lpg$ and the associated $\sigma^*$ are well defined in Def. \ref{def:lpg}. By the governor update law \eqref{eq:gov_ctrl}, the path parameter $\sigma$ increases, the governor state $\bfg(\sigma)$ moves along $\bfr$ towards the goal $\bfr(1)$. The desired state $\bfx^* = \bfell(\bfg)$ is updated via the lifting function \eqref{eq:lift_funcs}. As $\bfg$ tracks $\lpg$ on the path $\bfr$ via the path parameter update in \eqref{eq:gov_ctrl}, the system state $\bfx$ tracks $\bfx^* = \bfell(\bfg)$.
	%using the passivity-based controller $\bfpi(\bfx, \bfx^*)$ in \eqref{eq:controller_IDA_PBC}. 
	During this process, the safety margin $\Delta E(t)$ fluctuates and regulates the rate of change of $\sigma$. % 	\NEWZL{

    Since $\sigma^*(t)$ is bounded in \eqref{eq:lpg}, $\sigma(t)$ is updated continuously \cite{filippov1988differential} in \eqref{eq:gov_ctrl}, leading to a continuous governor state $\bfg(t)$. By construction, the lifting function $\bfell(\bfg)$ is continuous in $\bfg$. Therefore, the reference point $\bfx^*(t)=\bfell(\bfg(t))$ is continuous in time, leading to a continuous Lyapunov function $\calV(\bfx, \bfx^*)$ and a continuous safety margin $\Delta E(t)$. As a result, the safety margin $\Delta E(t)$ cannot become negative without crossing $0$ from above at some time $T_0$. As $\Delta E(t) \downarrow 0$, the local safe zone shrinks to a point, i.e., $\LS(\bfx, \bfg) \downarrow \crl{\bfg}$. This immediately stops the the governor because $\lpg = \bfg(T_0) = \bfr(\sigma(T_0))$ and $\dot{\sigma}(T_0) = 0$. 

	As a result, Proposition~\ref{prop:safe_stab} states that $\bfx(t)$ stays within the invariant set $\calS_{c_2} \prl{\bfx^*(T_0)}$ for $t \geq T_0$ and converges to $\calS_{c_1} \prl{\bfx^*(T_0)}$ without leaving $\calF$. Eq. \eqref{eq:def_deltaE} shows that $\Delta E(t) = 0$ implies $c_1 \leq \calV(t) \leq c_2$. By Thm.~\ref{thm:robust_IDA_PBC}, as $\bfx(t)$ approaches $\bfx^*(T_0)$, we have $\dot{\calV}(T_0) < 0$, i.e., the Lyapunov function $\calV$ is decreasing. There exists $h > 0$ such that $\Delta E(T_0 + h)$ becomes strictly positive. Hence, the governor is able to move again towards a new $\lpg$ generated by the positive $\Delta E(T_0 + h)$. This process continues until the governor state $\bfg(t)$ converges to $\bfr(1)$, the closed-loop system converges to the region  $\calS_{c_1} \prl{\bfell(\bfr(1))}$ and the position $\bfp(t)$ satisfies the uniform ultimate bound in \eqref{eq:uub} around $\bfr(1)$.
	%the goal region $\calS_{c_1}(\bfell(1))$, which  i.e.,  $\calB(\bfr(1), \sqrt{\frac{k_2k_\gamma}{k_1k_3 k_{\bfp}^2}} \delta_\bfd)$.
\end{proof}

\NEWTD{Note that while our control design does not account for state estimation errors, e.g. from an odometry algorithm with a sensor setup (e.g. stereo camera, LiDAR, or visual-inertial), we can conservatively handle the errors by reducing the obstacle distance $\bar{d}$ in the safety margin specification in \eqref{eq:sufficient-clearance}.}

%\section{Extension to $\bbR^n$ state space and underactuated systems}
\section{Application to Hamiltonian Dynamics in $\bbR^n$}
\label{sec:extension_Rn_underactuated}
\Revised{
%\subsection{Extension to $\bbR^n$ state space}
% constant generalized mass matrix
% . The associated port-controlled Hamiltonian model for $\bbR^n$ is:
In this section, we show that our control design can be easily modified and applied to a Hamiltonian system with configuration $\frakq$ in $\bbR^n$ and dynamics:
\begin{equation} \label{eq:sys_pch_Rn}
\begin{bmatrix}
\dot{\frakq} \\
\dot{\frakp} \\
\end{bmatrix} = \begin{bmatrix}
\bfzero & \bfI_n \\
-\bfI_n & \bfzero
\end{bmatrix} 
\begin{bmatrix}
\nabla_{\frakq} \calH(\frakq,\frakp) \\
\nabla_{\frakp} \calH(\frakq,\frakp)
\end{bmatrix}
+ 
\begin{bmatrix}
\bfzero \\
\bfB(\frakq)
\end{bmatrix} \bfu  + 
\begin{bmatrix}
\bfzero \\
\bfd_1
\end{bmatrix}
\end{equation}
where the Hamiltonian $\calH(\frakq, \frakp)$ is defined as:
\begin{equation}
\label{eq:hamiltonian_Rn}
\mathcal{H} (\frakq, \frakp) =  \frac{1}{2} \frakp^\top\bfM^{-1}(\frakq) \frakp + \calU(\frakq).
\end{equation}
%  \frac{1}{2} k_\bfp \norm{\frakq}^2 + \frac{1}{2} \frakp^\top\bfM^{-1}(\frakq) \frakp

Given a desired regulation point $\bfx^* = (\frakq^*,\frakp^*)$ with momentum $\frakp^* = 0$, define the error state $\bfx_e = (\frakq_e,\frakp_e)$ as:
\begin{equation}\label{eq:error_state_Rn}
\frakq_e = \frakq - \frakq^*, \qquad \frakp_e = \frakp-\frakp^*.
\end{equation}
A desired Hamiltonian, minimized at $\bfx = \bfx^*$, is:
\begin{equation}
\label{eq:desired_hamiltonian_Rn}
\mathcal{H}_d (\bfx, \bfx^*) =  \frac{1}{2} \frakp_e^\top\bfM^{-1}(\frakq_e) \frakp_e + \frac{k_\bfp}{2} \norm{\frakq_e}^2.
%+ \frac{1}{2} k_{\bfR}\tr(\bfI - \bfR^{*\top}\bfR) \\+ 
\end{equation}
The IDA-PBC controller:
\begin{equation}
    \label{eq:controller_IDA_PBC_Rn}
    \bfu = \bfpi(\bfx, \bfx^*) = \bfB^{\dagger}(\frakq) \bfb(\bfx, \bfx^*)
\end{equation} 
with $\scaleMathLine[0.93]{\bfb(\bfx, \bfx^*) = \nabla_{\frakq} \calH(\bfx) - \nabla_{\frakq_e} \calH_d(\bfx, \bfx^*) - \bfK_\bfd \nabla_{\frakp_e}\calH_d(\bfx, \bfx^*)}$ achieves the closed-loop dynamics:
\begin{equation}
\label{eq:sys_pch_des_Rn}
\begin{bmatrix}
\dot{\mathbf\frakq}_e \\
\dot{\mathbf\frakp}_e \\
\end{bmatrix}
= \begin{bmatrix}
\bfzero & \bfI_n \\
-\bfI_n & -\bfK_d
\end{bmatrix}
\begin{bmatrix}
\nabla_{\frakq_e} \calH_d(\bfx,\bfx^*) \\
\nabla_{\frakp_e} \calH_d(\bfx,\bfx^*)
\end{bmatrix} + \begin{bmatrix} \bfzero \\ \bfd \end{bmatrix},
\end{equation}
where $\bfd = \bfd_1 + \bfd_2$ as in \eqref{eq:total-disturbance} and $\bfd_2$ is as in \eqref{eq:disturbance_d_2}.

\begin{figure*}[t]
\centering
\begin{subfigure}
        \centering
        \includegraphics[height=29mm]{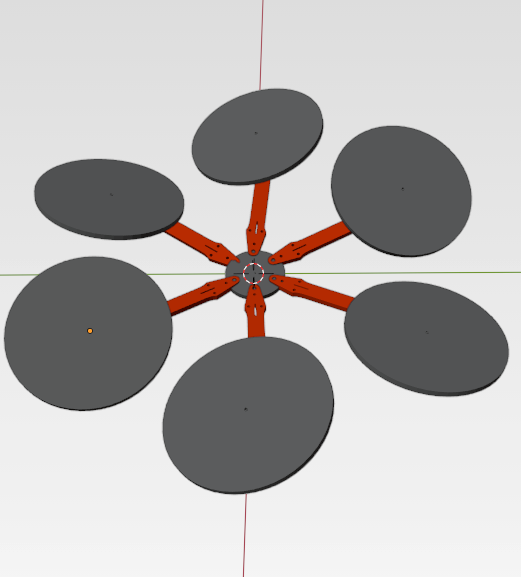}
        %\caption{Loss (log scale)}
        \label{fig:hexarotor}
\end{subfigure}%
\hfill
\begin{subfigure}
        \centering
        \includegraphics[height=30mm]{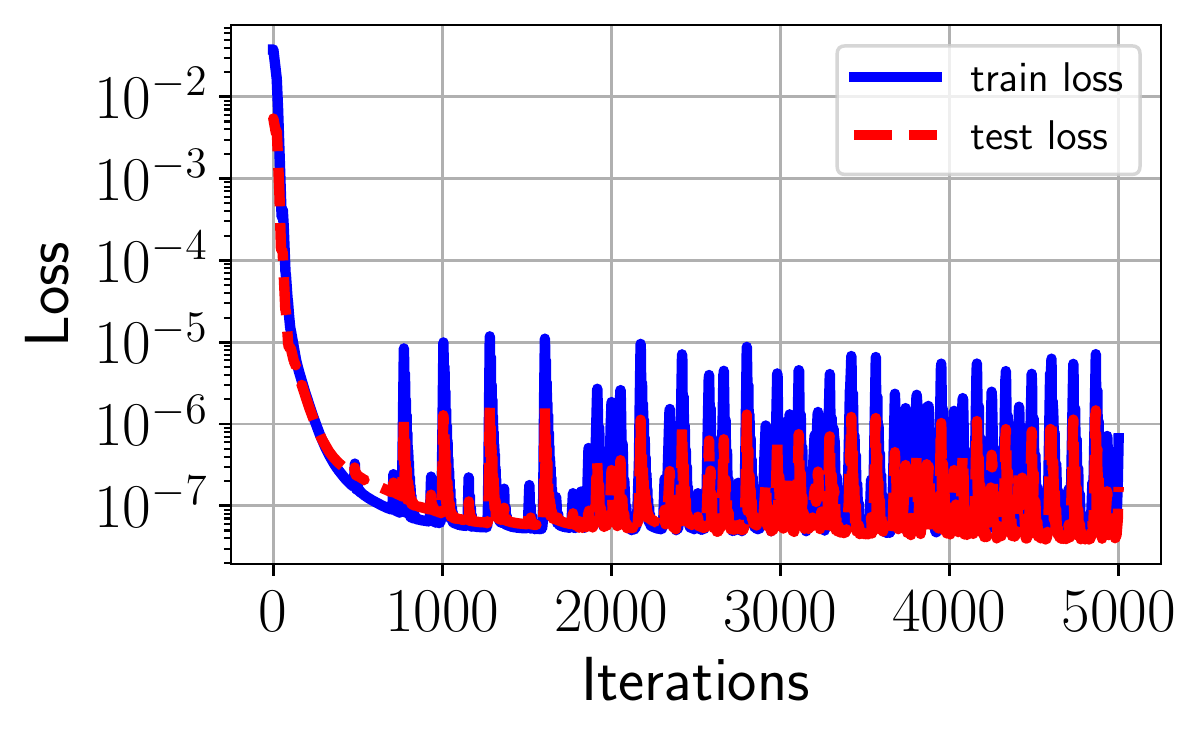}
        %\caption{$\bfM^{-1}(q)/\beta$ versus $\varphi$.}
        \label{fig:loss_log}
\end{subfigure}%
\hfill%
\begin{subfigure}
        \centering
        \includegraphics[height=31mm]{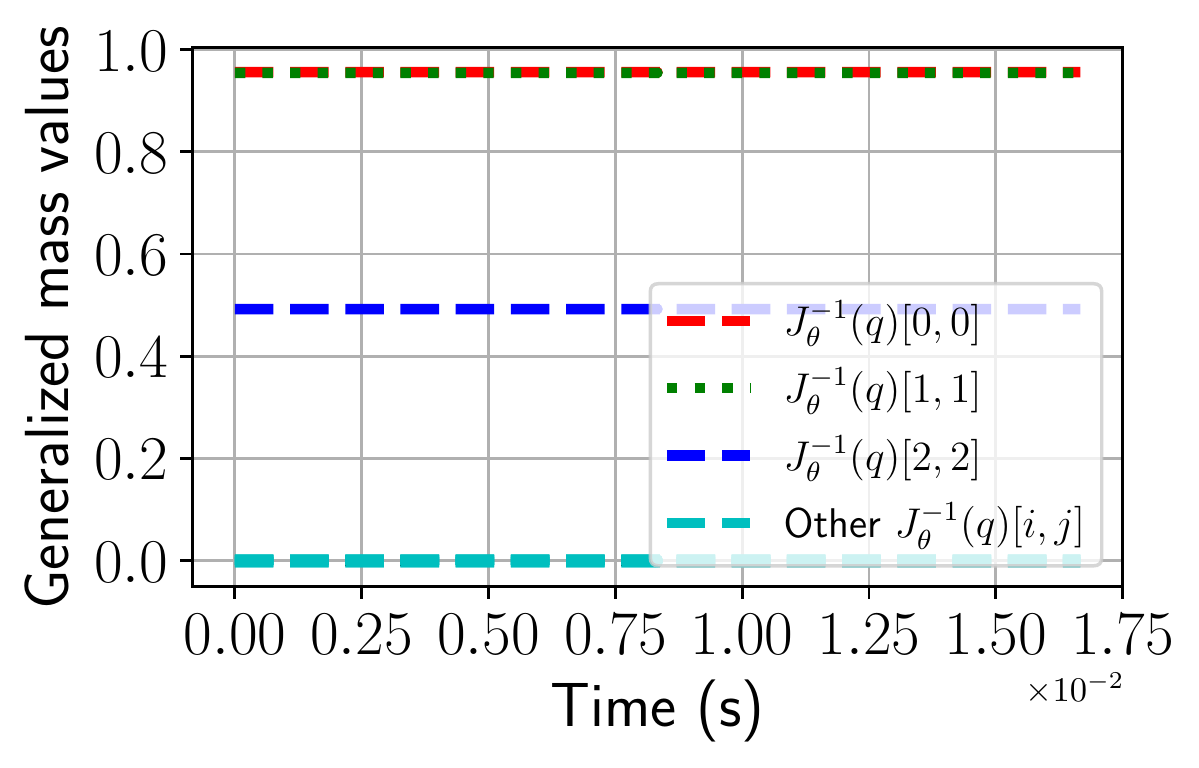}
        %\caption{Loss (log scale)}
        \label{fig:learned_inertia}
\end{subfigure}%
\hfill%
\begin{subfigure}
        \centering
        \includegraphics[height=31mm]{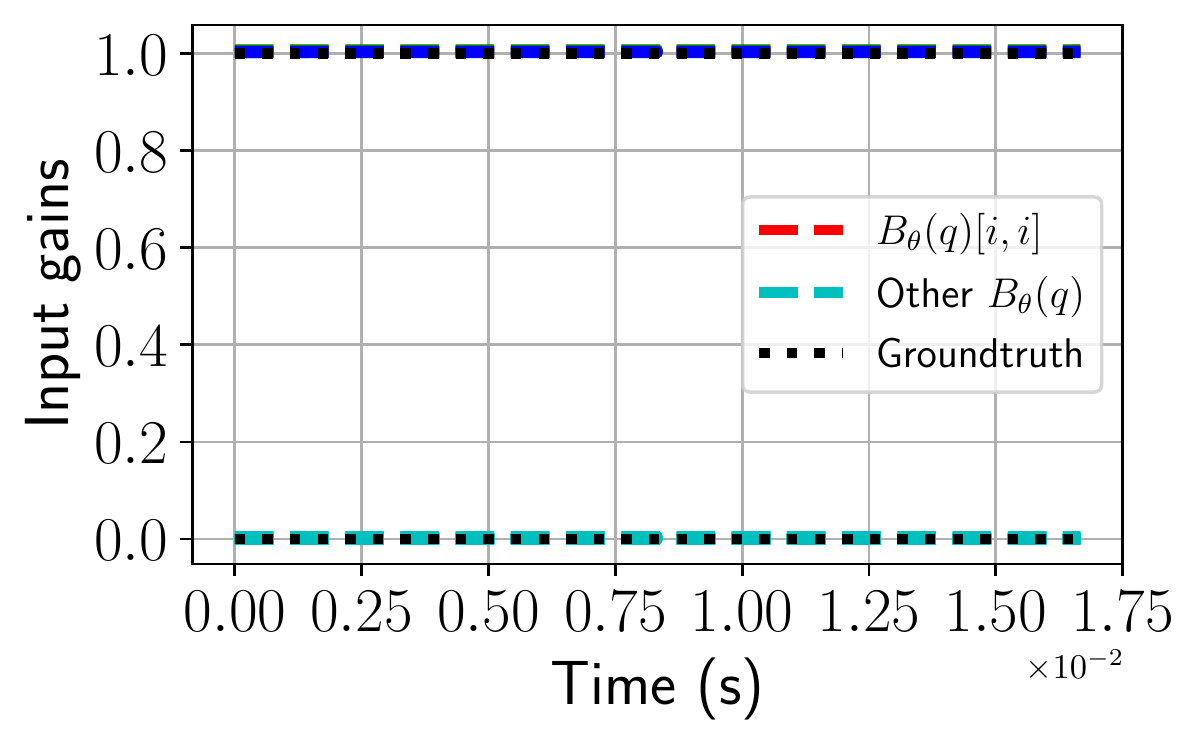}
        %\caption{$\bfM^{-1}(q)/\beta$ versus $\varphi$.}
        \label{fig:learned_g}
\end{subfigure}%
\caption{\NEWTD{$SE(3)$ Hamiltonian neural ODE network (left to right): (a) simulated hexarotor for evaluation, (b) training loss, (c) learned inverse inertia $\bfJ_\bftheta(\frakq)^{-1}$, and (d) learned input matrix $\bfB_\bftheta(\frak{q})$ along a test trajectory, evaluated on the simulated hexarotor.}}
\label{fig:fa_drone_learning}
\end{figure*}

\begin{theorem} \label{thm:robust_IDA_PBC_Rn} 
Consider the Hamiltonian system in \eqref{eq:sys_pch_Rn} with desired regulation point $\bfx^* = (\frakq^*, \mathbf{0})$ and control law in \eqref{eq:controller_IDA_PBC_Rn} with parameters $k_\bfp$, $\bfK_\bfd$. Under Assumptions \ref{asp:bounded_matched_d} \& \ref{asp:constant_M}, the function:
\begin{equation} \label{eq:ISS_lyap_Rn}
\calV(\bfx, \bfx^*) =  \calH_d(\bfx, \bfx^*) + \rho \frac{d}{dt} \calU_d(\frakq_e)
\end{equation}
with $\calU_d(\frakq_e) = \frac{k_{\bfp}}{2}\|\frakq_e\|^2$ is an ISS-Lyapunov function \cite{sontag2008input} with respect to $\bfd$ and satisfies:
\begin{equation} \label{eq:V_bounds_Rn} 
\begin{aligned} 
k_1 \norm{\bfz}^2 \leq \calV(\bfx, \bfx^*)  &\leq k_2 \norm{\bfz}^2, \\ 
\dot{\calV}(\bfx, \bfx^*) &\leq -k_3 \norm{\bfz}^2 + k_\gamma \delta_{\bfd}^2,
\end{aligned}
\end{equation}
where $\bfz \coloneqq [k_{\bfp}\|\frakq_e\|\; \|\frakp_e\|]^\top \in \bbR^2$, $k_\gamma =\frac{1}{2 \lambda_{\min}(\bfK_\bfd)} +\frac{\rho \lambda_2^2}{2 \lambda_1}$, $\lambda_1 \coloneqq \lambda_{\min}(\bfM^{-1})$, $\lambda_2 \coloneqq \lambda_{\max}(\bfM^{-1})$, $k_1 = \frac{1}{2} \lambda_{\min}(\bfQ_1)$, $k_2 = \frac{1}{2} \lambda_{\max}(\bfQ_2)$, $k_3 = \frac{1}{2}\lambda_{\min}(\bfQ_3)$, and the associated matrices $\bfQ_1$, $\bfQ_2$, $\bfQ_3$ are defined as:
\begin{equation} \label{eq:Q-matrices_Rn}
\begin{aligned}
\bfQ_1 &= \begin{bmatrix} k_{\bfp}^{-1}, & - \rho \lambda_2 \\
- \rho \lambda_2 & \lambda_1 \end{bmatrix}, \quad
\bfQ_2 = \begin{bmatrix} k_{\bfp}^{-1},  & \rho \lambda_2\\
\rho \lambda_2 & \lambda_2 \end{bmatrix}, \\
\bfQ_3 &= \begin{bmatrix} \rho \lambda_1 & -\rho \gamma_d \lambda_2^2 \\ 
 -\rho \gamma_d \lambda_2^2 & \gamma_d \lambda_1^2  - 2\rho \lambda_2^2 k_\bfp \end{bmatrix}. \quad 
\end{aligned}
\end{equation}
% Denote the sub-level set of $\calV(\bfx, \bfx^*)$ with respect to positive scalar $c$ as:
% %
% \begin{equation}
% \calS_{c} \coloneqq \crl{\bfx \mid \calV(\bfx, \bfx^*) \leq c}.    
% \end{equation}
% %
Any initial state $\bfx$ converges exponentially to $\calS_{c_1} = \crl{\bfx | \calV(\bfx,\bfx^*) \leq c_1}$ with $c_1 \coloneqq \frac{k_2 k_\gamma}{k_3} \delta_{\bfd}^2$ and remains within. The error trajectory $\frakq_e(t)$ is uniformly ultimately bounded:
\begin{equation}\label{eq:uub_Rn}
\lim_{t \rightarrow \infty} \norm{\frakq_e(t)}^2 \leq \frac{c_1}{k_1 k_\bfp^2} = \frac{k_2k_\gamma}{k_1k_3 k_{\bfp}^2} \delta_\bfd^2.
\end{equation}
\end{theorem}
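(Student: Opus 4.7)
The plan is to mirror the proof of Theorem~1 but exploit the absence of the $SO(3)$ manifold, which eliminates the need for the domain $\calA$ and makes every step a direct Euclidean calculation. Since $\bfM$ is constant by Assumption~\ref{asp:constant_M}, the kinematic relation reduces to $\dot{\frakq}_e = \bfM^{-1}\frakp_e$, and the closed-loop error dynamics in \eqref{eq:sys_pch_des_Rn} become linear in $\frakq_e$ and $\frakp_e$ apart from the disturbance. This will let me carry out all estimates in the coordinates $\bfz = [k_{\bfp}\|\frakq_e\|,\;\|\frakp_e\|]^\top \in \bbR^2$.

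First, I would verify the quadratic bounds \eqref{eq:V_bounds_Rn}. Writing $\calV$ explicitly gives
\begin{equation*}
\calV(\bfx,\bfx^*) = \tfrac{1}{2}\frakp_e^\top \bfM^{-1}\frakp_e + \tfrac{1}{2k_\bfp}(k_\bfp\|\frakq_e\|)^2 + \rho\, k_\bfp \frakq_e^\top \bfM^{-1}\frakp_e,
\end{equation*}
where I used $\frac{d}{dt}\calU_d(\frakq_e) = k_\bfp\frakq_e^\top \bfM^{-1}\frakp_e$. Bounding $\frakp_e^\top\bfM^{-1}\frakp_e$ by $\lambda_1\|\frakp_e\|^2$ and $\lambda_2\|\frakp_e\|^2$, and the cross term by $\pm\rho\lambda_2 z_1 z_2$ via Cauchy–Schwarz, produces exactly the sandwich $\tfrac{1}{2}\bfz^\top\bfQ_1\bfz \le \calV \le \tfrac{1}{2}\bfz^\top\bfQ_2\bfz$. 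Positive definiteness of $\bfQ_1$ (and hence the lower bound $k_1\|\bfz\|^2$) requires choosing $\rho$ small enough that $\rho^2\lambda_2^2 < \lambda_1/k_\bfp$; I will state this as a standing choice of $\rho$.

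Next, I would compute $\dot{\calV}$ along \eqref{eq:sys_pch_des_Rn}. The term $\dot{\calH}_d$ simplifies, as in the Euclidean IDA-PBC computation, to $-\frakp_e^\top \bfM^{-1}\bfK_\bfd\bfM^{-1}\frakp_e + \frakp_e^\top \bfM^{-1}\bfd$. Differentiating $\frac{d}{dt}\calU_d$ once more yields
\begin{equation*}
\tfrac{d^2}{dt^2}\calU_d = k_\bfp\|\bfM^{-1}\frakp_e\|^2 - k_\bfp^2\frakq_e^\top\bfM^{-1}\frakq_e - k_\bfp\frakq_e^\top\bfM^{-1}\bfK_\bfd\bfM^{-1}\frakp_e + k_\bfp\frakq_e^\top\bfM^{-1}\bfd.
\end{equation*}
Combining, using $\lambda_{\min}(\bfM^{-1}\bfK_\bfd\bfM^{-1})\ge \lambda_{\min}(\bfK_\bfd)\lambda_1^2$, Cauchy–Schwarz on the indefinite cross term $-\rho k_\bfp\frakq_e^\top\bfM^{-1}\bfK_\bfd\bfM^{-1}\frakp_e$, and Young's inequality on the two disturbance cross terms $\frakp_e^\top\bfM^{-1}\bfd$ and $\rho k_\bfp\frakq_e^\top\bfM^{-1}\bfd$ with weights chosen to produce the coefficients $\tfrac{1}{2\lambda_{\min}(\bfK_\bfd)}$ and $\tfrac{\rho\lambda_2^2}{2\lambda_1}$ respectively, delivers $\dot{\calV} \le -\tfrac{1}{2}\bfz^\top\bfQ_3\bfz + k_\gamma\delta_\bfd^2$ with $k_\gamma$ exactly as stated. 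Positive definiteness of $\bfQ_3$ again constrains $\rho$ from above.

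Finally, the ISS argument is standard: combining $\calV \le k_2\|\bfz\|^2$ with $\dot{\calV} \le -k_3\|\bfz\|^2 + k_\gamma\delta_\bfd^2$ gives $\dot{\calV} \le -\tfrac{k_3}{k_2}\calV + k_\gamma\delta_\bfd^2$, so by the comparison lemma $\calV(t)$ converges exponentially into $\calS_{c_1}$ with $c_1 = k_2 k_\gamma\delta_\bfd^2/k_3$ and stays there. Inside $\calS_{c_1}$, the lower bound $k_1\|\bfz\|^2 \le \calV$ together with $\|\bfz\|^2 \ge k_\bfp^2\|\frakq_e\|^2$ yields the ultimate bound \eqref{eq:uub_Rn}. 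I expect the main obstacle to be the bookkeeping in bounding $\dot{\calV}$: the choice of Young's weights and the constraint on $\rho$ that simultaneously makes both $\bfQ_1$ and $\bfQ_3$ positive definite must be handled consistently with the closed-form entries of $\bfQ_3$ stated in \eqref{eq:Q-matrices_Rn}; everything else is notably simpler than the $SE(3)$ proof because the error $\bfe(\bfx,\bfx^*)$ reduces to $k_\bfp\frakq_e$ and no rotational log-map estimates are needed.
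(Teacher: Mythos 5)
Your proposal is correct and follows essentially the same route as the paper, which itself states that Thm.~\ref{thm:robust_IDA_PBC_Rn} is proved by repeating the steps of the Thm.~\ref{thm:robust_IDA_PBC} proof: the same Lyapunov candidate, the same $\bfQ_1$, $\bfQ_2$ sandwich, the same Young's-inequality treatment of the disturbance cross terms yielding $k_\gamma$ and $\bfQ_3$, and the standard comparison argument for the ultimate bound. Your observation that the $SO(3)$ bookkeeping (the domain $\calA$, the $\hat{\bfomega}$ term, and the rotation-error estimates) disappears, making the result global with $\bfe$ reduced to $k_\bfp\frakq_e$, is exactly the intended simplification.
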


The proof of Thm.~\ref{thm:robust_IDA_PBC_Rn} follows the same steps as that of Thm.~\ref{thm:robust_IDA_PBC}, and is omitted due to space limitations. In contrast to Thm.~\ref{thm:robust_IDA_PBC}, the result in Thm.~\ref{thm:robust_IDA_PBC_Rn} for $\bbR^n$ holds globally, i.e., the region of attraction is $\calA = \bbR^n \times \bbR^n$. Thus, the disturbance magnitude bound $\delta_{\bfd}$ can be arbitrarily large.

The safety analysis in Sec.~\ref{sec:ham_control}.\ref{sec:safety_analysis} can be modified with a new safety margin:
\begin{multline} \label{eq:def_deltaE_Rn}
 \Delta E(\bfx, \bfx^*) \coloneqq k_1 k_\bfp^2 \bar{d}{\,}^2 \!\prl{\frakq^*, \calO}  - \calV(\bfx, \bfx^*) \\
 + \max \crl{c_1 - \calV(\bfx, \bfx^*), 0},
\end{multline}
as Thm.~\ref{thm:robust_IDA_PBC_Rn} holds globally. The reference governor lifting function can be chosen as $\ell(\bfg) = [\bfg^\top\;\bf0^\top]^\top$. The governor state update remains the same as in \eqref{eq:gov_ctrl}. The robustness analysis extends the safe tracking results in \cite{Li_SafeControl_flatSys} and \cite{RG_Omur_ICRA17}.
}

\begin{figure*}[t]
\centering
\begin{subfigure}
        \centering
        \includegraphics[width=0.30\linewidth]{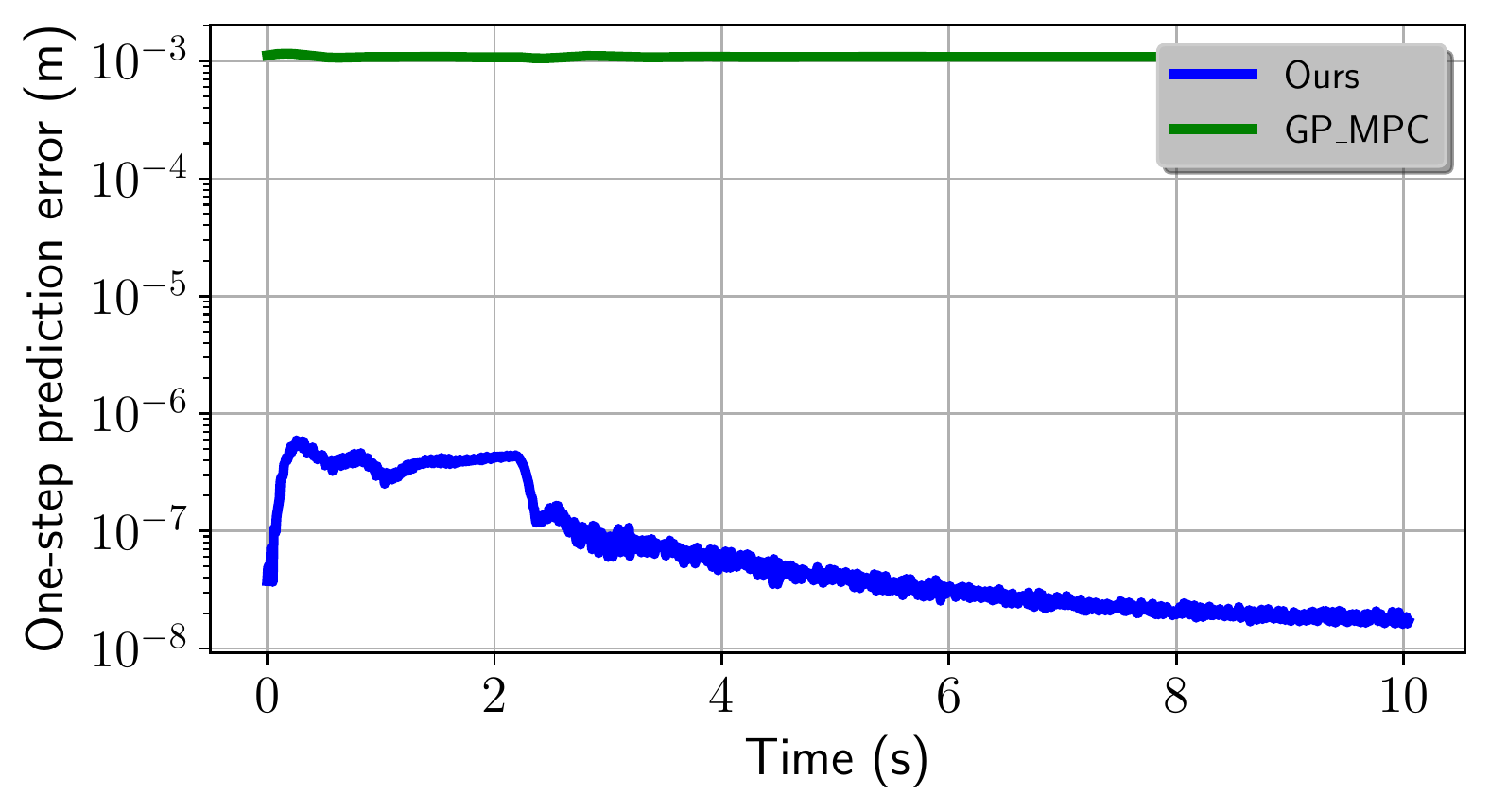}
        %\caption{Loss (log scale)}
        \label{fig:predictionerror_compare}
\end{subfigure}%
\hfill
\begin{subfigure}
        \centering
        \includegraphics[width=0.30\linewidth]{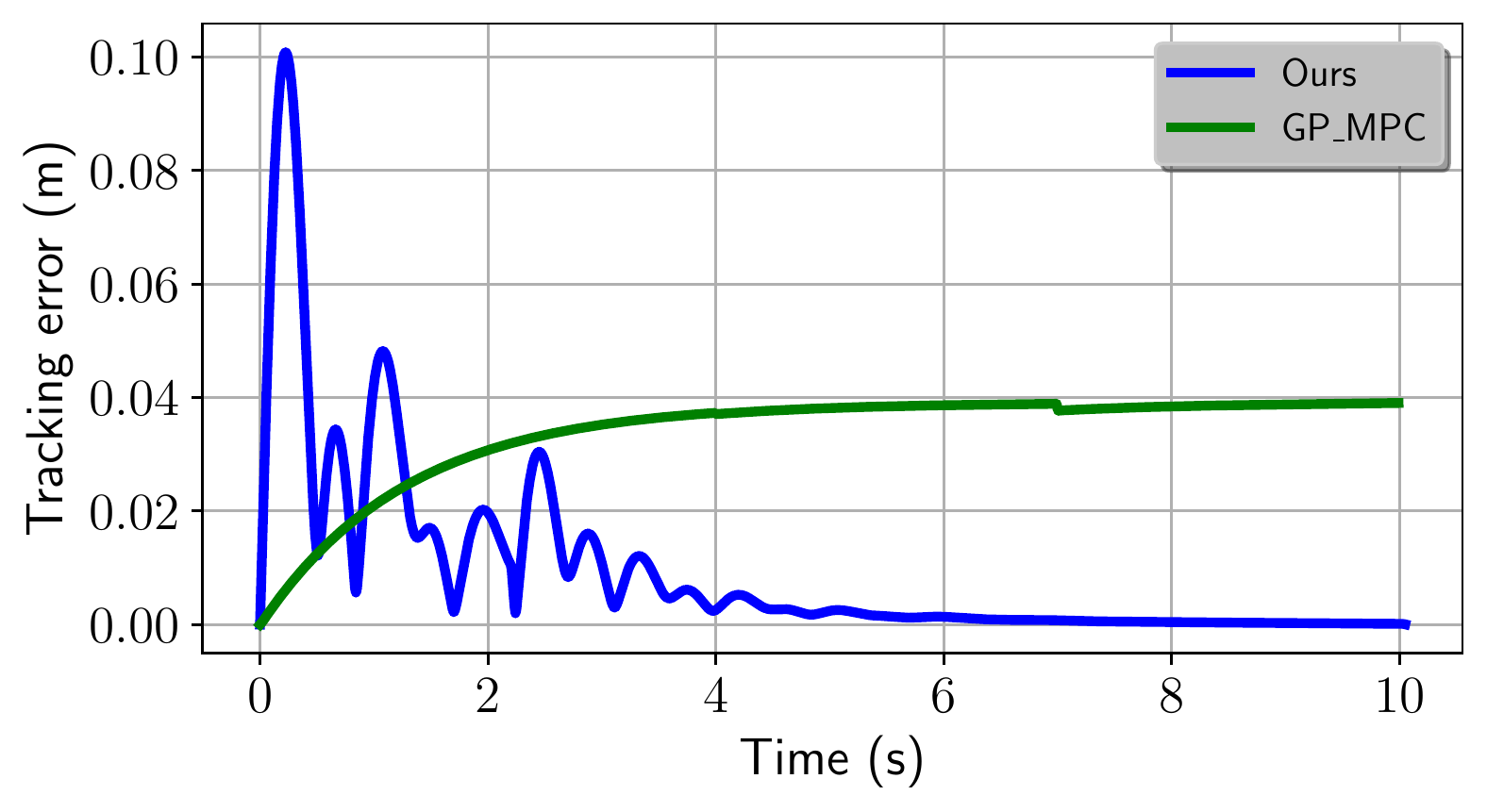}
        %\caption{$\bfM^{-1}(q)/\beta$ versus $\varphi$.}
        \label{fig:trackingerror_compare}
\end{subfigure}%
\hfill
\begin{subfigure}
        \centering
        \includegraphics[width=0.30\linewidth]{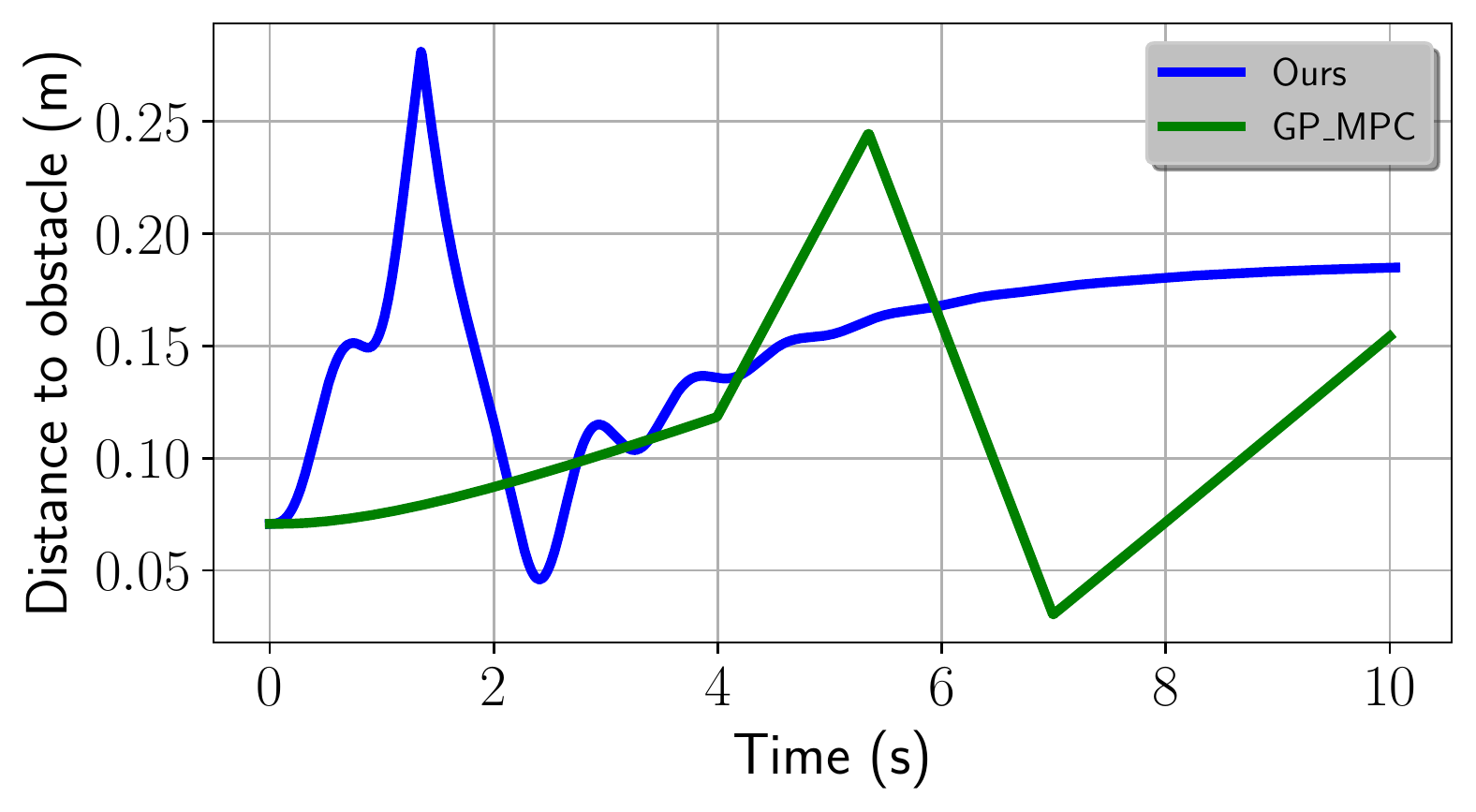}
        %\caption{Loss (log scale)}
        \label{fig:dist2obs_compare}
\end{subfigure}%
\caption{\NEWTD{Comparison of position prediction errors (left) between the learned Hamiltonian dynamics and GP model, tracking errors (middle) and distance to obstacles (right) between our safe tracking controller and GP-MPC.}}
\label{fig:gpmpc_compare_error}
\end{figure*}
\begin{figure}[t]
\centering
\begin{subfigure}
        \centering
        \includegraphics[width=0.49\linewidth]{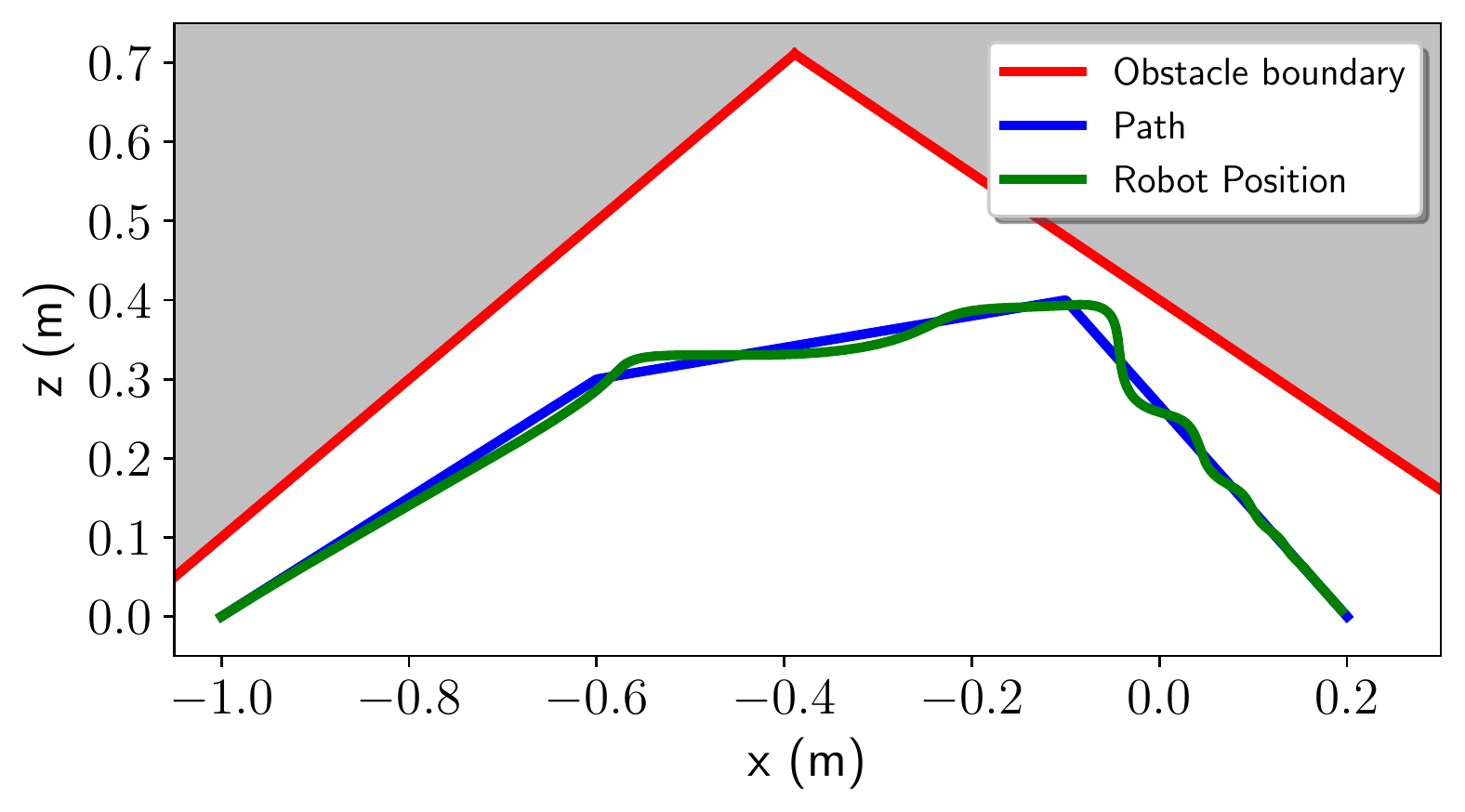}
        %\caption{Loss (log scale)}
        \label{fig:rgham_2d}
\end{subfigure}%
\begin{subfigure}
        \centering
        \includegraphics[width=0.49\linewidth]{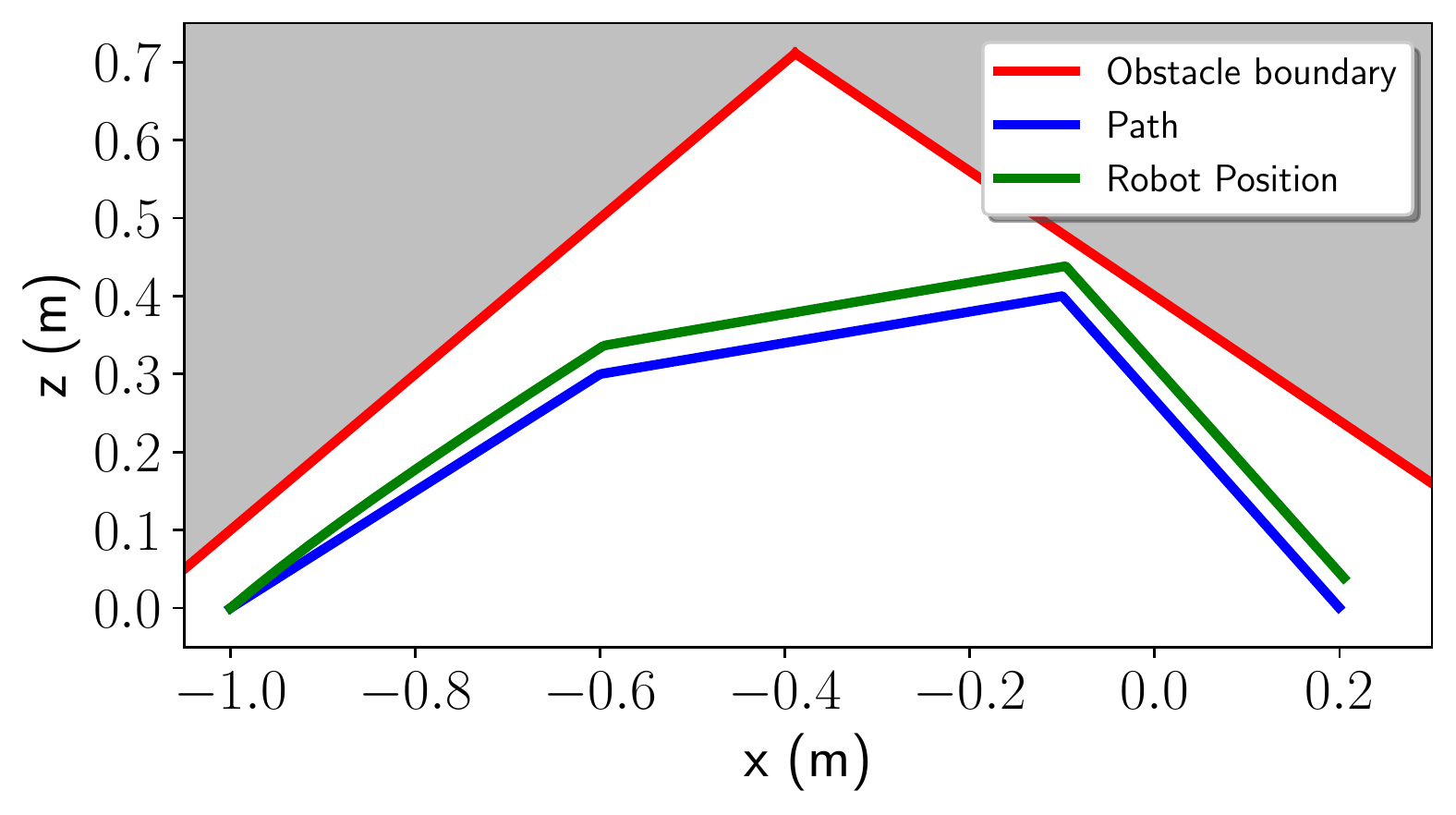}
        %\caption{$\bfM^{-1}(q)/\beta$ versus $\varphi$.}
        \label{fig:gpmpc_2d}
\end{subfigure}%
\caption{\NEWTD{Path tracking with our approach (left) and GP-MPC \cite{hewing2019cautious} (right).}}
\label{fig:gpmpc_compare}
\end{figure}

\section{Evaluation}
\label{sec:evaluation}

\NEWZL{
We evaluate our robust and safe tracking controller using simulated hexarotor \NEWZL{and quadrotor} robots in 2D and 3D environments with ground-truth mass $m = 6.77$ kg, and inertia matrix $\bfJ = \text{diag}([1.05, 1.05, 2.05])$ 
kg$\,\cdot\,$m$^2$ 
, inspired by the solar-powered UAV in \cite{kingry2018design}. The robot's ground-truth dynamics satisfy Hamilton's equations \eqref{eq:Ham_sys_pf_nlin} with generalized mass $\bfM(\mathbf\frakq) = \diag(m\bfI, \bfJ)$, potential energy 
$\calU(\frakq) = mg\begin{bmatrix}
0 & 0 & 1
\end{bmatrix} \bfp$, 
where $\bfp$ is the position and $g\approx 9.8ms^{-2}$ is the gravitational acceleration. The input matrices for the hexarotor and the quadrotor are $\bfB(\frakq)= \bfI$ and $\bfB(\frakq) = \begin{bmatrix} \bf0_{4\times 2} & \bfI_{4\times 4} \end{bmatrix}^\top$, respectively. The control input $\bfu$ of the hexarotor includes a 3D force and a 3D torque while that of the quadrotor includes a scalar force and a 3D torque.

For all experiments, the following control gains are used for our controller in Sec. \ref{sec:ham_control}.\ref{subsec:ham_control}: $k_\bfp = 20$, $k_\bfR = 50$, $\bfK_d = 15\bfI$ in \eqref{eq:desired_hamiltonian}. The parameters shown in Thm.~\ref{thm:robust_IDA_PBC} are: $\alpha=2$, $\beta=20$, $c_1 = 2.2050$, $c_2 = 8.8200$, $\rho = 3.5822 \times 10^{-5}$. The control gain for the governor in \eqref{eq:gov_ctrl} is $k_g = 0.5$. 
The control loop frequency for all experiments is at $120$ Hz.
}

\NEWZL{While, our evaluation focuses on rotorcraft aerial robots, the methodology for system identification and control synthesis proposed in this paper is general. The exact same approach is applied to hexarotor, quadrotor and other ground and marine vehicles. This is in contrast with other system identification and control synthesis methods, which require knowledge of the dynamics structure, careful experiment design, and domain expertise for the particular system.}

\subsection{Evaluation of $SE(3)$ Hamiltonian dynamics learning}
\label{subsec:ham_dyn_learning_evaluation}

We consider a simulated hexarotor unmanned aerial vehicle (UAV) (Fig. \ref{fig:fa_drone_learning}) with fixed-tilt rotors pointing in different directions \cite{hexarotor} and \Revised{a simulated quadrotor UAV}.
%
% %
Since the mass $m$ of the UAVs can be easily measured, we assume the mass $m$ is known,
leading to a known potential energy $\calU(\frakq) = mg\begin{bmatrix}
0 & 0 & 1
\end{bmatrix} \bfp$. 
We approximate the inverse generalized mass matrix by $\bfM^{-1}_\bftheta(\mathbf\frakq) =\diag(m^{-1}\bfI, \bfJ_{\bftheta}^{-1}(\mathbf\frakq))$ and learn $\bfJ_{\bftheta}(\mathbf\frakq)^{-1}$ and $\bfB_\bftheta(\frakq)$ from data. 
%the inverse inertia matrix   the input matrix

We mimic manual flights in an area free of obstacles using a PID controller and drive the UAVs from a random initial pose to $18$ desired poses, generating $18$ $1$-second trajectories. We shift the trajectories to start from the origin and create a dataset $\mathcal{D} = \{t_{0:N}^{(i)},\mathbf\frakq_{0:N}^{(i)}, \bfzeta_{0:N}^{(i)}, \bfu^{(i)}_{0:N-1})\}_{i=1}^D$ with $N = 24$ and $D = 18$. The Hamiltonian-based neural ODE network is trained with the dataset $\calD$, as described in Sec. \ref{subsec:ham_dyn_learning}, for 5000 iterations and learning rate $10^{-4}$. For the hexarotor, Fig. \ref{fig:fa_drone_learning}(c) shows the loss function during training. Note that if we scale $\bfM_\bftheta(\frakq)$ and the input matrix $\bfB(\frakq)$ by a constant $\gamma$, the dynamics of $(\frakq, \bfzeta)$ in \eqref{eq:Ham_sys_pf_nlin} and \eqref{eq:hamiltonian_zetadot} does not change. Fig. \ref{fig:fa_drone_learning}(d) and \ref{fig:fa_drone_learning}(e) plot the scaled version of the learned inverse mass $\bfJ_\bftheta(\frakq)^{-1}$ and the input matrix $\bfB_\bftheta(\frakq)$, converging to the constant ground truth values. \NEWZL{We achieve similar results for the quadrotor using the same training process.}

%\footnote{Code: \url{https://thaipduong.github.io/SE3HamDL/}}

\subsection{Evaluation of robust safe tracking control of a learned 2D hexarotor Hamiltonian model}

Next, we compare our approach with a GP-MPC technique \cite{hewing2019cautious} using a simulated 2D fully-actuated hexarotor UAV, moving on the xz-plane with position $\bfp = \begin{bmatrix} x, 0, z \end{bmatrix}$ and orientation $\bfR = \bfR_\psi$ determined by the pitch angle $\psi$. The control input is a 3D wrench, including a 2D force and a 1D torque. 

As we only consider the pitch angle $\psi$, we are interested in the inertia value $J^{yy}$ and ignore $J^{xx}$ and $J^{zz}$. We assume that the generalized mass  $m$ and $J^{yy}$ are unknown for the 2D hexarotor and approximated by $m_\bftheta$ and $J^{yy}_\bftheta$, respectively. 
% The potential energy has the form of $m_\bftheta gz, g = 9.8 m/s^2$ and 
The input gain $\bfB(\mathbf\frakq)$ is assumed known. 

Let $m_0 = 1.5m$ and $J^{yy}_0 = 1.5J^{yy}$ be initial guesses of the mass $m$ and the inertia $J^{yy}$. We model the approximated mass inverse $m^{-1}_\bftheta$ and inertia inverse ${J^{yy}_\bftheta}^{-1}$ as: 

\begin{equation*}
\scaleMathLine{
    m^{-1}_\bftheta = \prl{\sqrt{m_0^{-1}} + L_1(\frakq;\bftheta)}^2,
    {J^{yy}_\bftheta}^{-1} = \prl{\sqrt{{J^{yy}_0}^{-1}} + L_2(\frakq;\bftheta)}^2,}
\end{equation*}
where $L_1(\frakq;\bftheta)$ and $L_2(\frakq;\bftheta)$ are two neural networks, representing the residual mass inverse and inertia inverse to be learned. In GP-MPC \cite{hewing2019cautious}, the dynamics \eqref{eq:Ham_sys_pf_nlin} are split into a prior nominal model with the prior mass $m_0$ and~inertia~$J^{yy}_0$, and residual dynamics, modeled by a GP regression model.% and learned from data.
%$\bff_0(\bfx) + \bfG_0(\bfx) \bfu$,
%$\bff_{GP}(\bfx) + \bfG_{GP}(\bfx) \bfu$, 

To collect training data, we place the simulated hexarotor at an initial location $(x,z) = (-1,0)$ and apply random control inputs to obtain $\mathcal{D} = \{t_{0:1}^{(i)},\mathbf\frakq_{0:1}^{(i)}, \bfzeta_{0:1}^{(i)}, \bfu^{(i)}_0\}_{i=1}^{150}$. Our Hamiltonian neural ODE network is trained with the dataset $\calD$, as described in Sec.~\ref{subsec:ham_dyn_learning}. For GP-MPC, the same dataset $\mathcal{D}$ is used to train a GP regression model of the residual dynamics as described in \cite{hewing2019cautious} and implemented in \cite{brunke2021safe}.

We assume there are two walls in the environments, generating two safety constraints on the robot position: $-x + z < 1.1,\; 0.8x + z < 0.4$.
% \begin{equation}
%     \begin{aligned}
%     -x + z < 1.1, \quad 0.8x + z < 0.4.
%     \end{aligned}
% \end{equation}
The task is to track a predefined piecewise linear path $\bfr$, shown in Fig. \ref{fig:gpmpc_compare}, while safely avoiding collision with the walls. We adapt the GP-MPC implementation by \cite{brunke2021safe} for the 2D hexarotor and enforce the safety constraints probabilistically with $95\%$ confidence interval using the GP model uncertainty. To propagate the model uncertainty through \NEWTD{a horizon of $10$ time steps}, we linearize the dynamics model around the hovering state and propagate the state mean and covariance using the mean equivalence technique \cite{hewing2019cautious,brunke2021safe} \NEWTD{with a time step of $1/120$ s}. Meanwhile, our learned Hamiltonian neural ODE model is used with the safe tracking controller described in Sec. \ref{sec:tracking} to perform the task and enforce safety constraints.

Fig. \ref{fig:gpmpc_compare_error} compares the prediction errors of our learned neural ODE network and the GP model. We collect the robot states and control inputs, generated by our controller while tracking the path, and predict the next state. Fig.~\ref{fig:gpmpc_compare_error}~(left)~plots~the prediction error over time, showing that we achieve better prediction than the trained GP model. This reflects the difference between our model, which encodes the Hamiltonian structure and translation equivariance in the network architecture, and the GP model, which incurs higher model uncertainty in locations far from the data points.

Fig. \ref{fig:gpmpc_compare_error} and \ref{fig:gpmpc_compare} show tracking performance of our approach and GP-MPC. We compare the tracking error of both methods, calculated as the distance from the robot position to the reference point, specified by the governor in our approach and by time parameterization of the path in GP-MPC: $\bfp^*(t) = \bfr(\min(t,10)/10)$, %. Specifically, the reference position for the GP-MPC method at time $t$ is generated as:
%
% \[
% \bfp^*(t) = \begin{cases}\bfr(t/10) & \text{for } 0\leq t \leq 10,\\ \bfr(1),& \text{for } t > 10 \end{cases},
% \]
%
i.e., the GP-MPC method finishes the task in about $10$ seconds, similar to the tracking time of our approach. Our controller is able to track the path more accurately than GP-MPC, illustrated qualitatively in Fig. \ref{fig:gpmpc_compare} and quantitatively in Fig. \ref{fig:gpmpc_compare_error} (middle). This can be explained by the higher predictions errors shown in Fig. \ref{fig:gpmpc_compare_error} (left), which grow quickly after multiple time steps due to uncertainty propagation. Both our safe tracking controller with learned Hamiltonian dynamics and the GP-MPC safe controller keep the hexarotor in the safe region, i.e., the distance to the obstacles is always positive in Fig. \ref{fig:gpmpc_compare_error} (right).

\begin{figure*}[t]
\begin{minipage}{0.45\textwidth}
\centering
        \includegraphics[height=45mm,trim=10mm 10mm 10mm 10mm, clip]{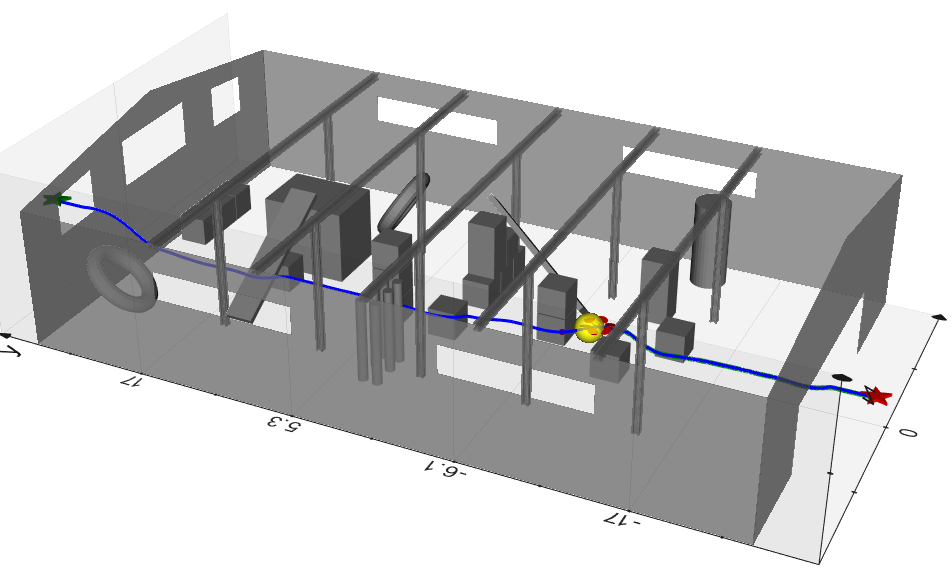}
        %\caption{}
        % \label{fig:net_arch}
\end{minipage}
\hfill
% \begin{subfigure}
%         \centering
%         \includegraphics[width=0.46\linewidth,trim=10mm 10mm 10mm 10mm, clip]{}
%         %\caption{}
%         % \label{fig:net_arch}
% \end{subfigure}%
% \\
\begin{minipage}{0.45\textwidth}
\centering
         \includegraphics[width=0.93\linewidth]{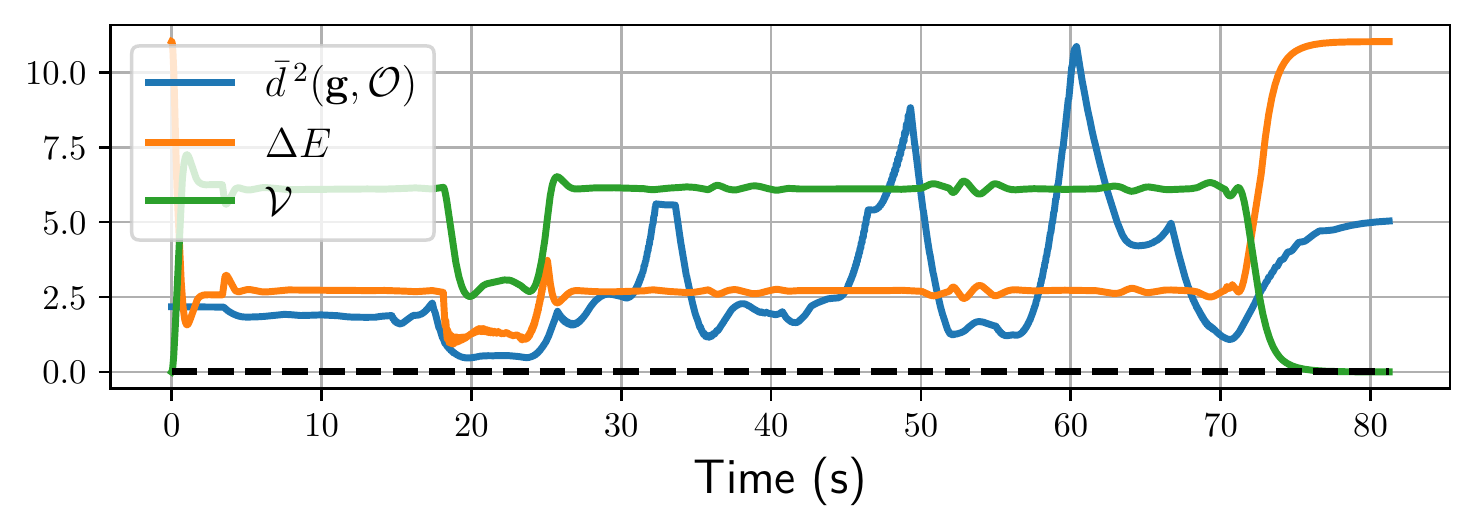}
         \includegraphics[width=0.95\linewidth]{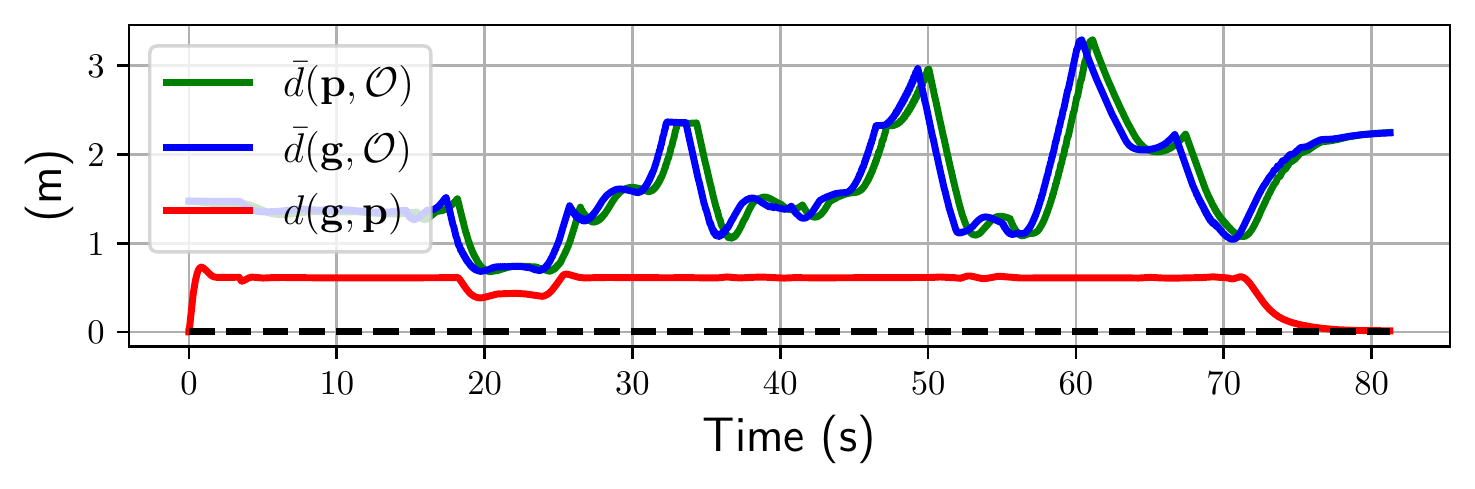}
\end{minipage}
\caption{\NEWZL{Safe navigation of a hexarotor system using learned model in a warehouse (left). The hexarotor (red body) navigates from a start (red star) to a goal location (green star) while avoiding obstacles. The obstacles are sensed by a simulated LiDAR sensor. The reference path, the robot path are shown in blue and green, respectively. Local safe set is shown in yellow sphere.
The right plots show the dynamic safety margin $\Delta E$, the Lyapunov function $\calV$, and the distance to the obstacles $\bar{d}(\bfp(t), \calO)$, indicating that the safety constraints are never violated.}}
% The robot is in red color and scaled up $5$ times visibility. 
\label{fig:warehouse3d_sim}
\end{figure*}

\begin{figure}[t]
\centering
        \centering
        \includegraphics[width=0.95\linewidth]{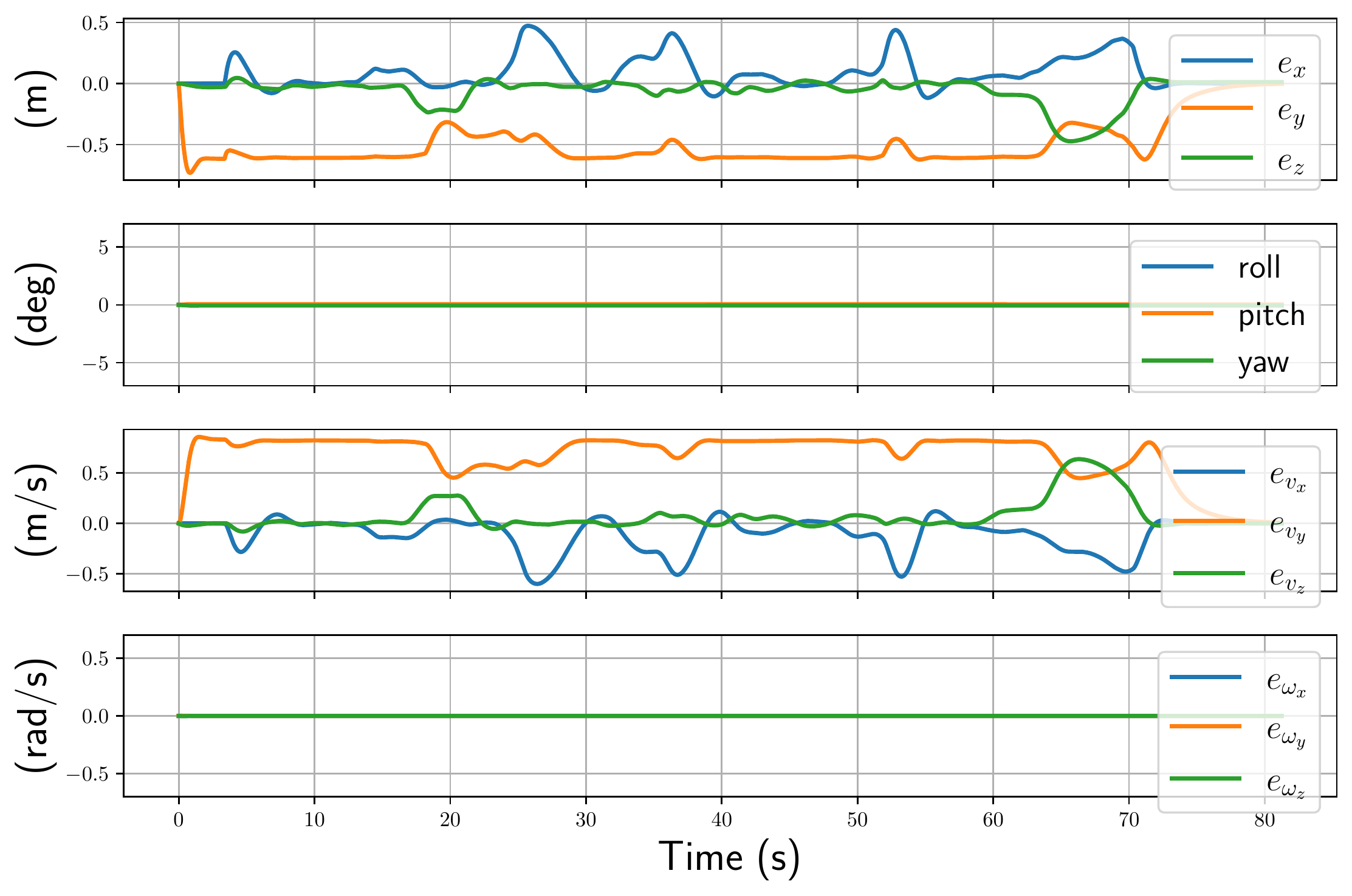}
\caption{\Revised{Tracking error of a hexarotor system (top to bottom): position, velocity, angle and angular velocity errors.}}
\label{fig:warehouse3d_tracking_err}
\end{figure}

\begin{figure}[t]
\centering
        \centering
        \includegraphics[width=0.95\linewidth]{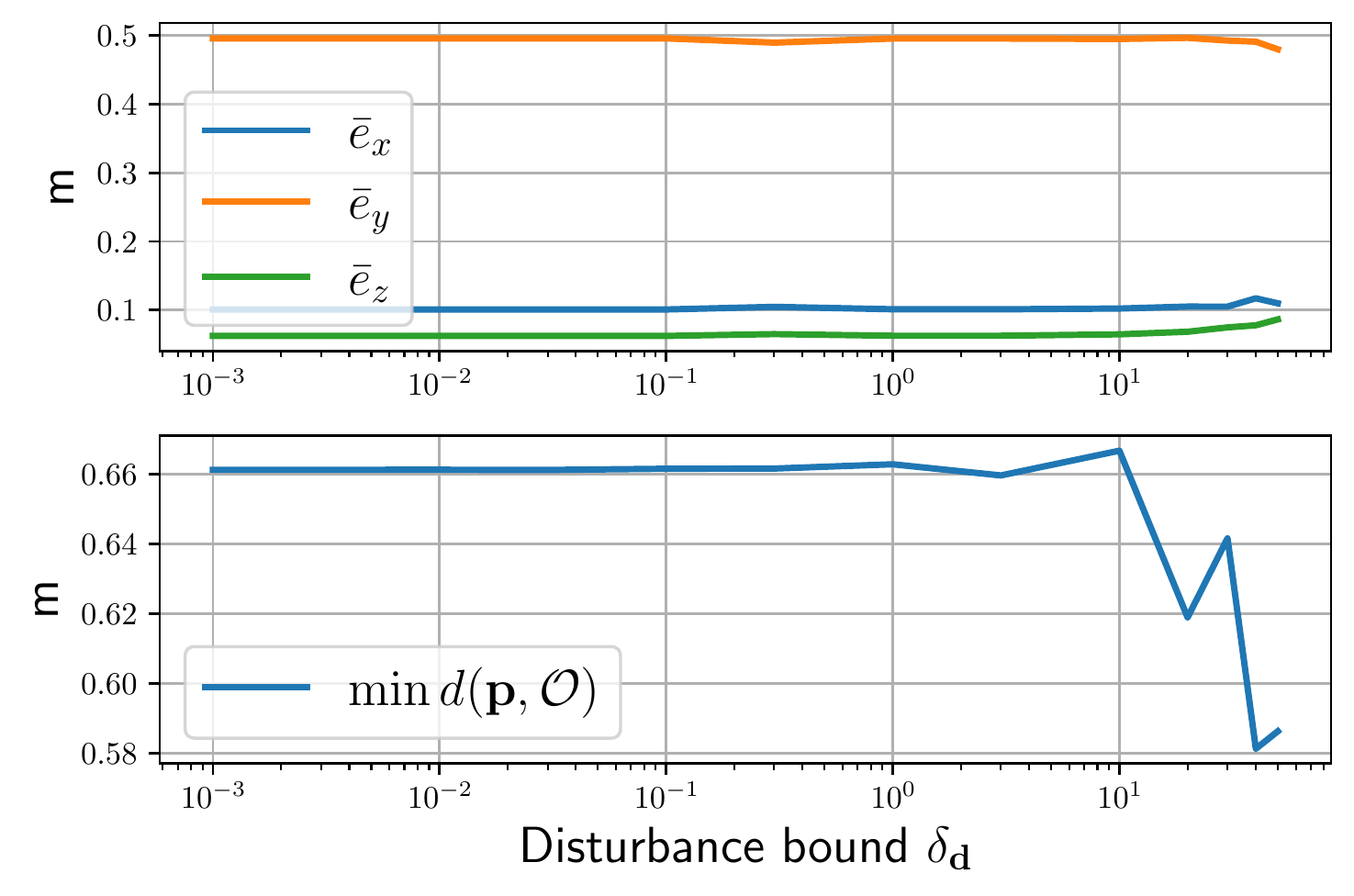}
\caption{\NEWZL{Tracking controller performance for hexarotor in warehouse simulation with the ground truth model subject to a disturbance  $\bfd$ with different magnitudes: the average position tracking error (top) and the minimum distance to obstacle (bottom).}}
\label{fig:robustness}
\end{figure}

\subsection{Evaluation of robust safe tracking control of a learned 3D fully-actuated hexarotor Hamiltonian model}
\label{subsec:eval_robustness_hexarotor}

This section evaluates our Hamiltonian dynamics learning and safe tracking control techniques using a simulated hexarotor UAV in a 3D environment. The task is to navigate from a start position to a goal in a cluttered warehouse environment without colliding with the obstacles $\calO$. The same control gains are used for this 3D navigation task as in the previous section.
A simulated LiDAR scanner provides point cloud measurements $\calP(t)$ of the surface of the unsafe set $\calO$, depending on the system pose at time $t$, with a maximum sensing range of $d_{max} = 30$ m. The distance from the governor $\bfg(t)$ to the unsafe set $\calO$ is approximated via $\bar{d}(\bfg(t); \calO) \approx \min_{\bfy \in \calP(t)} \norm{\bfg(t) - \bfy}$. \NEWZL{The reference path $\bfr$ is pre-computed using an A* planner and tracked in $\sim 80$ s.}

Fig.~\ref{fig:warehouse3d_sim} shows the behavior of the closed-loop hexarotor system in the warehouse environment. The safety margin $\Delta E(\bfx,\bfx^*)$ fluctuates during the tracking process but, as can be seen in Fig.~\ref{fig:warehouse3d_sim}, it never becomes negative. The augmented system $(\bfx, \bfg)$ is controlled adaptively, slowing down when the dynamic safety margin decreases (e.g., when the hexarotor is close to an obstacle or has large Lyapunov value $\calV$) and speeding up otherwise (e.g., when the robot is far away from the obstacles or has small total energy $\calV$). The simulations show that our control policy successfully drives the system from the start to the end of the reference path while avoiding sensed obstacles online, i.e., $d(\bfp, \calO)$ remains positive throughout the tracking task. 
Fig.~\ref{fig:warehouse3d_tracking_err} plots the tracking errors between the robot state $\bfx$ and the reference state $\bfx^*$ generated by the governor, showing that our controller tracks the path well. The tracking errors for the Euler angles and angular velocity, are close to $0$. The position and linear velocity errors in the $x$ and $z$ directions are close to zero as well while the errors in $y$ direction fluctuates around $-0.5$ m and $0.8$ m/s, respectively, and converges to $0$ at the end. This is expected as the robot stays behind the reference point, mostly in $y$ direction, and converges to the end of the path.

To evaluate the robustness of our controller, we repeat the warehouse experiment using the ground-truth dynamics, subject to a artificially generated disturbances $\bfd \in \bbR^6$ with different upper bounds $\delta_\bfd$. Each component of the disturbance $\bfd \in \bbR^6$ is uniformly generated in $\brl{-0.5\delta_\bfd, 0.5\delta_\bfd}$. If $\norm{\bfd} > \delta_\bfd$, we normalize the disturbance as $\delta_\bfd\bfd/\norm{\bfd}$. Our robust tracking controller successfully finishes the tracking task across a wide range of $\delta_\bfd$: $\brl{0.001, 0.01, 0.1, 1, 10, 20, 30}$. Larger $\delta_\bfd$ are not reported due to violation of the positiveness requirement on $\Delta E$. Fig.~\ref{fig:robustness} shows the average position errors and the minimum distance to obstacle during the tracking task versus the disturbance upper bound $\delta_\bfd$. The average position tracking errors remain similar against $\delta_\bfd$. The minimum distance to obstacle $d(\bfp, \calO)$ is always positive, illustrating the safety guarantees of our controller. This number starts decreasing when $\delta_\bfd > 1$ as larger disturbances can suddenly move the robot towards the obstacles.

\subsection{Evaluation of robust safe tracking control of a learned 3D underactuated quadrotor Hamiltonian model}
\label{subsec:quadrotor-sim}

\NEWTD{
In this section, we repeat the task of safely navigating from a start position to a goal in the same cluttered warehouse environment in Sec. \ref{sec:evaluation}.\ref{subsec:eval_robustness_hexarotor} with a quadrotor, whose model is learned from data as described in Sec. \ref{sec:evaluation}.\ref{subsec:ham_dyn_learning_evaluation}. As mentioned in Sec. \ref{sec:ham_control}, the control input in \eqref{eq:controller_IDA_PBC} would not introduce additional disturbance $\bfd_2$ when the matching condition \eqref{eq:matching_condition} is satisfied. For quadrotor, a maximal-rank left annihilator of the ground-truth $\bfB(\frakq) = \begin{bmatrix} \bf0_{4\times 2} & \bfI_{4\times 4} \end{bmatrix}^\top$ is $ \bfB^{\dagger}(\frakq) = \begin{bmatrix} \bfI_{2\times 2} & \bf0_{2\times 4} \end{bmatrix}$. The matching condition \eqref{eq:matching_condition} is satisfied if and only if the first two elements of $\bfb(\bfx,\bfx^*) = \begin{bmatrix} \bfb_\bfv^\top & \bfb_\bfomega^\top\end{bmatrix}^\top, \bfb_\bfv\in \bbR^3, \bfb_\bfomega\in \bbR^3$ in \eqref{eq:desired_control_IDA_PBC} equal to $0$, i.e. the force component $\bfb_\bfv$ coincides with the $z$-axis of the body frame. As guaranteeing this condition is hard, we instead use the force component in the world frame $\bfR \bfb_\bfv$ and a desired yaw angle $\psi^*$ to determine the desired rotation matrix, similar to \cite{lee2010geometric}. The vector $\bfR \bfb_\bfv$ is set as the $z$-axis of the desired frame, i.e., the third column $\bfb_3^*$ of the rotation matrix $\bfR^*$, to minimize the disturbance $\bfd_2$ in \eqref{eq:disturbance_d_2} from the matching condition. We calculate the second column $\bfb^*_2$ by projecting the second column of the yaw's rotation matrix $\bfb_2^\psi = [-\cos{\psi}, \sin{\psi}, 0]$ onto the plane perpendicular to $\bfb_3^*$. We use the controller \eqref{eq:controller_IDA_PBC} with $\bfR^* = [\bfb_1^* \quad \bfb_2^* \quad \bfb_3^*]$ where:
% and the torque component $\bfb_\bfomega\in \bbR^3$ can be arbitrary
%
\begin{equation}
\bfb_3^* = \frac{\bfR\bfb_\bfv}{\Vert \bfR\bfb_\bfv \Vert}, \bfb_1^* = \frac{\bfb_2^\psi \times \bfb^*_3}{\Vert  \bfb_2^\psi \times \bfb^*_3 \Vert}, \bfb_2^* = \bfb_3^* \times \bfb_1^*,
\end{equation}
%
% and $\hat{\bfomega}^* = \bfR^{*\top}\dot{\bfR}^*$ to stabilize to a desired position $\bfp^*$ and a desired heading $\psi^*$.
and $\hat{\bfomega}^* = \bfR^{*\top}\dot{\bfR}^*$ for our tracking task.

\begin{figure}[t]
\centering
\begin{subfigure}
        \centering
         \includegraphics[width=0.93\linewidth]{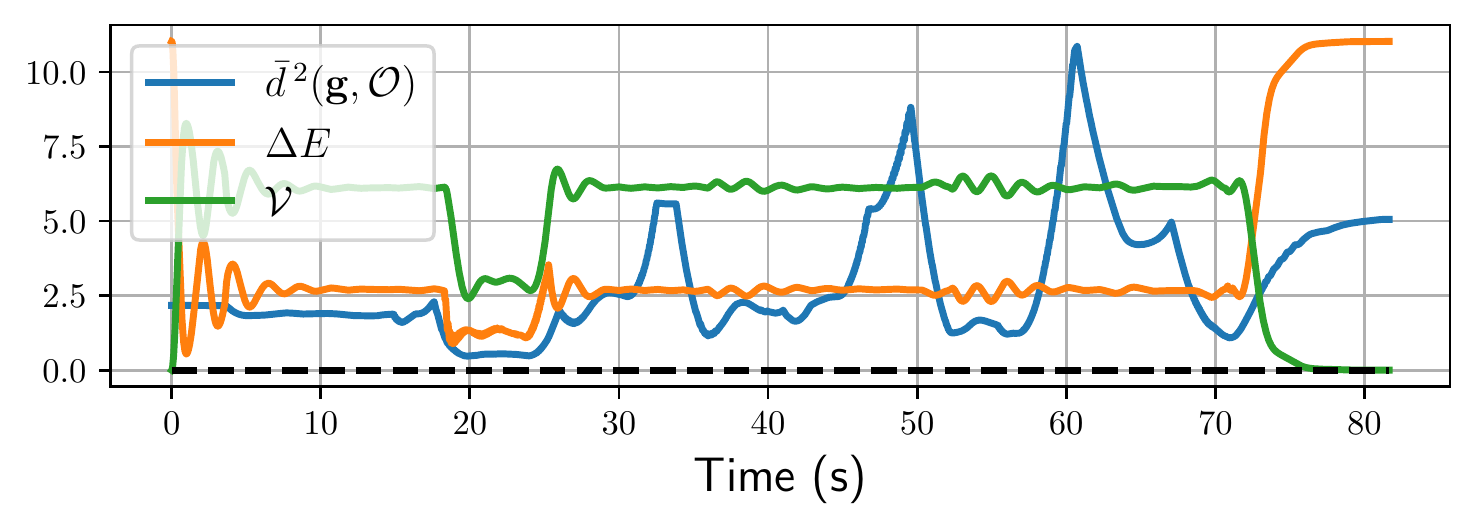}
        %\caption{Loss (log scale)}
        % \label{fig:hexarotor}
\end{subfigure}%
\hfill
\begin{subfigure}
        \centering
        \includegraphics[width=0.95\linewidth]{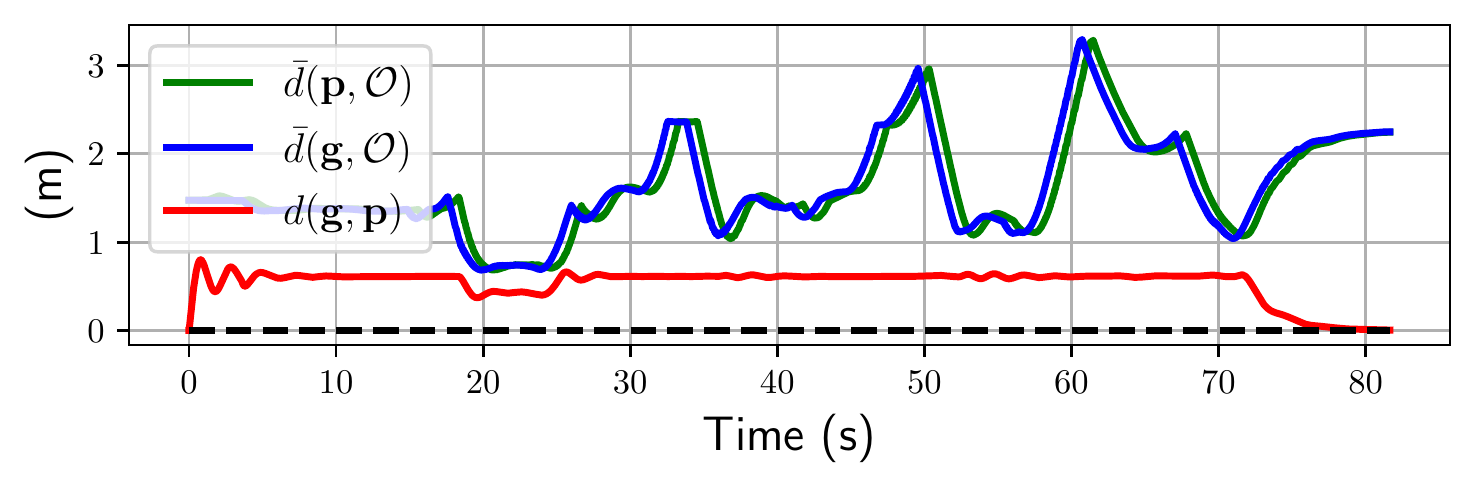}
        %\caption{Loss (log scale)}
        % \label{fig:hexarotor}
\end{subfigure}%
\caption{\NEWTD{Safe navigation of quadrotor system (learned model) in a warehouse: the dynamic safety margin $\Delta E$, the Lyapunov function $\calV$ (top) and the distance to the obstacles $\bar{d}(\bfp(t), \calO)$ (bottom), indicating that the safety constraints are never violated.}}
% The robot is in red color and scaled up $5$ times visibility. 
\label{fig:warehouse3d_sim_quad}
\end{figure}

\begin{figure}[t]
\centering
        \centering
        \includegraphics[width=0.95\linewidth]{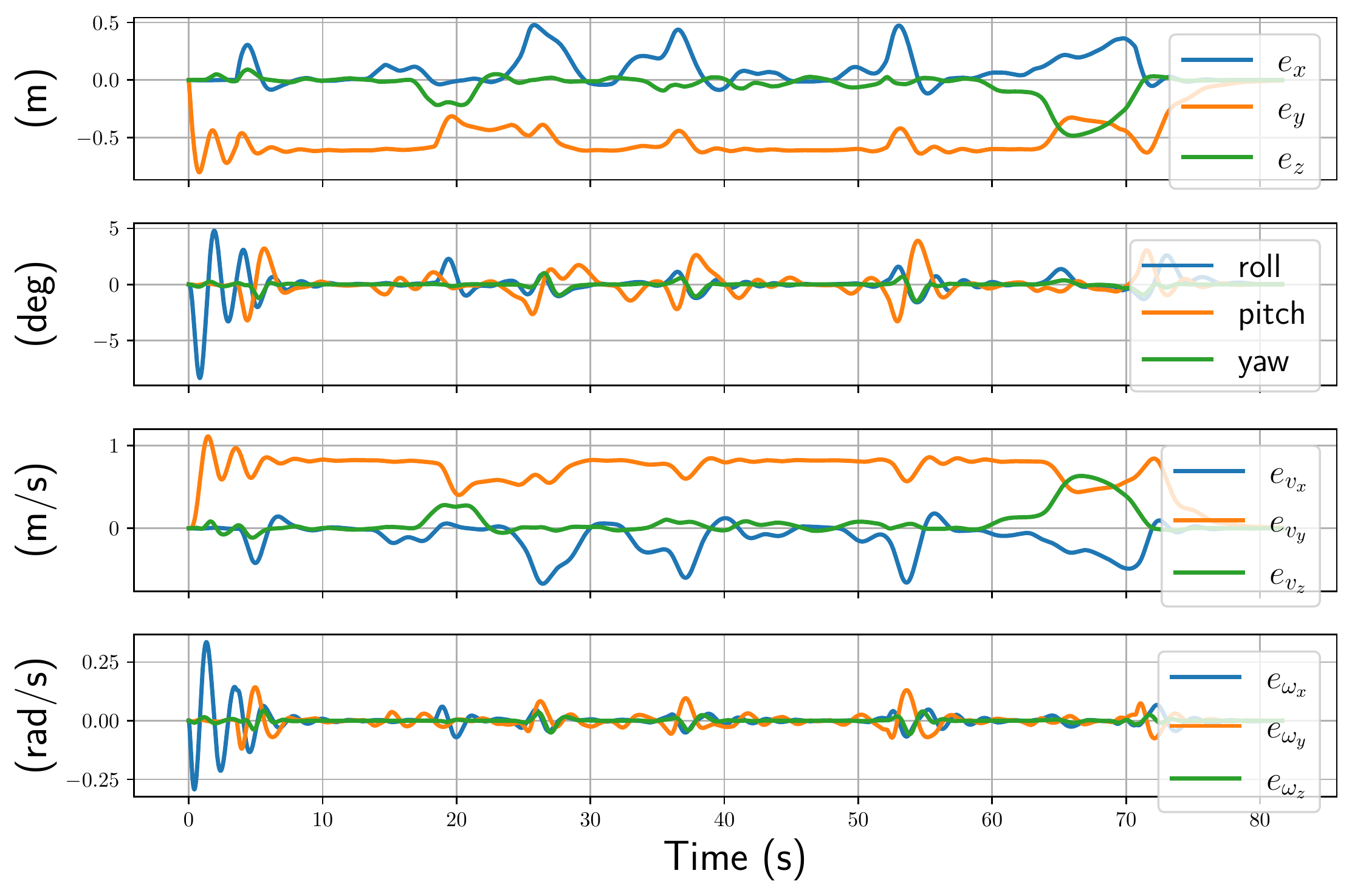}
\caption{\Revised{Tracking error of a quadrotor system (top to bottom): position, velocity, angle and angular velocity errors.}}
\label{fig:warehouse3d_tracking_err_quad}
\end{figure}

We successfully finish the task with the quadrotor while remaining safe for the entire experiment, as shown in Fig.~\ref{fig:warehouse3d_sim_quad}, with similar behavior of the closed-loop quadrotor system in terms of the safety margin, Lyapunov function and distance to obstacle compared to Sec. \ref{sec:evaluation}.\ref{subsec:eval_robustness_hexarotor}. However, the orientation tracking error of quadrotor (Fig.~\ref{fig:warehouse3d_tracking_err_quad}) is larger than that of hexarotor, as expected since the quadrotor is underactuated. }

\Revised{
\subsection{Evaluation of our approach against unmodeled noise}
In this section, we verify the robustness of our controller against unmodeled noise on a simulated hexarotor by injecting high frequency noise (e.g., propeller vibration) into control inputs and simulating state estimation errors. In particular, a $4.8$ kHz 6D sinusoidal signal with amplitude $5$ is generated for high frequency noise. 
Meanwhile, state estimation errors in positions, Euler angles, linear and angular velocity are randomly generated with zero mean and standard deviation, chosen from \cite{hoang2021noise} (position: $0.01$ m, Euler angle: $0.01$ degree, linear velocity: $0.02$ m/s and angular velocity: $0.14$ degree/s). We consider the task of stabilizing to a static governor, i.e. the governor is not moving, with the learned dynamics model: without any unmodeled noise (base), with high-frequency noise, and with state estimation error. Fig. \ref{fig:robust_hex} plots the Lyapunov function $\calV$ and the safety margin $\Delta E$ over time. Our controller is not affected significantly from the high-frequency noise, potentially because the noise's effect is canceled out due to its zero mean. Our controller is safe against the state estimation errors from \cite{hoang2021noise}, i.e. $\Delta E > 0$ over time, but fails to remains safe, i.e. $\Delta E < 0$ at some times, if we triple the noise deviation.
}

\label{subsec:unmodeled_noise}
\begin{figure}[t]
\centering
        \centering
        \includegraphics[width=0.95\linewidth]{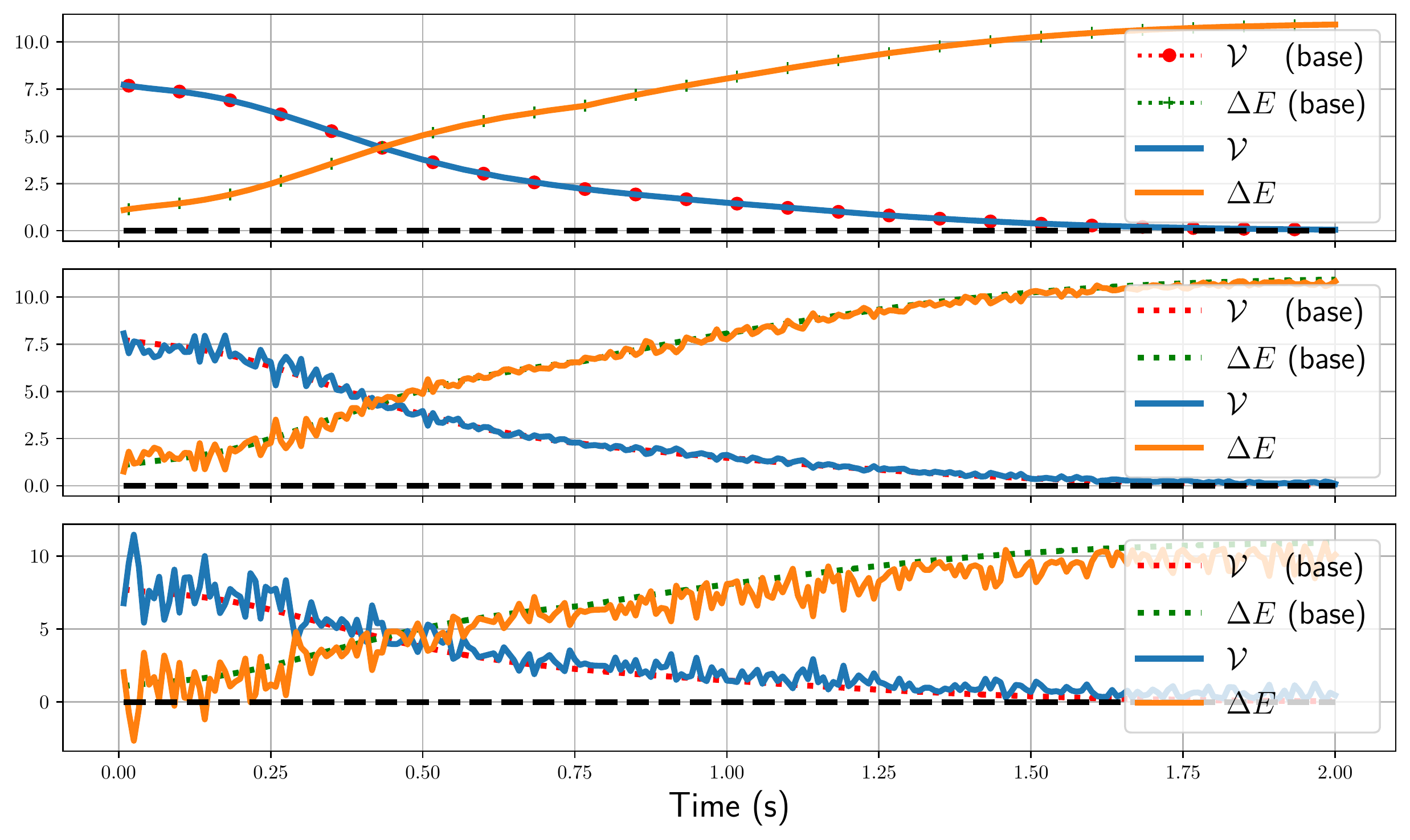}
\caption{\Revised{The Lyapunov function $\calV$ and safety margin $\Delta E$ in the presence of: high-frequency noise (top), state estimation error with standard deviation from \cite{hoang2021noise} (middle) and 3x larger (bottom), respectively).}}
\label{fig:robust_hex}
\end{figure}

\section{Conclusion}
This paper developed a tracking controller for Hamiltonian systems with learned dynamics. We employed a neural ODE network to learn translation-invariant Hamiltonian dynamics on the $SE(3)$ manifold from trajectory data. The~Hamiltonian~of the learned system was used to synthesize an energy-shaping controller and quantify its robustness to modeling errors. A reference governor was employed to guide the system along a desired reference path using the trade-off between system energy, disturbance bounds, and distance to obstacles to guarantee safe tracking. Our results demonstrate that encoding $SE(3)$ kinematics and Hamiltonian dynamics in the model learning process achieves more accurate prediction than Gaussian Process regression. Utilizing the system energy in the control design offers a general approach for guaranteeing robustness and safety for physical systems and generalizes well to desired trajectories which are significantly different from the training data. Future work will focus on disturbance compensation and real experiments.
% using physical hardware.
%in the control design

%This paper developed a safe tracking controller that enables safe autonomous navigation for $SE(3)$ Hamiltonian systems. \NEWTD{Given only a training set of system state-control trajectories, we estimate the system dynamics and synthesize a controller that avoids obstacles while being robust to model uncertainty of the estimated system. Our method was demonstrated on a simulated hexarotor aerial robot navigating in complex 3D environments. Future work will focus on compensating} for model uncertainty and external disturbances in the design and deploying it on a hardware platform.

%proposes a safe output tracking algorithm for learned $SE(3)$ Hamiltonian dynamics. To accurately capture system behavior, we learn system model use a Hamiltonian-based neural ODE framework. Governor-parameterized barrier function is developed by exploiting structured of energy-like Lyapunov function and local geometric information observed at runtime. A reference governor is designed to adaptively regulate the set-point of stabilized system  so that the system moves along the path safely without violating sensed constraints. We demonstrate the effectiveness of our method in a safe navigation task for a fully actuated hexarotor robot. In future work, we plan to extend this work to quantify model uncertainty and take external disturbance into account. We aim to develop robust safe output controller and verify it on hardware platform.

\section*{ACKNOWLEDGMENT}
%We  thank  A,  B,  and  C.  This  work  was  supported  in  part  by  a grant from XYZ.
We gratefully acknowledge support from NSF RI IIS-2007141 and NSF CCF-2112665 (TILOS).

\appendices
%\appendix
% \input{tex/appendix.tex}
\section{Proof of Thm.~\ref{thm:robust_IDA_PBC}}
\label{app:robust_IDA_PBC_proof}

We do not write function arguments explicitly to simplify the notation. We also introduce the following notation for the components of $\bfe$ and $\frakp_e$ in \eqref{eq:error-dynamics}:
\begin{equation}\label{eq:error-components}
\begin{aligned}
\bfe &= \begin{bmatrix} \bfe_{\bfp} \\ \bfe_{\bfR}\end{bmatrix} = \begin{bmatrix} k_\bfp \bfR^\top\bfp_e \\ 
\frac{1}{2}k_{\bfR}\prl{\bfR_e - \bfR_e^\top}^{\vee}\end{bmatrix},\\
\frakp_e &= \bfM \begin{bmatrix} \bfe_{\bfv} \\ \bfe_{\bfomega} \end{bmatrix}= \bfM \begin{bmatrix} \bfv - \bfR_e^\top \bfv^* \\ \bfomega - \bfR_e^\top \bfomega^* \end{bmatrix}.
\end{aligned}
\end{equation}
Consider the Lyapunov function candidate in \eqref{eq:ISS_lyap}:
\begin{equation}
    \calV =  \frac{1}{2}\frakp_e^\top \bfM^{-1} \frakp_e + \calU_d + \rho \frac{d}{dt} \calU_d,
\end{equation}
where $\calU_d = \frac{k_{\bfp}}{2} \|\bfp_e\|^2 + \frac{k_{\bfR}}{2} \tr(\bfI-\bfR_e)$. In the domain $\calA$, we have \cite[Prop.~1]{taeyoung2013robust}:
\begin{equation} \label{eq:rot_err_bounds}
k_\bfR^{-2} \|\bfe_{\bfR}\|_2^2 \leq \tr(\bfI - \bfR_e) \leq \frac{4 k_\bfR^{-2}}{4-\alpha} \|\bfe_{\bfR}\|_2^2.
\end{equation}
By the chain rule and \eqref{eq:error-dynamics}, we have: %the time derivative of the desired potential energy satisfies:
\begin{equation}\label{eq:potential-dot}
\frac{d}{dt} \calU_d = \nabla_{\frakq_e}\!\calH_d^\top \dot{\frakq}_e = \nabla_{\frakq_e}\!\calH_d^\top\bfJ \bfM^{-1} \frakp_e = \bfe^\top\! \bfM^{-1} \frakp_e
\end{equation}
Using \eqref{eq:rot_err_bounds} and \eqref{eq:potential-dot}, together with the Cauchy-Schwartz inequality and the sub-multiplicative property of the Euclidean norm, the Lyapunov function candidate is bounded as:
\begin{align*}
	\calV &\leq \frac{\lambda_2}{2}  \norm{\frakp_e}^2 + \frac{k_\bfp^{-1}}{2}  \norm{\bfe_\bfp}^2 + \frac{2 k_\bfR^{-1}}{4 - \alpha} \norm{\bfe_\bfR}^2 + \rho \lambda_2 \norm{\bfe} \norm{\frakp_e}. \\
	\calV &\geq \frac{\lambda_1}{2}  \norm{\frakp_e}^2 + \frac{k_\bfp^{-1}}{2}  \norm{\bfe_\bfp}^2 + \frac{k_\bfR^{-1}}{2} \norm{\bfe_\bfR}^2 - \rho \lambda_2 \norm{\bfe} \norm{\frakp_e}.
\end{align*}
The bounds can be stated compactly in quadratic form using $\bfz = [\|\bfe\|\;\|\frakp_e\|]^\top$ and $\bfQ_1$, $\bfQ_2$ in \eqref{eq:Q-matrices}:
\begin{equation} \label{eq:bounds_on_lyap}
\frac{1}{2}\bfz^\top \bfQ_1 \bfz \leq  \calV \leq \frac{1}{2}\bfz^\top \bfQ_2 \bfz.
\end{equation}

The time derivative of the Lyapunov candidate satisfies:
\begin{equation*}
\frac{d}{dt}\calV = \frakp_e^\top \bfM^{-1} \dot{\frakp}_e +  \bfe^\top \bfM^{-1} \frakp_e + \rho \bfe^\top \bfM^{-1} \dot{\frakp}_e + \rho \dot{\bfe}^\top \bfM^{-1} \frakp_e.
\end{equation*}
The term $\dot{\frakp}_e$ is from \eqref{eq:error-dynamics}. The term $\dot{\bfe}$ is obtained from \eqref{eq:error-components}: % by differentiating \eqref{eq:error-components}:
\begin{equation}
\begin{aligned} \label{eq:edot}
\dot{\bfe} &=  \begin{bmatrix}
  \dot{\bfe}_{\bfp} \\ \dot{\bfe}_{\bfR}
  \end{bmatrix} = \begin{bmatrix} -\hat{\bfomega} \bfe_{\bfp} + k_{\bfp}\bfe_{\bfv}\\
k_{\bfR}\bfE_\bfR \bfe_{\bfomega}
\end{bmatrix}\\
&= -\begin{bmatrix} \hat{\bfomega} & \mathbf{0}\\ \mathbf{0}& \mathbf{0} \end{bmatrix} \bfe + \begin{bmatrix} k_{\bfp} \bfI & \mathbf{0}\\\mathbf{0} & k_{\bfR}\bfE_\bfR \end{bmatrix} \bfM^{-1} \frakp_e,
\end{aligned}
\end{equation}
where $\bfE_\bfR = \frac{1}{2}\brl{\tr(\bfR_e^\top) \bfI - \bfR_e^\top}$ satisfies $\|\bfE_\bfR\|_2 \leq 1$ \cite[Prop.~1]{lee2010geometric}. Hence, we have:
\begin{equation*}
\begin{aligned}
&\frac{d}{dt}\calV = -\frakp_e^\top\bfM^{-1}\bfK_{\bfd} \bfM^{-1} \frakp_e  + \frakp_e^\top \bfM^{-1} \bfd\\
&\quad- \rho\bfe^\top \bfM^{-1}\bfe  -\rho\bfe^\top \bfM^{-1}\bfK_{\bfd} \bfM^{-1} \frakp_e + \rho \bfe^\top \bfM^{-1} \bfd\\
&\quad- \rho  \frakp_e^\top \bfM^{-1} \begin{bmatrix} \hat{\bfomega} & \mathbf{0}\\ \mathbf{0}& \mathbf{0} \end{bmatrix} \bfe + \rho \frakp_e^\top \bfM^{-1} \begin{bmatrix} k_{\bfp} \bfI & \mathbf{0}\\\mathbf{0} & k_{\bfR}\bfE_\bfR \end{bmatrix} \bfM^{-1} \frakp_e.
\end{aligned}
\end{equation*}
To find an upper bound on $\frac{d}{dt}\calV$, we need a few intermediate steps. First, on the domain $\calA$, we have:
\begin{equation}\label{eq:omega_ub}
\left\lVert \begin{bmatrix} \hat{\bfomega} & \mathbf{0}\\ \mathbf{0}& \mathbf{0} \end{bmatrix}\right\rVert_2 = \norm{\hat{\bfomega}}_2 = \norm{\bfomega} \leq \| \bfM^{-1} \frakp\| \leq \lambda_2 \beta.
\end{equation}
Second, an upper bound on 
\begin{equation}
\xi_1 \coloneqq - \lambda_{\min}(\bfK_{\bfd}) \norm{\bfM^{-1} \frakp_e}^2  + \norm{\bfM^{-1} \frakp_e} \norm{\bfd} 
\end{equation}
can be found using Young's inequality \cite{ryalat2018robust}:
\begin{equation}
-\epsilon \norm{\bfa}^2  + \eta \norm{\bfa} \norm{\bfb} \leq -\frac{\epsilon}{2} \norm{\bfa}^2 + \frac{\eta^2}{2 \epsilon} \norm{\bfb}^2
\end{equation}
with $\epsilon = \lambda_{\min}(\bfK_d)$, $\eta=1$, $\bfa = \bfM^{-1} \frakp_e$, $\bfb = \bfd$:
\begin{equation}  \label{eq:xi_1ub}
\begin{aligned}
\xi_1 &\leq -\frac{\lambda_{\min}(\bfK_d)}{2} \norm{\bfM^{-1} \frakp_e}^2 
+ \frac{1}{2 \lambda_{\min}(\bfK_d)} \norm{\bfd}^2.
%& \leq -\frac{1}{2} \lambda_{\min}(\bfK_\bfd) \lambda_{\min}(\bfM^{-2})\norm{\frakp_e}^2 + 
\end{aligned}
\end{equation}
Similarly, we have:
\begin{equation}\label{eq:xi_2ub}
\xi_2 \coloneqq - \lambda_1 \norm{\bfe}^2 +  \lambda_2 \norm{\bfe} \norm{\bfd} \leq - \frac{\lambda_1}{2}  \norm{\bfe}^2+ \frac{\lambda_2^2}{2 \lambda_1} \norm{\bfd}^2.
\end{equation}
Using \eqref{eq:omega_ub}, \eqref{eq:xi_1ub}, and \eqref{eq:xi_2ub}, $\frac{d}{dt}\calV$ is bounded by:
\begin{equation} \label{eq:bound_dotV}
\frac{d}{dt}\calV \leq - \frac{1}{2}\bfz^\top \bfQ_3 \bfz + k_\gamma \norm{\bfd}^2,
\end{equation}
where the elements of $\bfQ_3$ are provided in \eqref{eq:Q3-elements} and $k_\gamma = \frac{1}{2 \lambda_{\min}(\bfK_\bfd)} +   \frac{\rho \lambda_2^2}{2 \lambda_1}$. Since the parameters $\rho$, $k_{\bfp}$, $k_{\bfR}$, $\bfK_{\bfd}$ can be chosen arbitrarily, there exists some choice that ensures the matrices $\bfQ_1$, $\bfQ_2$, $\bfQ_3$ are positive definite as shown below. The inequalities in \eqref{eq:V_bounds} are obtained from \eqref{eq:bounds_on_lyap} and \eqref{eq:bound_dotV} using the Rayleigh-Ritz inequality.

\paragraph*{Region of Attraction}
We use the invariant sets $\calS_{c} = \crl{\bfx \mid \calV(\bfx, \bfx^*) \leq c}$ induced by the Lyapunov function to restrict the error dynamics inside the domain $\calA$ and estimate the region of attraction.% of the controller.

We determine $c_1 \geq 0$ such that $\dot{\calV}$ is positive on $\calS_{c_1}$. From \eqref{eq:V_bounds}, $\dot{\calV}$ is positive when $k_\gamma \delta_{\bfd}^2-k_3 \norm{\bfz}^2 \geq 0$, which happens when $\frac{\calV}{k_2} \leq \frac{k_\gamma}{k_3} \delta_{\bfd}^2$. Hence, with $c_1 = k_2 k_\gamma \delta_{\bfd}^2 / k_3$, we have $\dot{\calV} \geq 0$ on $\calS_{c_1}$. Then, we determine $c_2 \geq 0$ such that $\calS_{c_2} \subseteq \calA$. From \eqref{eq:rot_err_bounds} and \eqref{eq:bounds_on_lyap}, we have:
\begin{equation}
\frac{4-\alpha}{4} k_\bfR^2 \tr(\bfI - \bfR_e) \leq \norm{\bfe_{\bfR}}^2 \leq \norm{\bfz}^2 \leq \frac{\calV}{k_1}.
\end{equation}
Hence, if $\calV \leq \frac{1}{4} k_1 k_\bfR^2 \alpha \prl{4- \alpha}$, then $\tr(\bfI - \bfR_e) \leq \alpha$. Similarly, if $\calV \leq k_1 \beta^2$, then $ \norm{\frakp_e}^2 \leq \norm{\bfz}^2 \leq \frac{\calV}{k_1} \leq \beta^2$.
%
% \begin{equation}
%     \norm{\frakp_e}^2 \leq \norm{\bfz}^2 \leq \frac{\calV}{k_1} \leq \beta^2.
% \end{equation}
%
Hence, to ensure that $\calS_{c_2} \subseteq \calA$, we define $c_2$ as:
\begin{equation}
    c_2 \coloneqq k_1 \min \crl{k_\bfR^2 \alpha (4- \alpha)/4,  \beta^2}.
\end{equation}

To ensure that $c_1 < c_2$, the disturbance bound $\delta_\bfd$ must satisfy $\delta_\bfd < \sqrt{\frac{c_2k_3}{k_2 k_\gamma}}$. Then, any closed-loop system trajectory that starts in $\calS_{c_2}$ converges exponentially to $\calS_{c_1}$ and remains within it. Recall that $\bfe_\bfp = k_\bfp \bfR^\top \bfp_e$ and from \eqref{eq:bounds_on_lyap}:
\begin{equation}
k_{\bfp}^2 \norm{\bfp_e}^2 = \norm{\bfe_\bfp}^2 \leq \norm{\bfe}^2 \leq \norm{\bfz}^2 \leq \frac{\calV}{k_1}.
\end{equation}
Hence, on $\calS_{c_1}$, $\norm{\bfp_e}^2 \leq c_1 / (k_1 k_{\bfp}^2)$ and the uniform ultimate bound on the position error trajectory in \eqref{eq:uub} is satisfied.

\paragraph*{Design Parameter Choice}
%\label{app:params}
We propose a systematic way to select parameters $\rho$, $k_\bfp$, $k_\bfR$, $\bfK_{\bfd}$, ensuring that the matrices $\bfQ_1$, $\bfQ_2$, $\bfQ_3$ are positive definite. Suppose $k_\bfp < \frac{4 - \alpha}{4} k_\bfR$ and $\bfK_\bfd = \gamma_d \bfI$ for some  $\gamma_d > 0$, then we have
\begin{equation}
\begin{aligned}
\bfQ_1 &= \begin{bmatrix} k_{\bfR}^{-1} & - \rho \lambda_2\\
- \rho \lambda_2 & \lambda_1 \end{bmatrix}
&\bfQ_2 &= \begin{bmatrix} k_\bfp^{-1} & \rho \lambda_2\\
\rho \lambda_2 & \lambda_2 \end{bmatrix}
\\
q_2 &= -\rho \lambda_2^2 \prl{\gamma_d + \beta} 
&q_3 &= \gamma_d \lambda_1^2  - 2\rho \lambda_2^2 k_\bfR .
\end{aligned}
\end{equation}
To guarantee the positive definiteness of $\bfQ_1$, $\bfQ_2$, $\bfQ_3$, the following requirements must be satisfied:
\begin{equation*} 
\begin{aligned}
\frac{\lambda_1}{k_\bfR} - \rho^2 \lambda_2^2 > 0,\; 
\frac{\lambda_2}{k_\bfp} - \rho^2 \lambda_2^2 > 0,\;
\gamma_d \lambda_1^2  - 2\rho \lambda_2^2 k_\bfR > 0 \\
\rho \lambda_1 \prl{\gamma_d \lambda_1^2  - 2\rho \lambda_2^2 k_\bfR} -  \rho^2 \lambda_2^4 \prl{\gamma_d + \beta}^2> 0.
\end{aligned}
\end{equation*}

All these constraints put upper bounds on $\rho$: 
\begin{equation}
\rho \leq \min \crl{\sqrt{\frac{\lambda_1}{k_\bfR \lambda_2^2}},\; \sqrt{\frac{1}{k_\bfp \lambda_2}},\; \frac{\gamma_d \lambda_1^2}{2 k_\bfR \lambda_2^2}, \; \bar{\rho}_{\bfQ_3}}, %\coloneqq \bar{\rho}. 
\end{equation}
where $\bar{\rho}_{\bfQ_3} = \frac{\gamma_d\lambda_1^3}{\lambda_2^2 \brl{2 \lambda_1 k_\bfR + \lambda_2^2 (\gamma_d + \beta)^2}}$.

\bibliographystyle{cls/IEEEtran}
\bibliography{bib/thai_zhl.bib}

% (Student Member, IEEE) 
\begin{IEEEbiography}[{\includegraphics[width=2.5cm,height=3.2cm,clip,keepaspectratio]{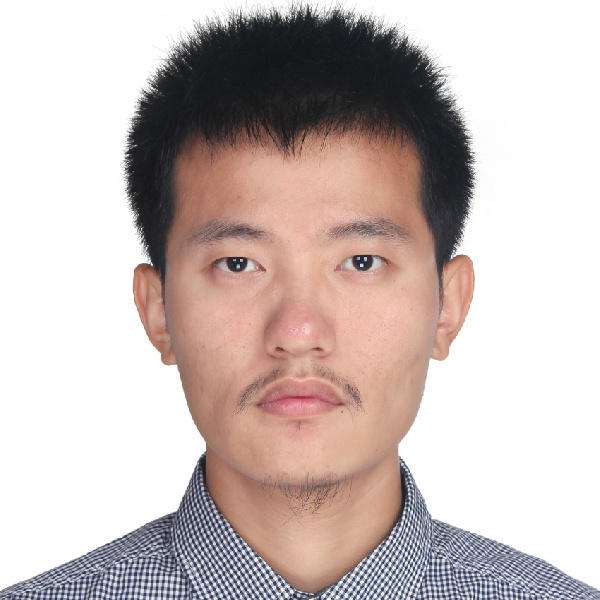}}]{Zhichao Li}~%
is a PhD student in Electrical and Computer Engineering at the University of California San Diego, La Jolla, CA, USA. He received a B.Eng. degree from the School of Astronautics, Northwestern Polytechnical University, Xi'an, Shaanxi, China in 2013 and an M.S. degree in Electrical Engineering from the School of Electrical, Computer and Energy Engineering, Arizona State University, Tempe, AZ, in 2016. His research interests include control theory and motion planning with applications to mobile robots. 
\end{IEEEbiography}

\begin{IEEEbiography}[{\includegraphics[width=2.5cm,height=3.2cm,clip,keepaspectratio]{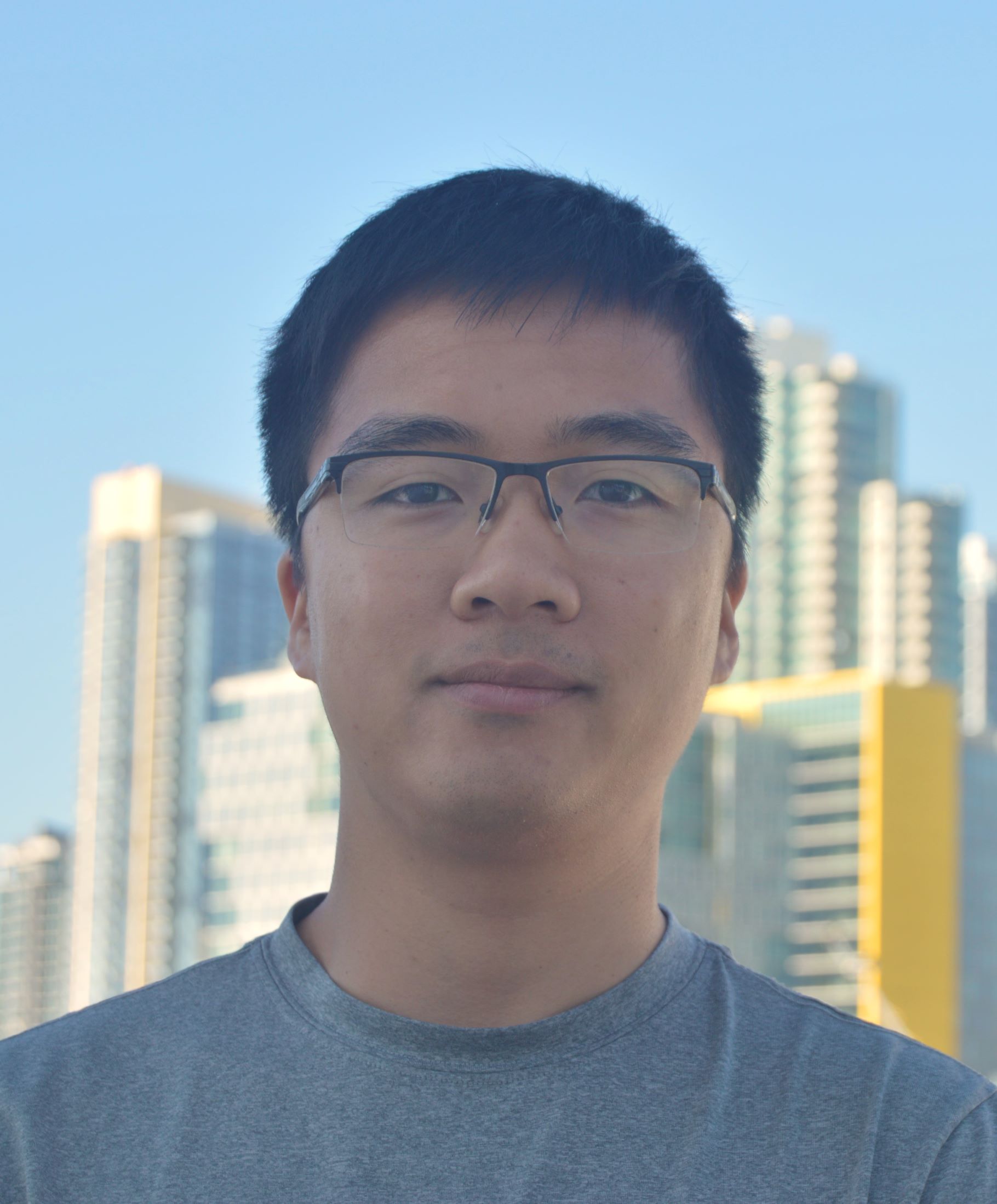}}]{Thai Duong}~%
	is a PhD student in Electrical and Computer Engineering at the University of California San Diego, CA, USA. He received a B.S. degree in Electronics and Telecommunications from Hanoi University of Science and Technology, Hanoi, Vietnam in 2011 and an M.S. degree in Electrical and Computer Engineering from Oregon State University, Corvallis, OR, in 2013. His research interests include machine learning with applications to robotics, mapping and active exploration using mobile robots, robot dynamics learning, and decision making under uncertainty.
\end{IEEEbiography}

% (S'07-M'16) 
\begin{IEEEbiography}[{\includegraphics[width=2.5cm,height=3.2cm,clip,keepaspectratio]{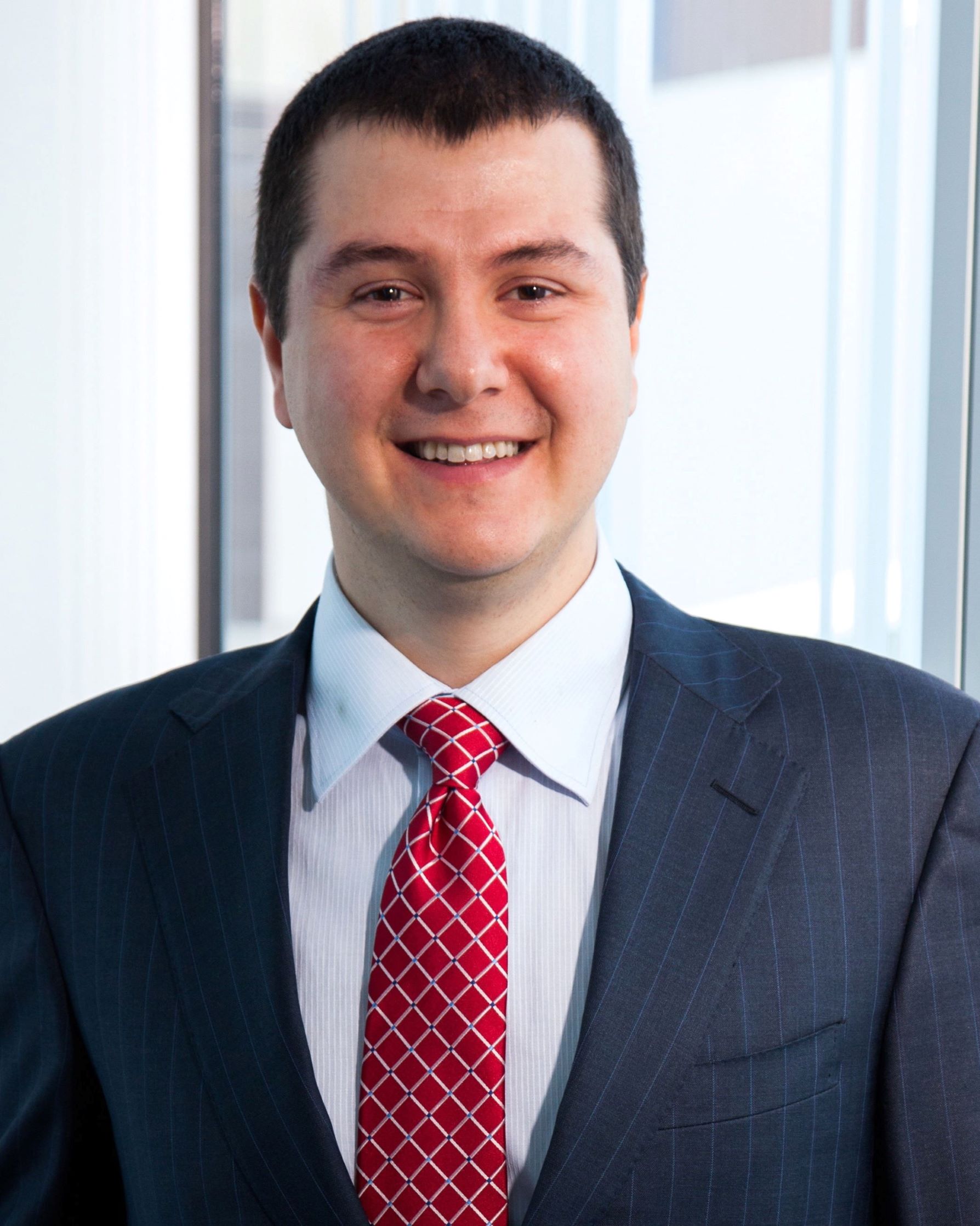}}]{Nikolay Atanasov}~%
	is an Assistant Professor of Electrical and Computer Engineering at the University of California San Diego, La Jolla, CA, USA. He obtained a B.S. degree in Electrical Engineering from Trinity College, Hartford, CT, USA in 2008 and M.S. and Ph.D. degrees in Electrical and Systems Engineering from the University of Pennsylvania, Philadelphia, PA, USA in 2012 and 2015, respectively. His research focuses on robotics, control theory, and machine learning, applied to active perception problems for ground and aerial robots. He works on probabilistic models that unify geometry and semantics in simultaneous localization and mapping (SLAM), as well as optimal control and reinforcement learning techniques for minimizing uncertainty in these models. Dr. Atanasov's work has been recognized by the Joseph and Rosaline Wolf award for the best Ph.D. dissertation in Electrical and Systems Engineering at the University of Pennsylvania in 2015, the best conference paper award at the 2017 IEEE International Conference on Robotics and Automation (ICRA), and a 2021 NSF CAREER award. 
\end{IEEEbiography}

\end{document}